\newtheorem{thm}{Theorem}
\newtheorem{lem}[thm]{Lemma}
\def \Rdz {\mathbb{R}^{{d}_{\mathbf{y}}}}
\def \dz {{d}_{\mathbf{y}}}
\def \bz {\mathbf{y}}
\def \dtheta {{d}_{\mathbf{\boldsymbol{\theta}}}}
\def \btheta {\boldsymbol{\theta}}
\def \bxi {\boldsymbol{\xi}}
\def \bm {\mathbf{m}}
\newcommand{\norm}[1]{\left\lVert#1\right\rVert}
\DeclareMathOperator{\Tr}{Tr}
\title{Gaussian processes for Bayesian inverse problems associated with linear partial differential equations}
\author{Tianming Bai$^{1,2}$, Aretha L. Teckentrup$^{1,2}$, Konstantinos C. Zygalakis$^{1,2}$}
\date{\today}
\begin{document}

\maketitle

\begin{abstract}

This work is concerned with the use of Gaussian surrogate models for Bayesian inverse problems associated with linear partial differential equations. A particular focus is on the regime where only a small amount of training data is available. In this regime the type of Gaussian prior used is of critical importance with respect to how well the surrogate model will perform in terms of Bayesian inversion. We  extend the framework of Raissi et. al. (2017) to construct PDE-informed Gaussian priors that we then use to construct different approximate posteriors.  A number of different numerical experiments illustrate the superiority of the PDE-informed Gaussian priors over more traditional priors. 

%When solving inverse problems in modern applications, the Bayesian posterior distribution is often intractable. A typical example is given by the inference of diffusion coefficients in partial differential equation models associated with subsurface fluid flow. To extract information from the posterior using sampling algorithms, a surrogate model often used to emulate the likelihood and avoid repeated evaluation of the expensive mathematical model. This paper focuses on Gaussian process regression as surrogate model, and in particular on the case where only a small amount of training data is available, for example due to the large computational cost associated to generating the training data. To improve the accuracy of the Gaussian process surrogate model, 
%we devise a PDE-informed prior distribution. 
%We further show that the gradient of the Gaussian process surrogate models can be computed explicitly, enabling the usage of gradient-based sampling algorithms. 
%In our numerical experiments, we compare the accuracy and computational timings between different computational strategies with several different examples. 

\end{abstract}

\footnotetext[1]{School of Mathematics, University of Edinburgh, Edinburgh, Scotland, UK}
\footnotetext[2]{Maxwell Institute for Mathematical Sciences, Bayes Centre, 47 Potterrow, Edinburgh, Scotland, UK.}

\section{Introduction}

%\subsection{Motivation}
Combining complex mathematical models with observational data is an extremely challenging yet ubiquitous problem in the field of modern applied mathematics and data science. Inverse problems, where one is interested in learning inputs to a mathematical model such as physical parameters and initial conditions given partial and noisy observation of model outputs, are hence of frequent interest. Adopting a Bayesian approach \cite{ks05,S10}, we incorporate our prior knowledge on the inputs into a probability distribution, \emph{the prior distribution}, and obtain a more accurate representation of the model inputs in the \emph{posterior distribution}, which results from conditioning the prior distribution on the observed data.  

The posterior distribution contains all the necessary information about the characteristics of our inputs. However, in most cases the posterior is unfortunately intractable and one needs to resort to sampling methods 
such as Markov chain Monte Carlo (MCMC) \cite{RC04,BGJ11} to explore it.  A major challenge in the application of MCMC methods to problems of practical interest is the large computational cost associated with numerically solving the mathematical model for a given set of the input parameters. Since the generation of each sample by an MCMC method
requires a solve of the governing equations, and often millions of 
samples are required in practical applications, this process
can quickly become very costly.

One way to deal with the challenge of full Bayesian inference for 
complex models is the use of surrogate models, also known as emulators, meta-models or reduced order models. In particular, instead 
of using the complex (and computationally expensive) model, one uses a 
simpler and computationally more efficient model to approximate the 
solution of the governing equations, which in turn is used to approximate 
the data likelihood. Within the statistics literature, the most 
commonly used type of surrogate model is a Gaussian process emulator
\cite{RW06,S99,SWM89,KH00,H06,HKC04}, but other types of surrogate models can also be used including projection-based methods \cite{BWG08},
%\cite{BWG08,AKK06}, 
generalised Polynomial Chaos \cite{XK03,MNR07},  sparse grid 
collocation \cite{BNT07,MX09}   and  adaptive subspace methods 
\cite{C15,CDW14}.

In this paper, we focus on the use of Gaussian process surrogate models for approximating the posterior distribution in inverse problems, where the forward model relates to the solution of a linear partial differential equation (PDE). In particular,  we consider two different ways of using the surrogate model, by emulating either the parameter-to-observation map or the
negative log-likelihood. Convergence properties of 
the corresponding posterior approximations, as the number of design 
points $N$ used to construct the surrogate model goes to infinity, have recently been studied in 
\cite{ST16,t20,helin2023introduction}. These results put the methodology on a firm theoretical footing, and show that the error in the approximate posterior distribution can be bounded by the corresponding error in the surrogate model. Furthermore, the error in the approximate posteriors tends to zero as $N$ tends to infinity.
However, when the forward model of interest is given by a complex model such as a PDE,  one normally operates in a regime where only a very limited number of design points $N$ can be used due to constraints on computational cost. 
%The case of limited computational budget for training our Gaussian surrogate model and its use  for performing Bayesian inference for an inverse problem is the focus of this paper. 
%This setting will be our focus in this paper. 
This setting is less understood and is the setting of main interest in this paper.

With a small number of design points, different modelling choices made in the derivation of the approximate posterior can have a large effect on its accuracy. In particular, the choice of Gaussian prior distribution in the emulator is crucial, as it heavily influences its accuracy. Intuitively, we want to make the prior distribution as informative as possible, by incorporating known information about the underlying forward model. For example, an informed prior specially tailored  to  solving the forward problem in  linear PDEs can be found in \cite{RRPK17}. For incorporating more general constraints, we refer the reader to the recent review \cite{sgfsj21}. Other modelling choices that require careful consideration are whether we build a surrogate model for the parameter-to-observation map or the log-likelihood directly, and whether we use the full distribution of the emulator or only the mean (see e.g \cite{ST16,lst18}).

The focus of this paper is on computational aspects of the use of Gaussian process surrogate models in PDE inverse problems, with particular emphasis on the setting where the number of design points is limited by computational constraints. The main contributions of this paper are the following:
\begin{enumerate}
    \item We extend the PDE-informed Gaussian process priors from \cite{RRPK17} to enable their use in inverse problems, which requires a Gaussian process prior as a function of both the spatial variable of the PDE and the unknown parameter(s).
    \item By showing that the required gradients can be computed explicitly, we establish that gradient-based MCMC samplers such as the Metropolis-adjusted Langevin algorithm (MALA) can be used to efficiently sample from the approximate posterior distributions. 
    \item Using a range of numerical examples, we demonstrate the isolated effects of various modelling choices made, and thus offer valuable insights and guidance for practitioners. This includes choices on posterior approximation in the inverse problem (e.g. emulating the parameter-to-observation map or the log-likelihood) and on prior distributions for the Gaussian process emulator (e.g. black-box or PDE-constrained).
\end{enumerate}
%\subsection{Structure}
The rest of the paper is organised as follows. In Section \ref{sec:set_up} we set up notation with respect to the inverse problems of interest, as well as discuss the different kinds of posterior approximations that result from using Gaussian surrogate models for the data-likelihood. We then proceed in Section \ref{sec:methodology} to present our main methodology, discussing how can one blend better-informed Gaussian surrogate models with inverse problems as well as presenting the  MCMC algorithm that we use.  A number of different numerical experiments that illustrate the computational benefits of our approach are then presented in Section \ref{sec:numerics}, while finally  Section \ref{sec:conclusions} provides a summary and discussion of the main results.

\section{Preliminaries}\label{sec:set_up}
We now give more details about the type of inverse problems that we consider in this paper as well as discuss different aspects of Gaussian emulators and the corresponding type of approximate posteriors that we consider in this work. At the end of this section, we summarise in Table \ref{tab:symbols} all the different notations introduced in this section.

\subsection{PDE Inverse problems}
%{\color{red} this section needs some references to standard texts in Bayesian inference}

Consider the linear PDE
\begin{subequations}
\label{eq:PDE_lin}
\begin{align}
   \mathcal{L}^{\btheta} u(\mathbf{x}) &= f(\mathbf{x}), \qquad \mathbf x \in D, \label{eq:pde1} \\
   \mathcal B u(\mathbf{x}) &= g(\mathbf{x}), \qquad \mathbf x \in \partial D, \label{eq:pde2}
\end{align}
\end{subequations}
posed on a computational domain $D \subseteq \mathbb{R}^{{d}_{\mathbf{x}}}$,  where $\mathcal{L}^{\btheta}$ denotes a linear differential operator depending on parameters $\boldsymbol{\theta} \in \mathcal{T} \subseteq \mathbb{R}^{{d}_{\boldsymbol{\theta}}}$ and the linear operator $\mathcal B$ incorporates boundary conditions. 
The inverse problem of interest in this paper is to infer the parameters {$\boldsymbol{\theta}$ from the noisy data $\mathbf{y} \in \mathbb{R}^{{d}_{\mathbf{y}}}$} given by 
\begin{equation}\label{equ:ivp}
    \mathbf{y} = \mathcal{G}_{X}(\boldsymbol{\theta}) + \boldsymbol\eta,
\end{equation}
where $X=\{\mathbf{x}_{1},\cdots,\mathbf{x}_{{d}_{\mathbf{y}}} \} \subset \overline D$ are the spatial points where we observe {the solution $u$ of our PDE}, ${\mathcal{G} _{X}:\mathcal{T}\rightarrow\mathbb{R}^{{d}_{\mathbf{y}}}}$ is the \emph{parameter-to-observation map} defined by $\mathcal{G}_{X}(\boldsymbol{\theta}) = \{u(\mathbf{x}_j; \boldsymbol{\theta})\}_{j=1}^{{{d}_{\mathbf{y}}}}$, and $\boldsymbol{\eta} \sim \mathcal{N}(0,\Gamma_{\eta})$ is an additive Gaussian noise term with covariance matrix $\Gamma_{\eta} = \sigma_{\eta}^2 I_{{d}_{\mathbf{y}}}$. Note that the assumption of Gaussianity and diagonal noise covariance is done for simplicity, but these assumptions can be relaxed \cite{lst18}. Likewise, the methodology generalises straightforwardly to general bounded linear observation operators applied to the PDE solution $u$. %The functions $f$ and $g$ in \eqref{eq:pde1}-\eqref{eq:pde2} are assumed known.

To solve the inverse problem we will adopt a Bayesian approach \cite{S10}. That is, prior to observing the data $\mathbf{y}$, $\boldsymbol{\theta}$ is assumed to be distributed according to a prior density $\pi_0(\btheta),$ and we are interested in the updated posterior density $\pi(\boldsymbol{\theta}|{\mathbf{y}})$. From  \eqref{equ:ivp} we have $
    \mathbf{y}|\btheta \sim \mathcal{N}(\mathcal{G}_{X}(\boldsymbol{\theta}),\Gamma_{\eta})$, so the {\em likelihood} is
\begin{equation}\label{equ:likelihood}
   L(\mathbf{y}|{\btheta}) \propto \exp{\left(-\frac{1}{2}\|\mathcal{G}_{X}(\boldsymbol{\theta})-\mathbf{y}\|^2_{\Gamma_{\eta}}\right)}\coloneqq\exp{\left(-\Phi(\boldsymbol{\theta},\mathbf{y})\right)},
\end{equation}where the function $\Phi:\mathcal{T}\times\Rdz\rightarrow \mathbb{R}$ is called the \emph{negative log-likelihood} or \emph{potential}  and $\|\mathbf{z}\|_{\Gamma_\eta} = \mathbf{z}^\mathrm{T} \Gamma_\eta^{-1} \mathbf{z}$ denotes the norm weighted by $\Gamma_\eta^{-1}$.
%Assume we have a prior density $\pi_0$ of $\mathbf{u}$, t
Then by Bayes' formula we have 
\begin{equation} \label{eq:post_pde}
    \pi({\btheta}|{\mathbf{y}}) \propto L(\mathbf{y}|\boldsymbol{\theta})\pi_0(\boldsymbol{\theta}).
\end{equation}
%{\color{blue} Put in some motivation here why we need to approximate the posterior using a surrogate.}
The posterior distribution $\pi({\btheta}|{\mathbf{y}})$ is in general intractable, and we need to resort to sampling methods such as MCMC to extract information from it. However, generating a sample typically involves evaluating the likelihood and hence the solution of the PDE \eqref{eq:PDE_lin}, which can be prohibitively  costly. This motivates the use of surrogate models to emulate the PDE solution, which in turn is used to approximate the posterior and hence accelerate the sampling process.

%Applying sampling methods directly in PDE inverse problems is therefore often computationally infeasible, so we want to use a surrogate model to approximate the posterior and accelerate the sampling process.

\subsection{Gaussian processes}\label{subsec:gp}
%{\color{red} this section needs some references to standard texts in GP regression}
Gaussian process regression (GPR) is a flexible non-parametric model for Bayesian inference \cite{RW06}. In particular our starting point for approximating an arbitrary function $\mathbf{g}:\mathcal{T}\rightarrow \mathbb{R}^d$, for some $d \in \mathbb{N}$, in the absence of any observations is the following Gaussian process prior
\begin{equation}\label{equ:gpprior}
\mathbf{g}_{0}(\boldsymbol{\theta}) \sim \textrm{GP}(\mathbf{m}(\boldsymbol{\theta}), K(\boldsymbol{\theta},\boldsymbol{\theta}')),
\end{equation}
where $\mathbf{m}:\mathcal{T}\rightarrow\mathbb{R}^{d}$ is a mean function and $K(\boldsymbol{\theta},\boldsymbol{\theta}'):\mathcal{T}\times\mathcal{T}\rightarrow \mathbb{R}^{d \times d}$ is the matrix-valued covariance function which represents the covariance between the different entries of $\mathbf{g}$ evaluated at 
$\boldsymbol{\theta}$ and $\boldsymbol{\theta}'$. When emulating the forward map the function $\mathbf{g}$ corresponds to the PDE solution evaluated at $\dz$ different spatial points, and hence $d=\dz$. In contrast when emulating directly the log-likelihood $d=1$. Furthermore, the matrix $K(\boldsymbol{\theta},\boldsymbol{\theta}')$ is often assumed to take the form 
\begin{equation} \label{eq:simpl}
K(\btheta,\btheta')=k(\btheta,\btheta') I_{d}
\end{equation}
for some scalar-valued covariance function $k(\boldsymbol{\theta},\boldsymbol{\theta}'):\mathcal{T}\times\mathcal{T}\rightarrow \mathbb{R}$, implying that the entries of $\mathbf{g}$ are independent. As we will see later better emulators can be constructed by relaxing this independence assumption.  

The mean function and the covariance function fully characterise our Gaussian prior. A typical choice for $\bm$ is to set it to zero, while common choices for the covariance function $k(\btheta,\btheta')$ include
 the squared exponential covariance function 
\begin{equation*}
    k(\btheta,\btheta') = \sigma^2 \exp\left(-\frac{\norm{\btheta - \btheta'}^2}{2l^2}\right),
\end{equation*}
and the Mat\'ern covariance functions
\begin{equation*}
    k(\btheta,\btheta') =  \frac{\sigma^2}{\Gamma(\nu)2^{\nu-1}}\left(\sqrt{2\nu}\frac{\norm{\btheta-\btheta'}}{l}\right)^\nu B_{\nu}\left(\sqrt{2\nu}\frac{\norm{\btheta-\btheta'}}{l}\right).
\end{equation*}
%where $\norm{\cdot}$ denotes the Euclidean norm, $\Gamma$ denotes the Gamma function, $B_{\nu}$ denotes the modified Bessel function of the second kind. 
For both kernels, the hyperparameter $\sigma^2 >0$ governs the magnitude of the covariance and the hyperparameter  $l>0$ governs the length-scale at which the entries of $\mathbf{g}_0(\btheta)$ and $\mathbf{g}_0(\btheta')$ are correlated. For the Mat\'ern covariance function the smoothness of the entries of $\mathbf{g}_0$ depends on the positive hyper-parameter $\nu$, while in the limit $\nu\rightarrow \infty$ we obtain the squared exponential covariance function which gives rise to infinitely differentiable sample paths for $\mathbf{g}_0$.

Now suppose that we are given data in the form of $N$ distinct  design points $\Theta = \{\btheta^i\}_{i = 1}^{N} \in \mathbb{R}^{\dtheta\times N}$ with corresponding function values
\[
\mathbf{g}(\Theta):=[\mathbf{g}(\btheta^1);\cdots; \mathbf{g}(\btheta^N)] \in \mathbb{R}^{(\dz \times N)\times 1}.
\]
Since we have assumed that the multi-output function $\mathbf{g}_{0}$ is a Gaussian process, the matrix vector
\[
[\mathbf{g}_{0}(\btheta^1); \cdots; \mathbf{g}_{0}(\btheta^N);\mathbf{g}_{0}(\tilde{\btheta})] \in \mathbb{R}^{(\dz \times (N+1))\times 1}
\]
for any test point $\tilde{\btheta}$ follows a multivariate Gaussian distribution. The conditional distribution of $\mathbf{g}_0(\tilde{\btheta})$ given the set of values $\mathbf{g}(\Theta)$ is then again Gaussian with mean and covariance given  by the standard formulas for the conditioning of Gaussian random variables \cite{RW06}. In particular, if we denote with $\mathbf{g}^{N}$ the Gaussian process \eqref{equ:gpprior} conditioned on the values $\mathbf{g}(\Theta)$ we have 

\begin{equation}\label{eq:gp_post}
\mathbf{g}^{N}(\btheta) \sim  \textrm{GP}(\mathbf{m}^{\mathbf{g}}_{N}(\btheta),K_{N}(\btheta,\btheta'))
\end{equation}
where the predictive mean vector $\mathbf{m}^{\mathbf{g}}_{N}$ and the predictive covariance matrix $K_{N}(\btheta,\btheta')$ are given by 
\begin{align}
\mathbf{m}^{\mathbf{g}}_{N}(\btheta) &= \mathbf{m}(\btheta)+K(\btheta,\Theta)K(\Theta,\Theta)^{-1}\left(\mathbf{g}(\Theta)-\mathbf{m}(\Theta)\right) \label{eq:pred_mean}\\    
K_{N}(\btheta,\btheta') &= K(\btheta,\btheta') -  K(\btheta,\Theta)K(\Theta,\Theta)^{-1}K(\btheta',\Theta)^{T}, \label{eq:pred_var}
\end{align}
with $\mathbf{m}(\Theta) = [\mathbf{m}(\btheta^{1});\cdots; \mathbf{m}(\btheta^{N})] \in \mathbb{R}^{(\dz \times N)\times 1}$, $K(\btheta,\Theta) = [K(\btheta,\btheta^1),\dots,K(\btheta,\btheta^N)] \in \mathbb{R}^{ \dz\times (\dz \times N)}$ and

$$K(\Theta,\Theta) = \begin{bmatrix}
K(\btheta^1,\btheta^1)&\dots&K(\btheta^1,\btheta^N)\\
\vdots& &\vdots\\
K(\btheta^N,\btheta^1)&\dots&K(\btheta^N,\btheta^N)\\
\end{bmatrix} \in \mathbb{R}^{ (\dz \times N) \times (\dz \times N)}$$
To avoid ambiguity in the notation, we use regular font for scalar values, bold font for vector values, and capital font for matrices (details in Table \ref{tab:symbols}).

\subsection{Gaussian emulators and approximate posteriors}\label{ssec:gp_posterior}
We now discuss two different approaches for constructing a Gaussian emulator and using it for approximating the posterior of interest. The first approach constructs an emulator for the forward map $\mathcal{G}_{X}$, while the second approach is based on constructing an emulator directly for the log-likelihood.

\subsubsection{Emulating the forward map}

Given the data set ${\mathcal{G}_{X}}(\Theta)$, we can now proceed with building our Gaussian process emulation for the forward map $\mathcal{G}_{X}$. One then needs to decide how to incorporate the emulation for the construction of an approximate posterior. In particular,  depending on what type of information we plan to utilize, different approximations will be obtained. If we use its predictive mean $\mathbf{m}^{\mathcal{G}_X}_{N}$ as a point estimator of the forward map $\mathcal{G}_{X}$, we obtain
%and replace the forward map $\mathcal{G}_{X}(\btheta)$ in \eqref{equ:likelihood} with the predictive mean $\mathbf{m}^{\mathcal{G}_X}_{N}(\btheta)$, which gives %{\color{green} put the subscripts mean and marginal in mathrm font}
\begin{equation} \label{eq:meanpost}
    \pi^{N,\mathcal{G}_{X}}_{\mathrm{mean}}(\btheta|\mathbf{y}) \propto \exp\left(-\frac{1}{2}\|\mathbf{m}^{\mathcal{G}_X}_{N}(\btheta)-\mathbf{y}\|^2_{\Gamma_\eta}\right)\pi_0(\btheta).
\end{equation}
Alternatively, we can try to exploit the full information given by the Gaussian process by incorporating its variance in the posterior approximation. A natural way to do this is to consider the following approximation\footnote{ The derivation of \eqref{eq:postinfl} results from the fact that the convolution of two Gaussian measures is Gaussian. A detailed derivation can be found in Appendix  for completeness, the formula was also derived in  \cite{10.1063/1.4985359,calvetti2018iterative}.}:
%In doing this we substitute the forward map $\mathcal{G}_{X}(\btheta)$ in \eqref{equ:likelihood} with the Gaussian process $\mathcal{G}_{X}^N(\btheta)$, and compute the expectation of the posterior density with respect to the measure, then we have \footnote{ The derivation of \eqref{eq:postinfl} results from the convolution of two Gaussian measures is Gaussian, a detailed derivation can be found in Appendix.}%{\color{green} I would put the subscript marginal rather than inflated, to make it consistent with previous works}
\begin{align} 
    \pi^{N,\mathcal{G}_{X}}_{\mathrm{marginal}}(\btheta|\mathbf{y}) &\propto \mathbb{E}\left(\exp\left(-\frac{1}{2}\|\mathcal{G}_{X}^N(\btheta)-\mathbf{y}\|^2_{\Gamma_\eta}\right)\pi_0(\btheta)\right) \nonumber \\
    &\propto \label{eq:postinfl}\left(\frac{\exp\left(-\frac{1}{2}\|\mathbf{m}^{\mathcal{G}_X}_{N}(\btheta)-\mathbf{y}\|^2_{(K_{N}(\btheta,\btheta)+\Gamma_{\eta})}\right)}{\sqrt{(2\pi)^{d_\mathbf{y}}\det\left(K_{N}(\btheta,\btheta)+\Gamma_{\eta}\right)}}\right) \pi_0(\btheta),
\end{align}
Comparing \eqref{eq:postinfl} with \eqref{eq:meanpost}, the likelihood function in the marginal approximation is Gaussian with additional uncertainty $K_{N}(\btheta,\btheta)$ from the emulator included into its covariance matrix. Hence, for a fixed parameter $\btheta$, the likelihood function in \eqref{eq:postinfl} will be less concentrated due to the variance inflation. When the magnitude of $K_{N}(\btheta,\btheta)$ is small compared to that of $\Gamma_{\eta}$, the marginal approximation will be similar to the mean-based approximation.
 
 %\ref{app:infl}% 
 %{\color{green} discuss this formula a bit more - explain that it corresponds to a variance inflation compared to the mean-based posterior. cite other references where this formula has been computed and where variance inflation is used}
\subsubsection{Emulating the log-likelihood} \label{sec:emulatell}
Another way of building the emulator is to model the potential function $\Phi$ directly. We can convert the data set $\mathcal{G}_{X}(\Theta)$ into a data set of negative log-likelihood $\boldsymbol\Phi(\Theta) = \{\Phi(\btheta^i,\mathbf{y})\}_{i = 1}^{N}$. Again, if we only include the mean of the Gaussian process emulator the posterior approximation becomes
\begin{equation}\label{eq:potenmean}
    \pi^{N,\Phi}_{\mathrm{mean}}(\btheta|\mathbf{y}) \propto  \exp\left(-m^{\Phi}_{N}(\btheta)\right)\pi_0(\mathbf{\btheta}),
\end{equation}
while, in a similar fashion to the forward map emulation, we can take into account the covariance of our emulator to obtain the approximate posterior
\begin{align}\label{eq:poteninfl}
    \pi^{N,\Phi}_{\mathrm{marginal}}(\btheta|\mathbf{y}) &\propto \mathbb{E}\left((\exp{\left(-\Phi^N(\boldsymbol{\theta})\right)})\pi_{0}(\boldsymbol{\theta})\right) \nonumber   \\
    &\propto \exp{\left(-m^{\Phi}_{N}(\boldsymbol{\theta})+ \frac{1}{2}k_{N}(\btheta,\btheta)\right)}\pi_{0}(\boldsymbol{\theta}).
\end{align}
%then by conditioning the Gaussian prior \eqref{equ:gpprior} with suitable choices of mean and kernel function, similarly we have
%\begin{equation}\label{eq:condGpnl}
%    \Phi^{N}(\btheta)\sim \mathcal{N}\left(m^{\Phi}_{N}(\btheta),k_{N}(\btheta,\btheta') \right),
%\end{equation}
%where
%\begin{equation*}
%m^{\Phi}_{N}(\btheta) = m(\btheta) + K({\btheta,\Theta})^T %K(\Theta,\Theta)^{-1}\left(\boldsymbol\Phi(\Theta) - %\begin{bmatrix}m(\btheta^{1})\\\vdots\\m(\btheta^d)\end{bmatrix}\right).
%\end{equation*}
%In this way, the posterior can be approximated as %{\color{green} we need to think about notation here, make this consistent with the notation for the approximate posteriors based on G - see eg the Stuart and Teckentrup 2018 paper}
The derivation of \eqref{eq:poteninfl} is similar to that of \eqref{eq:postinfl}. Note that in this case, the following relationship holds between the two approximate posteriors
\[
\pi^{N,\Phi}_{\mathrm{marginal}}(\btheta|\mathbf{y}) \propto \pi^{N,\Phi}_{\mathrm{mean}}(\btheta|\mathbf{y}) \exp{\left(\frac{1}{2}k_{N}(\btheta,\btheta)\right)},
\]
which again illustrates a form of variance inflation for the marginal posterior approximation.
%{\color{green} can you double check your calculations here. this is not the formula I got when I did the computations} 
%{\color{blue} the above formula is correct - I have checked my calculations and was missing the 1/2. can we add some discussion here on how this corresponds to variance inflation?}

In summary, we have two methods for approximating posteriors: the mean-based approximation and the marginal approximation; and we have two types of emulators: the forward map emulator and the potential function emulator; thus by combination we have four types of approximations in total. The convergence properties of all these approximate posteriors where the subject of study in \cite{ST16,t20,helin2023introduction}, where it was proved under suitable assumptions that all of them converge to the true posterior as $N \rightarrow \infty$. However, in the case of small $N$, the difference between the approximate posteriors could be large and which one we choose is important. Furthermore, the type of Gaussian process emulator used plays an even bigger role in this case, and one would like to use a Gaussian prior that is as informative as possible. We discuss how to do this in the next section.

%In general, we expect the marginal approximation is more accurate than the mean-based approximation, since the earlier one takes into account the uncertainty of the emulator. We also expect the PDE solution emulator is more accurate than the potential function emulator, since emulating the potential function loses information about PDE solution.

%\subsubsection{More informative priors}

%\begin{itemize}
%    \item  Start with GP regression
 %   \item  Talk about inverse problems, define posterior and then %talk about the combination of emulators and posterior leading to (different) approximate posteriors
%    \item  Review of using PDE constraints in the broad sense within Gaussian emulation
%\end{itemize}

\begin{table}[!htbp]

\begin{tabularx}{\textwidth}{ c|X }
    \hline
    \textbf{Symbol}  & \textbf{Description} \\
    \hline
    $\btheta$ & Unknown parameter in PDE\\
    $\mathcal{T}$ & Space of unknown parameter\\
    $\mathbf{y}$ & Discrete observation of PDE solution\\
    ${d_{\btheta}}$, $d_\mathbf{y}$, $d_\mathbf{x}$ & Dimension of vector space\\
    $\boldsymbol{\eta}$, $\Gamma_{\eta}$, $\sigma^2_{\eta}$ & Gaussian noise $\boldsymbol{\eta}$ with zero mean and covariance matrix $\Gamma_{\eta} = \sigma^2_{\eta}I_{\dz}$\\
    $X = \{\mathbf{x}_1,\mathbf{x}_2,\cdots,\mathbf{x}_{d_\mathbf{y}}\}$ & Set of spatial points corresponding to the observation $\boldsymbol{y}$\\
    $\mathcal{G}_X$ &  $\mathcal{G}_X: \mathcal{T} \rightarrow \mathbb{R}^{d_\mathbf{y}}$ parameter-to-observation map\\
    $\pi(\btheta|\mathbf{y})$ & Posterior\\
    $L(\mathbf{y}|\btheta)$ & Likelihood\\
    $\pi_0(\btheta)$ & Prior\\
    $\Phi(\btheta,\mathbf{y})$ & Negative log-likelihood (or potential)\\
    $\sigma^2$, $l$ & Hyper-parameters in Gaussian covariance function (variance $\sigma^2$ and length-scale $l$)\\
    %$\mathbf{g}({\btheta})$ &  Arbitrary vector valued function\\
    $\textrm{GP}(\mathbf{m}(\boldsymbol{\theta}), K(\boldsymbol{\theta},\boldsymbol{\theta}'))$ & Gaussian process with mean function $\mathbf{m}(\boldsymbol{\theta})$ and matrix-valued covariance function $K({\btheta},{\btheta}')$\\
    $k(\btheta,\btheta')$ & Scalar-valued covariance function \\
    $\mathcal{G}_X(\Theta)$ & Training data set of function values at $\Theta = \{\btheta^i\}_{i = 1}^{N} \in \mathbb{R}^{\dtheta\times N}$ \\
    $\mathcal{G}^{N}_{X}(\btheta)$ & Gaussian process conditioned on data $\mathcal{G}_X(\Theta)$\\
    $\mathbf{m}^{\mathcal{G}_X}_{N}(\btheta)$, $K_{N}(\btheta,\btheta')$ & Predictive mean and predictive covariance of $\mathcal{G}^{N}_{X}(\btheta)$\\   $\mathcal{L}^{\btheta}_{\mathbf{x}}$ & Differential operator of PDE with parameter $\btheta$\\
    $u$, $f$ & PDE solution $u$ and sourcing term $f$\\
    $\pi^{N,\mathcal{G}_X}_{\mathrm{mean}}$, $\pi^{N,\mathcal{G}_X,s}_{\mathrm{mean}}$, $\pi^{N,\mathcal{G}_X,\mathrm{PDE}}_{\mathrm{mean}}$ & Mean-based posterior with baseline, spatially correlated  and PDE-constrained emulator\\
    $\pi^{N,\mathcal{G}_X}_{\mathrm{marginal}}$, $\pi^{N,\mathcal{G}_X,s}_{\mathrm{marginal}}$, $\pi^{N,\mathcal{G}_X,\mathrm{PDE}}_{\mathrm{marginal}}$ & Marginal posterior with baseline, spatially correlated  and PDE-constrained emulator\\
    $\Phi(\Theta)$ & Training data set of potential function values at $\Theta$\\
    $\Phi^N(\btheta)$ & Gaussian process conditioned on data $\Phi(\Theta)$\\
    $m^{\Phi}_{N}(\btheta)$, $k_{N}(\btheta,\btheta')$ &  Predictive mean and covariance of $\Phi^N$\\
    $\pi^{N,\Phi}_{\mathrm{mean}}$, $\pi^{N,\Phi}_{\mathrm{marginal}}$ &  Mean-based and marginal posterior with emulation of potential function\\
    $k_p(\btheta,\btheta')$, $k_s(\mathbf{x},\mathbf{x}')$ & Scalar-valued covariance function for parameter and spatial coordinate\\
    $K_p(\btheta,\btheta')$, $K_s(\mathbf{x},\mathbf{x}')$ & Matrix-valued covariance function for parameter and spatial coordinate\\ 
    \hline    
\end{tabularx}
    \caption{The list of symbols and notations used in this paper.}
    \label{tab:symbols}
\end{table}

\section{Methodology} \label{sec:methodology}

Having described the different types of posterior approximations we will consider, in this section we discuss different modelling approaches for the prior distribution used in our Gaussian emulators. In doing this it is important to note that the function that we are interested to emulate, in this case the forward map $\mathcal{G}_{X}(\btheta)$, depends not only on the parameters $\btheta$ of our PDE, but also on the location of the spatial observations. Thus in terms of modelling, one would like to take this into account and build spatial correlation explicitly into the prior covariance. This can be done in two different ways, the first  by prescribing some explicit form of spatial correlation, and the second  by using the fact that we know that our forward map is associated with the solution of a linear PDE. We do this in Section  \ref{subsec:cor_PDE}. It is important to note that in both cases it is possible to calculate the gradients with respect to the parameters $\btheta$ in a closed form, which can then be used to sample from the approximate posterior distributions using gradient-based MCMC methods such as MALA. We discuss this in more detail in Section \ref{subsec:MCMC}.

%In this section, we describe how we construct a PDE-informed Gaussian process emulator for a Bayesian inverse problem. We will first introduce the spatial correlation into the Gaussian process prior, that is, assuming the prior to be a function of both spatial variable(s) in PDE and unknown parameter(s), which enable us apply PDE constraints to the spatial correlation and build the PDE-informed Gaussian process emulator.
%In addition, we will show that the gradients of the emulators can be computed explicitly, which enable us to use gradient-based MCMC samplers such as the Metropolis-adjusted Langevin algorithm (MALA) algorithm.

%{\color{blue} put in some bridging/motivating text here to introduce the section: In this section, we...}

\subsection{Correlated and PDE-informed priors}
\label{subsec:cor_PDE}

%\subsubsection{PDE-constrained Gaussian process}

% Since we are emulating the solution of PDEs, that is, if the differential operator $\mathcal{L}^{\btheta}$ and the sourcing term $f$ are both known except for unknown parameters $\btheta$, we have the equation
% \begin{equation} \label{eq:ODE_gen}
%    \mathcal{L}^{\btheta} u = f,
% \end{equation}
% where $u:\mathbb{R}^{\mathbf{d}_\mathbf{x}}\rightarrow \mathbb{R}$ is the PDE solution. Different from the parameter-to-observation map $\mathcal{G}_X$, $u$ maps a spatial point to an observation. As the approach introduced in \cite{RRPK17}, instead of using non-informative priors, we can embed the information provided by the PDE by applying the operator to the kernel
% \begin{equation}
%    \mathcal{L}^{\btheta}_{\mathbf{x}'} k(\mathbf{x},\mathbf{x}') \coloneqq k_{uf}(\mathbf{x},\mathbf{x}')\qquad\text{and}\qquad \mathcal{L}^{\btheta}_{\mathbf{x}'}\mathcal{L}^{\btheta}_{\mathbf{x}} k(\mathbf{x},\mathbf{x}') \coloneqq k_{ff}(\mathbf{x},\mathbf{x}'),
% \end{equation}
% and have a joint prior
% \begin{equation}
%     \begin{bmatrix}
%     u(\mathbf{x})\\f(\mathbf{x})
%     \end{bmatrix}\sim \textrm{GP}\left(\boldsymbol{0}, \begin{bmatrix}
%     k(\mathbf{x},\mathbf{x}') & k_{uf}(\mathbf{x},\mathbf{x}')\\
%     k_{uf}(\mathbf{x},\mathbf{x}')^T & k_{ff}(\mathbf{x},\mathbf{x}')
%     \end{bmatrix}\right).
% \end{equation}

% We can then obtain a PDE-constrained Gaussian process by computing the conditional distribution of $u(\mathbf{x})$ given $f(\mathbf{x})$.

We now discuss two different approaches to incorporate spatial correlation in our prior covariance function for the forward map $\mathcal{G}_{X}(\btheta)$. Even though this is a function from the parameter space $\mathcal{T}$ to the observation space $\mathbb{R}^{\dz}$, for introducing more complicated spatial correlation it is useful to  think first about the PDE solution $u(\btheta,\mathbf{x})$ as a function from $\mathcal{T} \times \overline D $ to $\mathbb{R}$. We introduce the prior covariance function  $k((\btheta,\mathbf{x}),(\btheta',\mathbf{x}'))$ for $u(\btheta,\mathbf{x})$,
%Based on the two types of approximate posteriors \eqref{eq:meanpost} and \eqref{eq:postinfl}, the proposed methods are to introduce more sophisticated modelling assumptions into the Gaussian process emulator and connect them with different approximated posteriors. Since the function we are emulating is a PDE solution with certain regularity, the assumption we made in \eqref{eq:simpl} that each entries of $\mathbf{g}_{0}$ are independent is unprofitable.
and choose a separable model
\begin{equation}\label{eq:kernel_separable}
k((\btheta,\mathbf{x}),(\btheta',\mathbf{x}'))=k_p(\btheta,\btheta')k_s(\mathbf{x}, \mathbf{x}'),
\end{equation}
where $k_p$ and $k_s$ are the covariance functions for the parameters $\btheta$ and the spatial points $\mathbf{x}$ respectively.  Separable kernels are a common modeling assumption in Gaussian processes. The resulting covariance function will have a high value only if the kernels for both variables have a high value. 

Using the fact that the forward map $\mathcal{G}_{X}$ relates to the point-wise evaluation of the function $u(\btheta, \mathbf{x})$ for $\mathbf{x} \in X$, and assuming zero mean, we then have the Gaussian prior 
\begin{equation}\label{eq:gp_prior_spatial}
\mathcal{G}_{X}(\btheta) \sim \textrm{GP}(0,K(\btheta,\btheta')),
\end{equation}
with
 \begin{equation*}
     K(\btheta,\btheta') = k_p(\btheta,\btheta')K_s(X, X),
 \end{equation*}
where $K_s$ is the covariance matrix with entries $(K_{s}(X,X))_{i,j}=k_{s}(\mathbf{x}_{i},\mathbf{x}_{j})$, $ \mathbf{x}_{i},\mathbf{x}_{j} \in X $.  This prior can then be updated to a posterior by conditioning on data $\mathcal G_X(\Theta)$ as in Section \ref{subsec:gp}, which gives
\begin{equation}\label{eq:gp_post_spatial}
\mathcal{G}_{X}(\btheta)|\mathcal G_X(\Theta) \sim \textrm{GP}(\bm_N^{\mathcal G_X}(\btheta),K_N(\btheta,\btheta')),
\end{equation}
with
 \begin{align*}
 \bm_N^{\mathcal G_X}(\btheta) &= K_{uu}(\btheta,\Theta) K_{uu}(\Theta,\Theta)^{-1} \mathcal G_X(\Theta),\\
     K_N(\btheta,\btheta') &= K(\btheta,\btheta') -  K_{uu}(\btheta,\Theta) K_{uu}(\Theta,\Theta)^{-1} K(\btheta', \Theta),
 \end{align*}
 and
 \begin{equation*}
     K_{uu}(\Theta,\Theta) = 
    \{k_p(\btheta^{i},\btheta^j)K_s(X,X)\} \in \mathbb{R}^{N \dz\times N \dz}, \quad \text{similarly} \quad K_{uu}(\btheta,\Theta) \in \mathbb{R}^{\dz\times N \dz}.
 \end{equation*}

The second way of introducing spatial correlation is to  explicitly take into account that the forward map is related to a PDE solution. Given the PDE system
\begin{align*}
   \mathcal{L}^{\btheta} u(\mathbf{x}) &= f(\mathbf{x}), \qquad \mathbf x \in D, \\
   \mathcal B u(\mathbf{x}) &= g(\mathbf{x}), \qquad \mathbf x \in \partial D, 
\end{align*}
as in Section \ref{sec:set_up}, we can build a joint prior between $u$, $f$ and $g$. In particular, if we take fixed points $\mathbf{x},\mathbf{x}_{f} \in D$ and $\mathbf{x}_{b} \in \partial D$ we have that
\begin{equation}
\label{eq:step_first}
    \begin{bmatrix}
    u(\boldsymbol{\theta},\mathbf{x})\\g(\btheta,{\mathbf{x}_{b}})\\f(\btheta,\mathbf{x}_{f})
    \end{bmatrix}\sim \textrm{GP}\left(\boldsymbol{0}, k_p(\btheta,\btheta')\begin{bmatrix}
    k_s(\mathbf{x},\mathbf{x}) & \mathcal{B}k_s(\mathbf{x},\mathbf{x}_{b}) & \mathcal{L}^{\btheta'}k_s(\mathbf{x},\mathbf{x}_{f})\\
    \mathcal{B}k_s(\mathbf{x}_{b},\mathbf{x}) & \mathcal{B}\mathcal{B}k_s(\mathbf{x}_{b},\mathbf{x}_{b}) & \mathcal{B}\mathcal{L}^{\btheta'}k_s(\mathbf{x}_{b},\mathbf{x}_{f})\\
    \mathcal{L}^{\btheta}k_s(\mathbf{x}_{f},\mathbf{x}_{b}) & \mathcal{L}^{\btheta}\mathcal{B}k_s(\mathbf{x}_{f},\mathbf{x}_{b}) & \mathcal{L}^{\btheta}\mathcal{L}^{\btheta'}k_s(\mathbf{x}_{f},\mathbf{x}_{f})\\
    \end{bmatrix}\right),
\end{equation}  
 where the above is a Gaussian process as a function of $\btheta$, and we have used known properties of linear 
operators applied to Gaussian processes (see e.g. 
\cite{matsumoto2023images}) in the derivation. The idea of a joint prior 
between $u$ and $f$ was also used in \cite{RRPK17,spitieris2023bayesian}, with 
the crucial difference that $u$ and $f$ were considered as 
functions of the spatial variable $\mathbf{x}$ only. In 
the context of inverse problems and emulators as considered in this 
work, we instead explicitly model the dependency of $u$ on $\boldsymbol{\theta}$, which requires an extension of the methodology.

Now as in the spatially correlated case, we can use the formula \eqref{eq:step_first} to obtain a (joint) prior for $\mathcal{G}_{X}(\btheta)$. More precisely, we have
\begin{equation}\label{eq:gp_prior_joint}
    \begin{bmatrix}
  \mathcal{G}_{X}(\btheta)\\g(\btheta,X_{g})\\f(\btheta,X_f)
    \end{bmatrix}\sim \textrm{GP}\left(\boldsymbol{0}, K(\btheta,\btheta')\right),
\end{equation}
 where 
\[
K(\btheta,\btheta') = k_p(\btheta,\btheta')\begin{bmatrix}
    K_s(X,X) & \mathcal{B}K_s(X,X_{g}) & \mathcal{L}^{\btheta'}K_s(X,X_f)\\
    \mathcal{B}K_s(X_{g},X) & \mathcal{B}\mathcal{B}K_s(X_{g},X_{g}) & \mathcal{B}\mathcal{L}^{\btheta'}K_s(X_{g},X_f)\\
    \mathcal{L}^{\btheta}K_s(X_f,X) & \mathcal{L}^{\btheta}\mathcal{B}K_s(X_f,X_{g}) & \mathcal{L}^{\btheta}\mathcal{L}^{\btheta'}K_s(X_f,X_f)\\
    \end{bmatrix}
\]
and $X_{g} \subset \partial D$ and $X_{f} \subset D$ are collections of $d_g$ and $d_f$ points at which $g$ and $f$ have been evaluated, respectively. Note that the marginal prior placed on $\mathcal G_X$ is the same as in \eqref{eq:gp_prior_spatial}.

We can then condition the joint Gaussian process prior \eqref{eq:gp_prior_joint} as in Section \ref{subsec:gp} on observations $\mathbf{g}(\Theta)$, where now \[
\mathbf{g} = \begin{bmatrix}
    \mathcal G_X(\cdot) \\ g(\cdot,X_g) \\ f(\cdot,X_f)
\end{bmatrix} : \mathcal T \rightarrow \mathbb{R}^{\dz + d_g + d_f}. \]
%If we now denote by $\color{red}\Bar{\Theta} = \Theta \cup \{\btheta\}$ the set of training points $\Theta$ together with the test point $\btheta$, we have the following conditional distribution}
After a re-ordering of the observations $\mathbf{g}(\Theta)$, this results in the conditional distribution
\begin{equation*}
    \mathbf{g}(\btheta)|\mathbf{g}({\Theta}) \sim \textrm{GP}\left(\bm_N^\mathbf{g}(\btheta), %k_p(\btheta,\btheta')K_s^{{\color{blue}{fg}}}(X,X)
    K_N(\btheta,\btheta')\right),
\end{equation*} 
where
\begin{align*}
\bm_N^\mathbf{g}(\btheta) &= \tilde K(\btheta,\Theta) \tilde K(\Theta,\Theta)^{-1} \mathbf{g}(\Theta),  \\  
    K_N^\mathbf{g}(\btheta,\btheta') &= 
    K(\btheta,\btheta') - \tilde K(\btheta,\Theta) \tilde K(\Theta,\Theta)^{-1} \tilde K(\btheta',\Theta)^\mathrm{T},
\end{align*}
with $K(\btheta,\btheta') = k_p(\btheta,\btheta') K_s(X,X)$ as before and 
\begin{align*}
    \tilde K(\btheta,\Theta) &= \begin{bmatrix}
    K_{uu}(\btheta,\Theta) & K_{ug}(\btheta,\Theta) & K_{uf}(\btheta,\Theta) \\ K_{ug}^\mathrm{T}(\btheta,\Theta) & K_{gg}(\btheta,\Theta) & K_{gf}(\btheta,\Theta) \\
    K_{uf}^\mathrm{T}(\btheta,\Theta) &K_{gf}^{T}(\btheta,\Theta) & K_{ff}(\btheta,\Theta)
    \end{bmatrix} \in \mathbb{R}^{(\dz+d_f+d_g) \times N(\dz+d_f+d_g)}, \\
    \tilde K(\Theta,\Theta) &= \begin{bmatrix}
    K_{uu}(\Theta,\Theta) & K_{ug}(\Theta,\Theta) & K_{uf}(\Theta,\Theta) \\ K_{ug}^\mathrm{T}(\Theta,\Theta) & K_{gg}(\Theta,\Theta) & K_{gf}(\Theta,\Theta) \\
    K_{uf}^\mathrm{T}(\Theta,\Theta) &K_{gf}^{T}(\Theta,\Theta) & K_{ff}(\Theta,\Theta)
    \end{bmatrix} \in \mathbb{R}^{N(\dz+d_f+d_g) \times N(\dz+d_f+d_g)}, \\
    \mathbf{g}(\Theta) &= \begin{bmatrix}
    \mathcal G_X(\Theta) \\
    g({\Theta},X_{g})\\f({\Theta},X_f)
    \end{bmatrix} \in \mathbb R^{N(\dz+d_f+d_g)},
\end{align*}
and
\begin{align*}
    %&\Bar{\Theta} = \Theta \cup \{\btheta\},&\\
    &K_{uu}(\Theta,\Theta) = 
    \{k_p(\btheta^{i},\btheta^j)K_s(X,X)\} \in \mathbb{R}^{N \dz\times N \dz}, \quad \text{similarly} \quad K_{uu}(\btheta,\Theta) \in \mathbb{R}^{\dz\times N \dz}, \\
    &K_{ug}(\Theta,\Theta) = 
    \{k_p(\btheta^i,\btheta^{j})\mathcal{B}K_s(X,X_{g})\} 
    \in \mathbb{R}^{N\dz\times N d_g}, \quad \text{similarly} \quad K_{ug}(\btheta,\Theta) \in \mathbb{R}^{\dz\times N d_g},\\
    &K_{uf}(\Theta,\Theta) = 
    \{k_p(\btheta^i,\btheta^{j})\mathcal{L}^{\btheta^{j}}K_s(X,X_f)\}\in \mathbb{R}^{N \dz\times N d_f},\quad \text{similarly} \quad K_{uf}(\btheta,\Theta) \in \mathbb{R}^{\dz\times N d_f},\\
    &K_{gg}(\Theta,\Theta) = 
    \{k_p(\btheta^{i},\btheta^{j})\mathcal{B}\mathcal{B}K_s(X_{g},X_{g})\}\in \mathbb{R}^{N d_g \times N d_g},\quad \text{similarly} \quad K_{gg}(\btheta,\Theta) \in \mathbb{R}^{d_g \times N d_g},\\  
    &K_{gf}(\Theta,\Theta) = \{k_p(\btheta^{i},\btheta^{j})\mathcal{B}\mathcal{L}^{\btheta^{j}}K_s(X_{g},X_f)\} \in \mathbb{R}^{N d_g \times N d_f},\quad \text{similarly} \quad K_{gf}(\btheta,\Theta) \in \mathbb{R}^{d_g \times N d_f},\\
    &K_{ff}(\Theta,\Theta) = \{k_p(\btheta^{i},\btheta^{j})\mathcal{L}^{\btheta^{i}}\mathcal{L}^{\btheta^{j}}K_s(X_f,X_f) \} \in \mathbb{R}^{N d_f \times N d_f},\quad \text{similarly} \quad K_{ff}(\btheta,\Theta) \in \mathbb{R}^{d_f \times N d_f},\\  
    &g(\Theta,X_{g}) = \{ g(\btheta^{i},X_{g}) \}  \in \mathbb{R}^{N d_g}, \\ &f({\Theta},X_f) = \{ f(\btheta^{i},X_f) \}  \in \mathbb{R}^{N d_f}.
    \end{align*} 
The marginal posterior distribution on $\mathcal G_X(\btheta)$ can then be extracted from the above joint posterior by taking the first $\dz$ rows of $\bm_N^\mathbf{g}$ and the first $\dz$ rows and columns of $K_N^\mathbf{g}$, which gives

\begin{equation} \label{eq:pde_posterior}
\mathcal{G}_{X}(\btheta)|\mathcal G_X(\Theta), g(\Theta,X_g), f(\Theta,X_f) \sim \textrm{GP}(\mathbf{m}_{N,X_f,X_g}^{\mathcal G_X}(\btheta), K_{N,X_f,X_g}(\btheta,\btheta')),
\end{equation}
where 
\begin{align*}
\mathbf{m}_{N,X_f,X_g}^{\mathcal G_X}(\btheta) &= \begin{bmatrix}
    K_{uu}(\btheta,\Theta) & K_{ug}(\btheta,\Theta) & K_{uf}(\btheta,\Theta) 
    \end{bmatrix} \tilde K(\Theta,\Theta)^{-1} \mathbf{g}(\Theta),  \\  
    K_{N,X_f,X_g}(\btheta,\btheta') &= 
    K(\btheta,\btheta') - \begin{bmatrix}
    K_{uu}(\btheta,\Theta) & K_{ug}(\btheta,\Theta) & K_{uf}(\btheta,\Theta) 
    \end{bmatrix} \tilde K(\Theta,\Theta)^{-1} \begin{bmatrix}
    K_{uu}(\btheta',\Theta) \\ K_{ug}(\btheta',\Theta) \\ K_{uf}(\btheta',\Theta) 
    \end{bmatrix}.
\end{align*}
Compared to the spatially correlated posterior in \eqref{eq:gp_post_spatial}, note that in \eqref{eq:pde_posterior} we are updating our prior on $\mathcal G_X(\btheta)$ using the observations $g(\Theta,X_g)$ and $f(\Theta,X_f)$ as well as $\mathcal G_X(\Theta)$. Since the $g$ and $f$ are assumed known, these additional observations are cheap to obtain. It is also possible to extend the methodology to condition on training data $g(\Theta_g,X_g)$ and $f(\Theta_f,X_f)$, for point sets $\Theta_g$ and $\Theta_f$ different to $\Theta$, and this has been found to be beneficial in some of the numerical experiments (see Section \ref{sec:numerics}).
%It is worth pointing out that \eqref{eq:fake_prior} is not strictly speaking a prior for $\mathcal{G}_{X}(\btheta)$ since its mean and covariance depend on the values of training points $\Theta$. Nevertheless, similar to the spatially correlated case we  can update this to a posterior by additionally conditioning on the data $\mathcal{G}_{X}(\Theta)$.

%by constructing the covariance function between them. 

Note that when emulating the potential $\Phi$ instead of the forward map $\mathcal G_{X}$, we are emulating a scalar-valued function. Since $\Phi$ is a non-linear function of $\mathcal G_{X}$, it is not possible to extend the ideas of spatial correlation presented in this section to emulating $\Phi$, and in particular, it is not possible to construct a PDE-informed emulator in the same way.

\subsubsection{Computational implementation}
 We now have three different approaches for emulating the forward map  and defining the correlation between its components. We will refer to these as the independent, spatially correlated, and PDE-constrained model, respectively. Each of them can be combined with the mean-based or the marginal approximation of the posterior. We note here that for the computational implementation of the spatially correlated model, the introduction of the correlation matrix does not change the predictive mean of the Gaussian process, it only affects the predictive covariance (see Theorem \ref{thm:kronecker} below). Since the spatial correlation matrix is independent of $\btheta$, the covariance matrix between two sets of parameters $\Theta_1$ and $\Theta_2$ can be computed by the Kronecker product, that is, 
 \begin{equation}\label{eq:spkron}
     \underbrace{K(\Theta_1,\Theta_2)}_{N_1 {d}_\mathbf{y} \times N_2 {d}_\mathbf{y}} = \underbrace{K_p(\Theta_1,\Theta_2)}_{(N_1 \times N_2)}\otimes \underbrace{K_s(X,X)}_{({d}_\mathbf{y} \times {d}_\mathbf{y})}.
 \end{equation}
Hence, assuming a spatial correlation of the type \eqref{eq:kernel_separable} only affects approximate posteriors that take into account the uncertainty of the emulator.

\begin{thm} 
\label{thm:kronecker}
 Consider two Gaussian processes $\mathbf{g}_{0} (\btheta) \sim \textrm{GP}(\mathbf{m}(\btheta),k_p(\btheta,\btheta')I_{\dz})$ and $\mathbf{g}_{0,s} (\btheta) \sim \textrm{GP}(\mathbf{m}(\btheta),k_p(\btheta,\btheta')K_s(X,X))$, where $K_s(X,X)$ is the covariance matrix on the set of spatial points $X = \{\mathbf{x}_i\}_{i=1}^{\dz}$ and $k_p(\btheta,\btheta')$ is scalar-valued. Conditioning both Gaussian processes on a set of training points $\mathbf{g}(\Theta) = \{\mathbf{g}(\btheta_i)\}_{i=1}^{N}$, denote the corresponding conditional Gaussian processes by $\mathbf{g}^N (\btheta) \sim \textrm{GP}(\mathbf{m}_N^{\mathbf{g}}(\btheta),K_N(\btheta,\btheta'))$ and $\mathbf{g}^N_{s}(\btheta) \sim \textrm{GP}(\mathbf{m}_{N,s}^{\mathbf{g}}(\btheta),K_{N,s}(\btheta,\btheta'))$, respectively. Then we have,   
\begin{align*}
    \mathbf{m}_{N,s}^{\mathbf{g}}(\btheta) &= \mathbf{m}_N^{\mathbf{g}}(\btheta),\\
    K_{N}(\btheta,\btheta') &= k_{N,p}(\btheta,\btheta') I_{\dz}, \quad \text{and} \quad 
    K_{N,s}(\btheta,\btheta') = k_{N,p}(\btheta,\btheta') K_s(X,X),
\end{align*}
where $k_{N,p}(\btheta,\btheta')$ is scalar-valued.
\end{thm}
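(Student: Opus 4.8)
The plan is to exploit the Kronecker product structure that the separable covariance assumption induces on the full training and cross-covariance matrices. Both priors have prior covariance of the form $K(\btheta,\btheta') = k_p(\btheta,\btheta')\,C$ with $C = I_{\dz}$ in the independent case and $C = K_s(X,X)$ in the correlated case. Because the stacking convention $\mathbf{g}(\Theta) = [\mathbf{g}(\btheta^1);\cdots;\mathbf{g}(\btheta^N)]$ places the parameter index outer and the spatial index inner, the block matrix $K(\Theta,\Theta)$ from Section \ref{subsec:gp} is exactly $K_p(\Theta,\Theta)\otimes C$, where $K_p(\Theta,\Theta)$ is the $N\times N$ matrix with entries $k_p(\btheta^i,\btheta^j)$; similarly $K(\btheta,\Theta) = K_p(\btheta,\Theta)\otimes C$ with $K_p(\btheta,\Theta)$ the $1\times N$ row vector $(k_p(\btheta,\btheta^j))_j$.

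First I would invoke the standard Kronecker identities $(A\otimes B)^{-1} = A^{-1}\otimes B^{-1}$ and $(A\otimes B)(C\otimes D) = (AC)\otimes(BD)$ to compute the weight matrix $K(\btheta,\Theta)K(\Theta,\Theta)^{-1}$ appearing in \eqref{eq:pred_mean}. In both cases this equals $(K_p(\btheta,\Theta)K_p(\Theta,\Theta)^{-1})\otimes(C C^{-1}) = (K_p(\btheta,\Theta)K_p(\Theta,\Theta)^{-1})\otimes I_{\dz}$, since $CC^{-1} = I_{\dz}$ regardless of whether $C = I_{\dz}$ or $C = K_s(X,X)$. Thus the weight matrix, the prior mean $\mathbf{m}(\btheta)$, the stacked training mean $\mathbf{m}(\Theta)$, and the data $\mathbf{g}(\Theta)$ all coincide between the two models, and substitution into \eqref{eq:pred_mean} gives $\mathbf{m}_{N,s}^{\mathbf{g}}(\btheta) = \mathbf{m}_N^{\mathbf{g}}(\btheta)$ at once.

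Next I would treat the predictive covariance \eqref{eq:pred_var}. Using the same identities, the subtracted term $K(\btheta,\Theta)K(\Theta,\Theta)^{-1}K(\btheta',\Theta)^\mathrm{T}$ becomes $\bigl(K_p(\btheta,\Theta)K_p(\Theta,\Theta)^{-1}K_p(\btheta',\Theta)^\mathrm{T}\bigr)\otimes C$, where the first factor is a genuine $1\times 1$ scalar. Defining
\[
k_{N,p}(\btheta,\btheta') = k_p(\btheta,\btheta') - K_p(\btheta,\Theta)K_p(\Theta,\Theta)^{-1}K_p(\btheta',\Theta)^\mathrm{T},
\]
which is precisely the posterior covariance of the scalar GP with kernel $k_p$ (hence scalar-valued as claimed), the prior term $k_p(\btheta,\btheta')\,C$ and the subtracted term combine as $k_{N,p}(\btheta,\btheta')\,C$. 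Setting $C = I_{\dz}$ and $C = K_s(X,X)$ respectively yields $K_N(\btheta,\btheta') = k_{N,p}(\btheta,\btheta')I_{\dz}$ and $K_{N,s}(\btheta,\btheta') = k_{N,p}(\btheta,\btheta')K_s(X,X)$ with the \emph{same} scalar $k_{N,p}$, completing the argument.

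The computation is essentially bookkeeping with Kronecker products, so the only genuine requirement to flag is invertibility: the identity $(K_p(\Theta,\Theta)\otimes C)^{-1} = K_p(\Theta,\Theta)^{-1}\otimes C^{-1}$ needs both $K_p(\Theta,\Theta)$ and $C$ nonsingular, so I would assume (as is standard for a valid covariance function on distinct design points) that $k_p$ yields a positive-definite $K_p(\Theta,\Theta)$ and that $K_s(X,X)$ is positive definite on the distinct spatial points $X$. The one place to be slightly careful is that the mixed-product property is applied to the non-square blocks $K_p(\btheta,\Theta)$ and its transpose; since that identity holds for arbitrary conformable factors this causes no difficulty, but it is worth writing the dimensions out once to confirm the surviving factor really is $1\times 1$ before it multiplies $C$.
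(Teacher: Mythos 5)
Your proposal is correct and takes essentially the same route as the paper's proof: both exploit the Kronecker structure $K(\Theta,\Theta)=K_p(\Theta,\Theta)\otimes C$ and the standard inverse and mixed-product identities so that the spatial factor cancels in the predictive mean (via $CC^{-1}=I_{\dz}$ in your version, via $K_s(X,X)^{\mathrm{T}}K_s(X,X)^{-1}=I_{\dz}$ in the paper's). You additionally write out the covariance step, which the paper dismisses as ``shown in a similar way'' using \eqref{eq:pred_var}, and you flag the positive-definiteness assumptions needed for $(K_p(\Theta,\Theta)\otimes C)^{-1}=K_p(\Theta,\Theta)^{-1}\otimes C^{-1}$ --- both are sensible elaborations of the same argument rather than a different approach.
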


\begin{proof}
 We now prove the expression for  the predictive mean. Let $k_p(\btheta,\Theta) := [k_p(\btheta,\btheta^1); \dots ; k_p(\btheta,\btheta^N)] \in \mathbb{R}^{1 \times \dz}$, and denote by $K_p(\Theta,\Theta) \in \mathbb{R}^{\dz \times \dz}$ the matrix with entries $(K_p(\Theta,\Theta))_{i,j} = k_p(\btheta^i,\btheta^j)$. Then by \eqref{eq:pred_mean} we have
\begin{align*}
    &\mathbf{m}_{N,s}^{\mathbf{g}}(\btheta)  \\
    &= \mathbf{m}(\btheta) + \left(k_p(\btheta,\Theta)\otimes K_s(X,X)\right)^{T}
    \left(K_p(\Theta,\Theta)\otimes K_s(X,X)\right)^{-1}\left(\mathbf{g}(\Theta)-\mathbf{m}(\Theta)\right),
    \end{align*}
where $\otimes$ denotes the Kronecker product. Using properties of products and inverses of Kronecker products and the fact that $K_s(X,X)$ is symmetric positive definite, we then have
\begin{align*}
    &\mathbf{m}_{N,s}^{\mathbf{g}}(\btheta)  \\
    % &= \mathbf{m}(\btheta) + \left(K_p(\btheta,\Theta)\otimes K_s(X,X)\right)^{T}
    % \left(K_p(\Theta,\Theta)\otimes K_s(X,X')\right)^{-1}\left(\mathbf{g}(\Theta)-\mathbf{m}(\Theta)\right)\\
    &= \mathbf{m}(\btheta) + \left(k_p(\btheta,\Theta)^{T}\otimes K_s(X,X)^{T}\right)
    \left(K_p(\Theta,\Theta)^{-1}\otimes K_s(X,X)^{-1}\right)\left(\mathbf{g}(\Theta)-\mathbf{m}(\Theta)\right)\\
    &= \mathbf{m}(\btheta) + \left(k_p(\btheta,\Theta)^{T}K_p(\Theta,\Theta)^{-1}\otimes K_s(X,X)^{T} K_s(X,X)^{-1}\right)\left(\mathbf{g}(\Theta)-\mathbf{m}(\Theta)\right)\\
    &= \mathbf{m}(\btheta) + \left(k_p(\btheta,\Theta)^{T}K_p(\Theta,\Theta)^{-1}\otimes I_{\dz}\right)\left(\mathbf{g}(\Theta)-\mathbf{m}(\Theta)\right)\\
    &= \mathbf{m}_{N}^{\mathbf{g}}(\btheta).
\end{align*}
The relationship between $K_{N,s}(\btheta,\btheta')$ and $K_{N}(\btheta,\btheta')$ can be shown in a similar way, using \eqref{eq:pred_var}.

\end{proof}
For the PDE-constrained model, since the covariance functions related to $f$ are obtained by applying the differential operator, the spatially correlated matrix in the joint prior \eqref{eq:step_first} also depends explicitly on the parameters $\btheta$. Therefore, its covariance matrix cannot be written in a Kronecker product structure as in \eqref{eq:spkron} and Theorem \ref{thm:kronecker} does not apply. Thus, incorporating the PDE constraints into the model also affects the predictive mean and hence the mean-based posterior is also changed. %Though the computational cost increases due to the computation of the PDE-informed prior, it is a small cost compared to solve the PDE and the benefits will be shown in the approximations of the posterior {\color{red}(see Section \ref{sec:numerics} for more details)}.

\subsection{MCMC algorithms}
\label{subsec:MCMC}

To extract information from the posterior, MCMC algorithms are powerful and popular tools \cite{RC04,BGJ11}. In this work, we will consider the Metropolis-Adjusted Langevin Algorithm (MALA) \cite{roberts1996exponential}, which is a type of MCMC algorithm that uses gradient information to accelerate the convergence of the sampling chain. Central to the idea of MALA, and any gradient-based sampling method in fact, is the overdamped Langevin  stochastic differential equation (SDE):
%The idea behind MALA is that to sample from the density $\pi(\btheta|\mathbf{y})$, we consider
\begin{equation} \label{eq:langevin}
d\btheta = \nabla\log \pi(\btheta|{\mathbf{y}}) dt+ \sqrt{2} dW,
\end{equation}
where $W$ is a standard ${d_{\btheta}}$-dimensional Brownian motion.  Under mild conditions on the posterior $\pi$ \cite{RC04}, \eqref{eq:langevin} is ergodic and has $\pi$ as its stationary distribution, so that the probability density function of $\btheta(t)$ tends to $\pi$ as $t \rightarrow \infty$.

\begin{algorithm}[h!]
\caption{Metropolis-Adjusted Langevin Algorithm}\label{alg:MALA}

\begin{algorithmic}
\Require initial value $\btheta_0$, number of samples $N$, time-step $\gamma$, posterior $\pi(\btheta|\bz)$
\While{$n < N$} \vspace{0.5ex}
\State 1. Generate $\xi_n \sim \mathcal{N}(0,1)$. \vspace{0.5ex}
\State 2. Generate a candidate 
$$ \btheta' = \btheta_n + \gamma \nabla\log\pi(\btheta_n|\bz) + \sqrt{2\gamma}\xi_n. $$ \vspace{-2ex} 
\State 3. Compute the acceptance rate $$\alpha_n := \max\left(1,\frac{\pi(\btheta'|\bz)q(\btheta_{n}|\btheta')}{\pi(\btheta_n|\bz)q(\btheta'|\btheta_{n})}\right), $$
\State where $q(\btheta|\tilde \btheta) \propto \exp{\left( -\frac{1}{4\gamma} \|\btheta - \tilde \btheta - \gamma \nabla\log\pi(\tilde \btheta|\bz)\|^2\right)}$. \vspace{0.5ex}
\State 4. Generate $r \sim U[0,1].$ If $r > \alpha_n$, set $\btheta_{n+1} = \btheta'$; otherwise $\btheta_{n+1} = \btheta_n$. \vspace{0.5ex}
\EndWhile
\end{algorithmic}
\end{algorithm}

In  practice,  the dynamics \eqref{eq:langevin} is discretised with a simple Euler-Maruyama method with a time step $\gamma$. 
\begin{equation} \label{eq:EM}
    \btheta_{n+1} = \btheta_{n} + \gamma\nabla\log \pi(\btheta|\mathbf{y}) + \sqrt{2\gamma}\xi_n,
\end{equation} 
with $\xi_n \sim \mathcal{N}(0,1)$. Assuming that the dynamics of \eqref{eq:EM} remain ergodic the corresponding numerical invariant measure would not necessarily coincide with the posterior. To alleviate this bias, one needs to incorporate an accept-reject mechanism. This gives rise to MALA as descibed in Algorithm \ref{alg:MALA}.

%This introduces a bias into the stationary distribution of the process. To alleviate this bias, we  can incorporate the Langevin dynamics into the framework of Metropolis-Hastings with an accept-reject step (see Algorithm \ref{alg:MALA}).

%When the time step $\gamma$ is large, the algorithm can explore the parameter space faster, but the rate of rejection will be high. \modk{One thus have to tune the size of the time step $\gamma$}

%To find a balance between these two effects and achieve an optimal convergence speed, it is proved  for certain classes of target distributions that there exists an optimal acceptance rate for MALA algorithm which is around 0.574 \cite{roberts2001optimal}. %Therefore, we adjust the size of $\gamma$ through the sampling process by
%\begin{equation}
   % \gamma = \gamma + \frac{\gamma(\alpha_n - 0.574)}{n},
%\end{equation}
%where $\alpha_n$ denotes the acceptance rate at the $n_{th}$ sample.

 An advantage of using the Gaussian process emulator in the posterior is that, assuming the prior is differentiable, $\nabla\log \pi^N(\btheta|{\mathbf{y}})$ can be computed analytically for the mean-based and marginal approximations introduced in Section \ref{ssec:gp_posterior}, which enables us to easily implement the MALA algorithm.  Note that in contrast  since the true posterior involves (analytical or numerical) solution $u$ to the PDE \eqref{eq:pde1}-\eqref{eq:pde2}, it is usually impossible to compute these gradients analytically. The following Lemma gives us the gradient of the different approximate posteriors

\begin{lem} Given a Gaussian process $\mathcal{G}^N_X \sim \textrm{GP}(\mathbf{m}^{\mathcal{G}_X}_N(\btheta),K_N(\btheta,\btheta))$ emulating the forward map $\mathcal{G}_X$ with data $\mathcal{G}_X(\Theta)$, then the gradient of the mean-based approximation of the posterior
\begin{align*}
\nabla\log(\pi_\mathrm{mean}^{N,\mathcal{G}_X} (\btheta|\mathbf{y})) = -\frac{1}{\sigma^{2}_{\eta}} \nabla \mathbf{m}_N^{\mathcal{G}_{X}}(\btheta) ^{T}(\mathbf{m}_N^{\mathcal{G}_{X}}(\btheta)-\mathbf{y}) + \nabla \log \pi_0(\btheta),  
\end{align*}
and the gradient of the marginal approximation of the posterior
\begin{align*}\nabla\log(\pi_\mathrm{marginal}^{N,\mathcal{G}_X} (\btheta|\mathbf{y})) =
    &-\nabla \mathbf{m}_N^{\mathcal{G}_{X}}(\btheta)^T(K_N(\btheta,\btheta)+\Gamma_{\eta})^{-1}(\mathbf{m}_N^{\mathcal{G}_{X}}(\btheta)-\mathbf{y}) \\
    &- \frac{1}{2}(\mathbf{m}_N^{\mathcal{G}_{X}}(\btheta)-\mathbf{y})^T\nabla\left((K_N(\btheta,\btheta)+\Gamma_{\eta})^{-1}\right)(\mathbf{m}_N^{\mathcal{G}_{X}}(\btheta)-\mathbf{y})\\
    &- \frac{1}{2}\left( {\Tr \left((K_N(\btheta,\btheta)+\Gamma_{\eta} )^{-1}\right)\nabla(K_N(\btheta,\btheta))}\right) + \nabla \log \pi_0 (\btheta),\\
\end{align*}
where \[\nabla\left((K_N(\btheta,\btheta)+\Gamma_{\eta})^{-1}\right) =  -(K_N(\btheta,\btheta)+\Gamma_{\eta})^{-1}\nabla\left(K_N(\btheta,\btheta)\right) (K_N(\btheta,\btheta)+\Gamma_{\eta})^{-1},
\]
and \[\nabla K_N(\btheta,\btheta) = 2\nabla K(\btheta,\Theta)K(\Theta,\Theta)^{-1}K(\Theta,\btheta)\]

\end{lem}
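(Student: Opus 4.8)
The plan is to begin from the closed-form expressions for the two approximate posteriors, take logarithms, and differentiate term by term using standard matrix calculus. To keep the algebra transparent I would write $\mathbf{r}(\btheta) \coloneqq \mathbf{m}_N^{\mathcal{G}_X}(\btheta) - \mathbf{y}$ and $A(\btheta) \coloneqq K_N(\btheta,\btheta) + \Gamma_\eta$, and work componentwise, computing $\partial/\partial\theta_k$ of each log-density before reassembling the results into gradient vectors. Using $\Gamma_\eta = \sigma_\eta^2 I_{\dz}$, taking the logarithm of \eqref{eq:meanpost} gives $-\tfrac{1}{2\sigma_\eta^2}\mathbf{r}^\mathrm{T}\mathbf{r} + \log\pi_0 + \mathrm{const}$, and of \eqref{eq:postinfl} gives $-\tfrac12\mathbf{r}^\mathrm{T}A^{-1}\mathbf{r} - \tfrac12\log\det A + \log\pi_0 + \mathrm{const}$.

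For the mean-based formula the gradient is immediate: differentiating the quadratic form $\mathbf{r}^\mathrm{T}\mathbf{r}$ by the chain rule produces $\tfrac{1}{\sigma_\eta^2}(\nabla\mathbf{m}_N^{\mathcal{G}_X})^\mathrm{T}\mathbf{r}$, which together with $\nabla\log\pi_0$ is exactly the claimed expression. For the marginal formula I would treat the three pieces separately. The prior term is trivial. For the weighted quadratic term $\mathbf{r}^\mathrm{T}A^{-1}\mathbf{r}$ I would apply the product rule, keeping in mind that both $\mathbf{r}$ and $A^{-1}$ depend on $\btheta$; since $A^{-1}$ is symmetric, the two contributions from differentiating $\mathbf{r}$ collapse into a single term $2(\nabla\mathbf{m}_N^{\mathcal{G}_X})^\mathrm{T}A^{-1}\mathbf{r}$, while differentiating the weight matrix contributes $\mathbf{r}^\mathrm{T}(\nabla A^{-1})\mathbf{r}$; after the overall factor $-\tfrac12$ these become the first two terms of the stated gradient. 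For the log-determinant term I would invoke Jacobi's formula, $\partial_{\theta_k}\log\det A = \Tr\!\big(A^{-1}\partial_{\theta_k}A\big)$, and use $\partial_{\theta_k}A = \partial_{\theta_k}K_N$ (as $\Gamma_\eta$ is constant) to obtain the trace term.

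It then remains to establish the two auxiliary identities. The matrix-inverse identity $\nabla(A^{-1}) = -A^{-1}(\nabla A)A^{-1}$ follows componentwise by differentiating $A A^{-1} = I$ and solving, again using $\partial_{\theta_k}A = \partial_{\theta_k}K_N$. For the derivative of the predictive covariance I would start from \eqref{eq:pred_var} evaluated on the diagonal, $K_N(\btheta,\btheta) = K(\btheta,\btheta) - K(\btheta,\Theta)K(\Theta,\Theta)^{-1}K(\btheta,\Theta)^\mathrm{T}$. For the stationary kernels used here the prior diagonal $K(\btheta,\btheta)$ is constant in $\btheta$ and drops out, so only the Schur-complement term survives. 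Differentiating it through both occurrences of $\btheta$ by the product rule and using the symmetry of $K(\Theta,\Theta)^{-1}$ together with the separable kernel structure---which renders the resulting matrix symmetric---combines the two contributions into the single expression carrying the factor of $2$, namely $2\nabla K(\btheta,\Theta)K(\Theta,\Theta)^{-1}K(\Theta,\btheta)$.

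I expect no step here to be conceptually deep; the whole Lemma is essentially an exercise in matrix calculus. The main obstacle will be bookkeeping: maintaining consistent Jacobian/gradient conventions so that $\nabla\mathbf{m}_N^{\mathcal{G}_X}$ is handled as a matrix and its transpose lands in the right place, and carefully tracking the dependence on both arguments of $K_N$ when restricting to the diagonal, so that the factor of $2$ (and the correct sign) emerge from the symmetrisation rather than by fiat.
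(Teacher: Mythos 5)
Your proposal is correct and follows essentially the same route as the paper's own derivation in the appendix: take logarithms of the closed forms \eqref{eq:meanpost} and \eqref{eq:postinfl}, differentiate the quadratic forms by the product rule, handle the log-determinant via Jacobi's formula $\partial_{\theta_k}\log\det A = \Tr\left(A^{-1}\partial_{\theta_k}A\right)$, obtain $\nabla(A^{-1})$ by differentiating $AA^{-1}=I$, and get $\nabla K_N(\btheta,\btheta)$ from \eqref{eq:pred_var} restricted to the diagonal. If anything you are slightly more careful than the paper, since you make explicit the two points it leaves implicit: that $K(\btheta,\btheta)$ is constant for the stationary kernels used (so the prior-diagonal term drops out), and that the factor of $2$ in $\nabla K_N(\btheta,\btheta)=2\nabla K(\btheta,\Theta)K(\Theta,\Theta)^{-1}K(\Theta,\btheta)$ rests on the symmetry of $\nabla K(\btheta,\Theta)K(\Theta,\Theta)^{-1}K(\Theta,\btheta)$, which the separable kernel structure \eqref{eq:kernel_separable} guarantees.
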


%{\color{blue} put in some lemmas in this section with the computed gradients. give a general formula for general kernels $k$ in terms of derivatives of $k$, and give specific formulars for specific examples of kernels such as the Gaussian kernel}

\section{Numerical experiments} \label{sec:numerics}

We now discuss a number of different  numerical experiments related to inverse problems  for the PDE \eqref{eq:pde1}-\eqref{eq:pde2} in various set-ups in terms of the number of spatial and parameter dimensions as well as for different types of forward models. In cases where the PDE solution is not available in closed form, we use the finite element software Firedrake \cite{rathgeber2016firedrake} to obtain the "true" solution. Furthermore, in  all our numerical experiments we replace the uniform prior by a smooth approximation given by the  $\lambda-$Moreau-Yoshida envelope \cite{bauschke2011fixed} with $\lambda = 10^{-3}$. To further clarify the notation we use in our numerical experiment, we introduce part of them again in the following table (see Table \ref{tab:symbols2}).

\begin{table}[!htbp]

\begin{tabularx}{\textwidth}{ c|X }
    \hline
    \textbf{Symbol}  & \textbf{Description} \\
    \hline
    $\mathcal{G}_X(\Theta)$ & Training data set: point-wise evaluation of the PDE solution $u(\btheta,\mathbf{x})$ for $\mathbf{x} \in X = \{\mathbf{x}_i\}_{i=1}^{\dz}$, $\btheta \in \Theta = \{\btheta^i\}_{i=1}^N$\\
    $g(\Theta_g, X_g)$ & Additional training data for boundary condition: point-wise evaluation of the function $g(\btheta,\mathbf{x})$ for $\mathbf{x} \in X_g = \{\mathbf{x}_i\}_{i=1}^{d_g}$, $\btheta \in \Theta = \{\btheta^i\}_{i=1}^{N_g}$\\
    $f(\Theta_f, X_f)$ & Additional training data for the source function: point-wise evaluation of the function $f(\btheta,\mathbf{x})$ for $\mathbf{x} \in X = \{\mathbf{x}_i\}_{i=1}^{d_f}$, $\btheta \in \Theta = \{\btheta^i\}_{i=1}^{N_f}$\\
    $\Bar{N}$ & In practice, we use $N_g = N_f = \Bar{N}$\\ 
    \hline    
\end{tabularx}
    \caption{Symbols and notations used in numerical experiments.}
    \label{tab:symbols2}
\end{table}

%we take the prior of the parameter to be the  uniform distribution on $[-1,1]$.

%\begin{enumerate}
%\item Introduce the general PDE (in terms of space and parameter dimensions)
%  \item One dimensional PDE (as we have already done)
%  \begin{itemize}
%      \item Comparison between the different posteriors (mean-based and different space correlations)
%      \item is there any benefit in emulating the log likelihood directly
%  \end{itemize}
%  \item Maybe a higher spatial dimension PDE?
%  \item Doing the integral observation operator for the one dimensional model
%\end{enumerate}

 %, with several different dimensions of unknown parameters. We first introduce the form of PDE in a general form. Let $D \subseteq \mathbb{R}^{\mathbf{d}_\mathbf{x}}$ denote the domain of the PDE, and its boundary is denoted by $\partial D$. Then we have 
%\begin{align} \label{eq:ODE_gen}
%   \mathcal{L}^{\btheta} u(\mathbf{x}) &= f \qquad \text{in } D\\
%   f(\mathbf{x}) &= g \qquad \text{on } \partial D,
%\end{align}
%where $\mathcal{L}^{\btheta}$ is the differential operator with unknown parameters $\btheta\in \mathbb{R}^{\mathbf{d}_{\btheta}}$. When $\mathbf{d}_{\btheta}>1$, 
%When the solution of the PDE cannot be computed analytically, we solve it using the finite element solver Firedrake with small enough mesh size, such that the finite element error is negligible and we treat these results as accurate. {\color{blue} not sure what you mean by the last statement - do you mean you have chosen the mesh size $h$ small enough so that the finite element error is negligible?}

\subsection{Examples in one spatial dimension}

\subsubsection{Constant diffusion coefficient}

We consider the following PDE in one spatial dimension
\begin{align} \label{eq:example0}
    -\frac{\mathrm{d}}{\mathrm{d}x}\left(e^{\theta}\frac{\mathrm{d}{u}(x)}{\mathrm{d}x}  \right) &= 1, \qquad x \in (0,1), \,\theta \in [-1,1], \\
    u(0) = 0,
    u(1) &= 0. \nonumber
\end{align} 
 %with boundary conditions ${g}(0) = {g}(1) = 0$.  
 In this case the dimension of the parameter space is $d_{\btheta}=1$, and the solution is available in closed form. More precisely, we have 
 \begin{equation*}
     u(x) = \frac{(x - x^2)}{2e^{\theta}}. 
 \end{equation*}
% {\color{green} why not move the minus sign into the numerator: $u(x) = \frac{(x - x^2)}{2e^{\theta}}$? this makes it more obvious that $u$ is always positive}
Given this explicit solution and the low dimension of the 
parameter space, it is possible to calculate the true and 
approximate posteriors without having to resort to Markov Chain 
Monte Carlo sampling. 
We now generate our observations $\mathbf{y}$ according to equation \eqref{equ:ivp} for the value of ${\btheta}^{\dagger} = 0.314$  for a varying number of spatial points $\dz$ (equally spaced in $[0,1]$) and for noise level $\sigma^2_{\eta} = 10^{-5}$. As we can see in Figure \ref{fig:true_post_dy} as we increase $\dz$ the true posterior $\pi(\theta|\mathbf{y})$ tends to get more and more concentrated around the value of ${\btheta}^{\dagger}$ which is consistent with what the theory would predict by a Bernstein-von-Mises theorem (see e.g. \cite{giordano2020consistency} for related results).

%\begin{thm}[Doob 1949]
%Assume the distribution $P_{\btheta}$ is identifiable in the sense that $\btheta \neq \btheta'$ implies $P_{\btheta} \neq P_{\btheta'}$. Then there exists $\Theta_{*} \subset \Theta$ with $\pi_{0}(\Theta_{*}) = 1$ such that for each $\btheta_{*} \in \Theta_{*}$, if $X = (\mathbf{x}_1,\dots,\mathbf{x}_n)$ are i.i.d. $P_{\btheta_{*}}$, then for all $\epsilon > 0$, we have 
%$$\lim_{n\rightarrow \infty} \mathbb{P}(\btheta \in \mathcal{N}_{\epsilon}(\btheta_{*})|X) = 1$$
%\end{thm}

%\begin{lem}
%Assume the map $\mathcal{G}_X(\btheta)$ is identifiable and we obtain the observation $\bz$ for the value of ${\btheta}^{\dagger}$ and for the noise $\eta$, then we have 
%$$\lim_{\dz\rightarrow \infty} \pi(\btheta \in \mathcal{N}_{\epsilon}(\btheta^{\dagger})|\bz) = 1$$
%\end{lem}
%\begin{proof}
%By rewriting \eqref{eq:post_pde} we have 
%\begin{align}
%    \pi(\btheta|\bz) &\propto \exp\left(-\frac{1}{2}\|\mathcal{G}_X(\btheta)-(\mathcal{G}_X(\btheta^{\dagger})+\eta)\|^2_{\Gamma_{\eta}}\right)\pi_0{\btheta}\\
%   & = \exp\left(-\frac{1}{2}\|\eta_{\btheta} -\mathcal{G}_X(\btheta^{\dagger})\|^2_{\Gamma_{\eta}}\right)\pi_0{\btheta},
%\end{align}
%where $\eta_{\btheta} \sim \mathcal{N}(\mathcal{G}_X(\btheta),\Gamma_{\eta})$. Since $\mathcal{G}_X(\btheta)$ is identifiable, we have the distribution $P_{\btheta}$ associated with the $\eta_{\btheta}$ is also identifiable. Then by Doob's theorem we finish the proof.
%\end{proof}

\begin{figure}[!ht]
    \centering
    \includegraphics[width=0.5\textwidth]{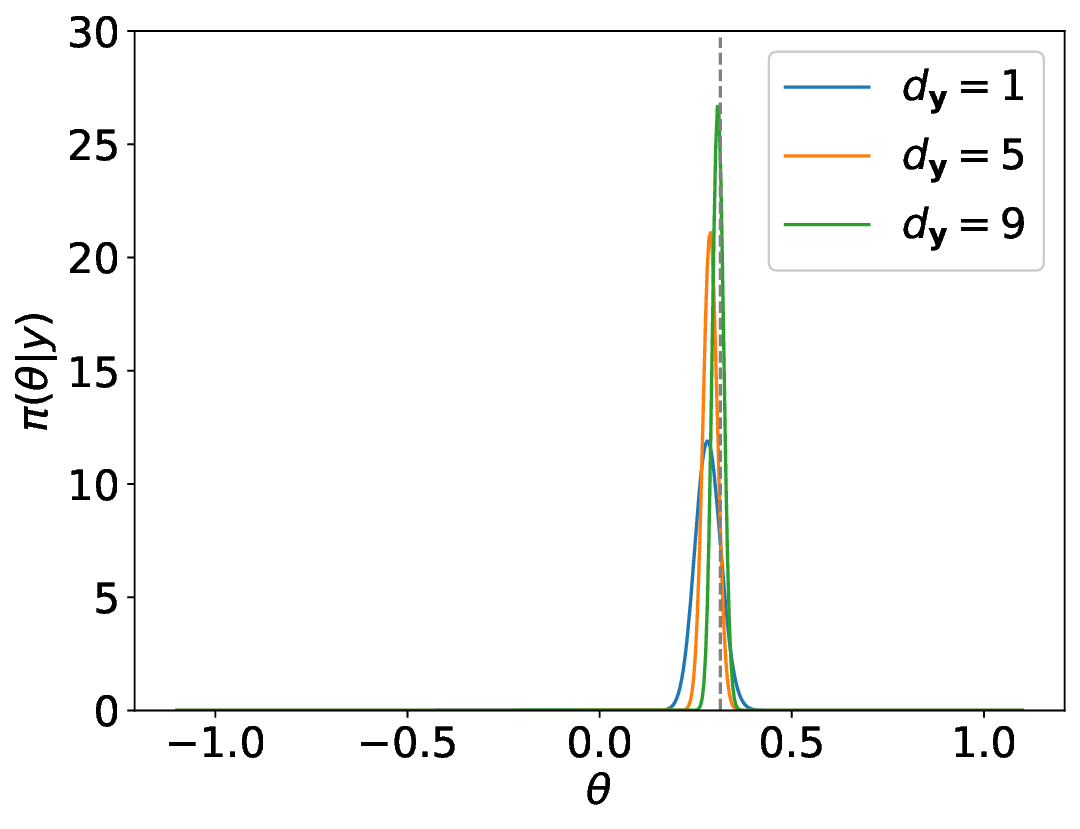}
    \caption{True posterior with different ${d}_{\mathbf{y}}$}
    \label{fig:true_post_dy}
\end{figure}

We now turn our attention to the different approximate posteriors discussed in Section 2.3 obtained for different Gaussian priors (independent, spatially correlated, and PDE-constrained). 

\paragraph{Baseline model:}
In the case of the simplest emulator with independent entries, we illustrate in Figure \ref{fig:basic0}, how the mean-based posterior $\pi^{N,\mathcal{G}_{X}}_{\mathrm{mean}}(\theta|\mathbf{y})$   and the marginal posterior $\pi^{N,\mathcal{G}_{X}}_{\mathrm{marginal}}(\theta|\mathbf{y})$ behave as a function of the number of training points $N$ (here $d_{\mathbf{y}}=5$). The location of the training points is chosen from the Halton sequence \cite{niederreiter1992random}.  Now, when comparing Figure \ref{fig:basic0}(a) and \ref{fig:basic0}(b) we see that the marginal posterior  is more spread than the mean-based posterior. This is due to the variance inflation associated with the marginal posterior which reflects better the  uncertainty of the emulator. For example, in the case $N=1$ the mean-based posterior has negligible posterior probability mass near $\btheta^{\dagger}$, while due to the variance inflation this is not the case for the marginal-based posterior. Furthermore, in Figure \ref{fig:basic0}(c) we plot the Hellinger distance between the approximate posteriors and the true posterior as a function of the number of training points $N$. As we can see the error for the marginal-based posterior is smaller than the error for the mean-based posterior for small $N$ while the two errors behave in the same way as $N$ increases. This can be further understood by Figure \ref{fig:basic0}(d) where we plot the average variance of our emulator for different values of $N$ and see that the value of $N$ for which the error between the two posteriors is equal corresponds to the value of $N$ for which the average variance of the emulator is of the same order as the variance of the observational noise $\sigma^{2}_{\eta}$.

\begin{figure}
\centering
\begin{subfigure}{.45\textwidth}
  \centering
  \includegraphics[width=\linewidth]{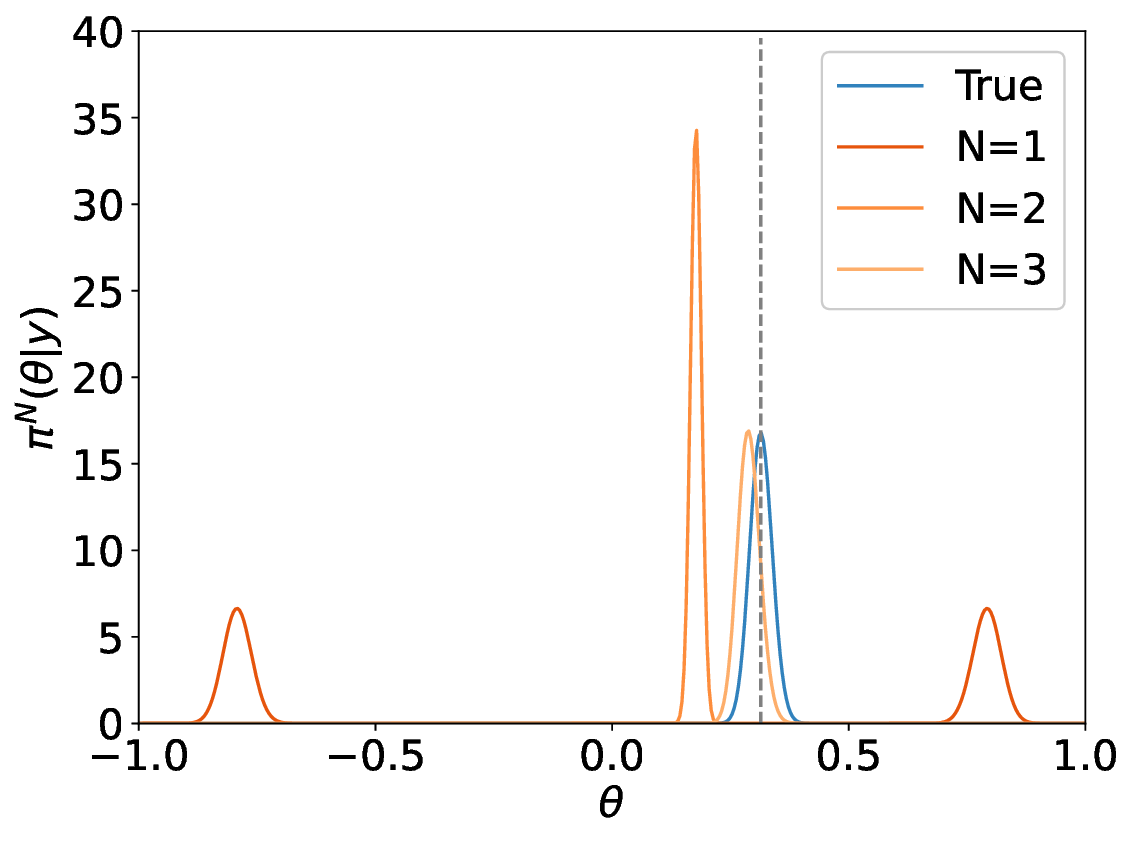}
  \caption{$\pi^{N,\mathcal{G}_{X}}_{\mathrm{mean}}(\theta|\mathbf{y})$}
  \label{fig:exp0_basic_mean}
\end{subfigure}
\begin{subfigure}{.45\textwidth}
  \centering
  \includegraphics[width=\linewidth]{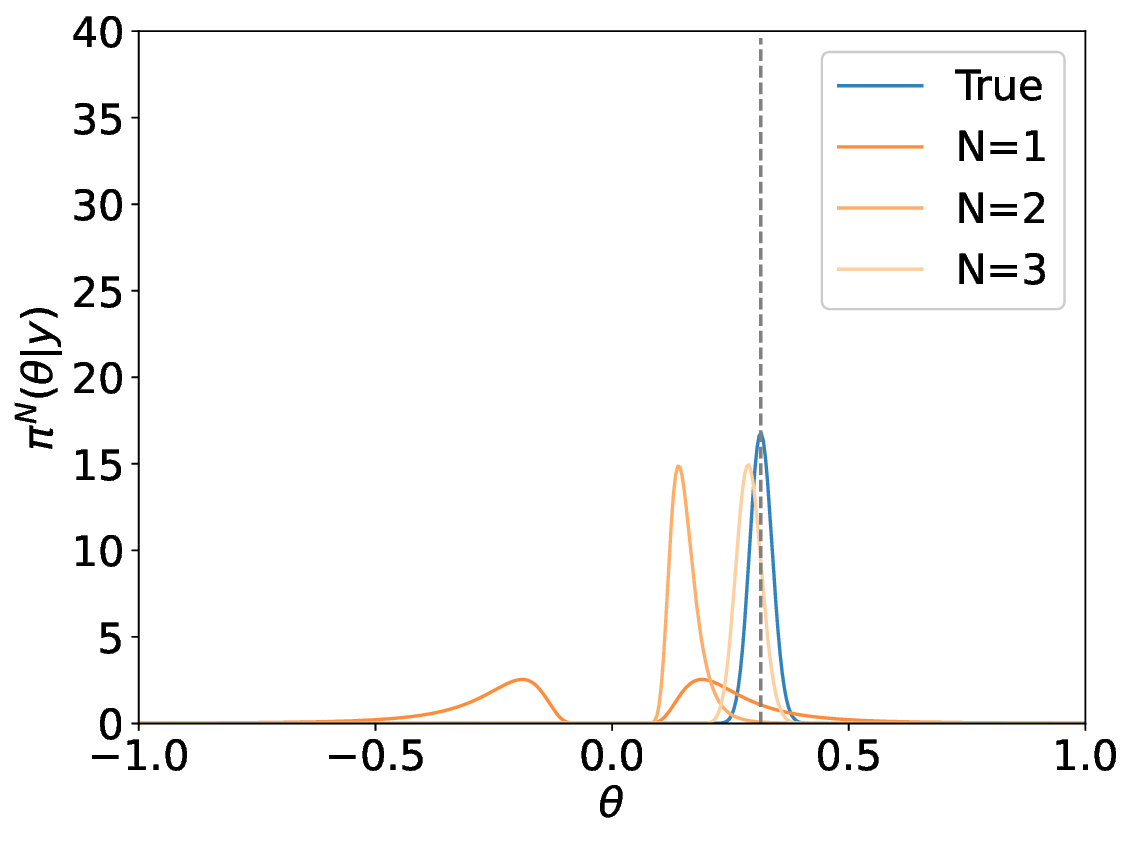}
  \caption{$\pi^{N,\mathcal{G}_{X}}_{\mathrm{marginal}}(\theta|\mathbf{y})$}
  \label{fig:exp0_basic_infl}
\end{subfigure}
\begin{subfigure}{.45\textwidth}
  \centering
  \includegraphics[width=\linewidth]{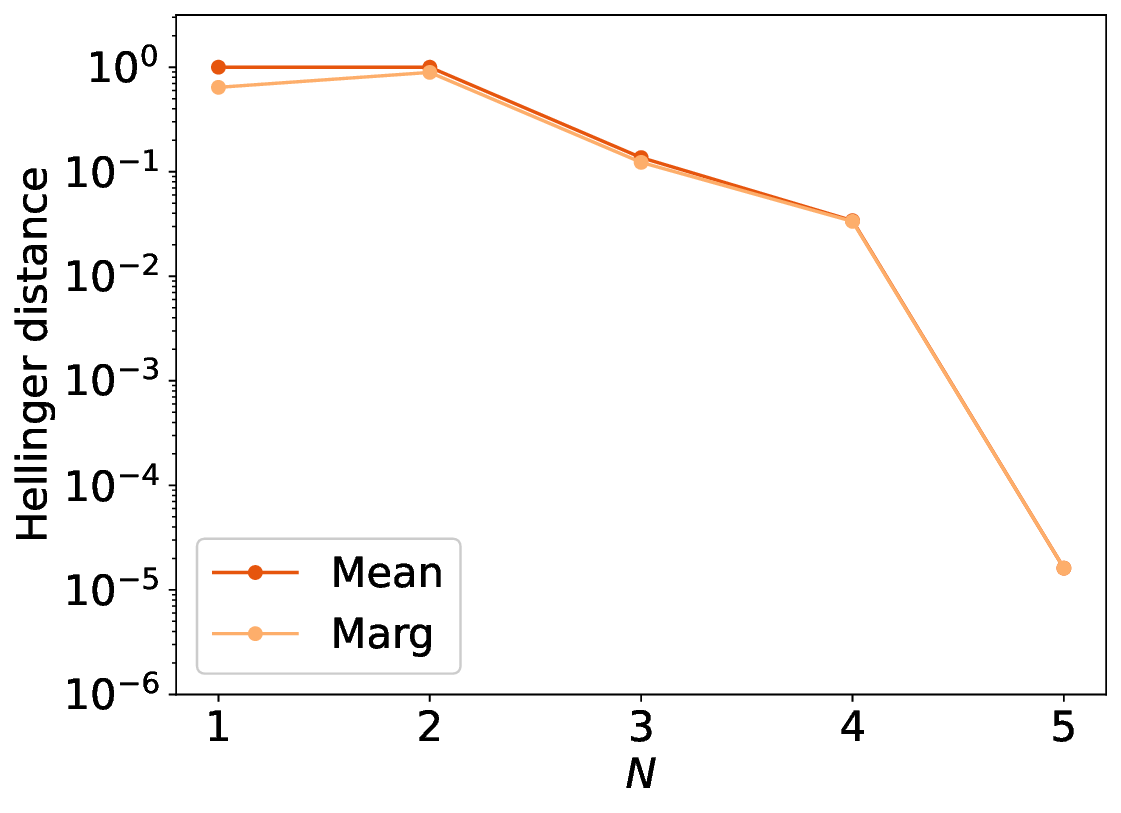}
  \caption{Hellinger distance}
  \label{fig:exp0_basic_hell}
\end{subfigure}
\begin{subfigure}{.45\textwidth}
  \centering
  \includegraphics[width=\linewidth]{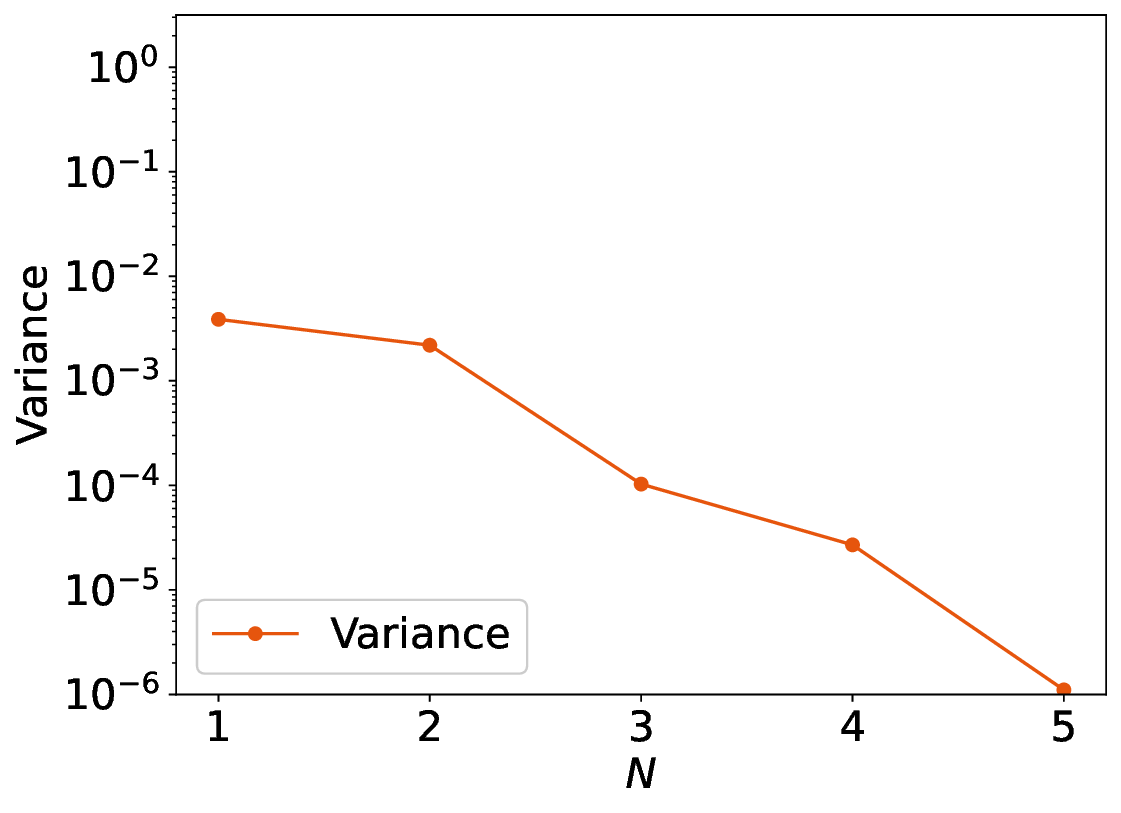}
  \caption{Average variance of the emulator}
  \label{fig:exp0_basic_var}
\end{subfigure}
\caption{ (\ref{fig:exp0_basic_mean}) Baseline model mean-based posterior with different $N$. (\ref{fig:exp0_basic_infl}) Baseline model marginal posterior with different $N$. (\ref{fig:exp0_basic_hell}) Hellinger distance between approximated posteriors and true posteriors when N increases. \eqref{fig:exp0_basic_var} Average predictive variance of the Gaussian process emulator as $N$ increases. $\mathcal{G}_X$ is the discretised solution $u$ in \eqref{eq:example0}.}
\label{fig:basic0}
\end{figure}

\paragraph{Spatially correlated model:} As discussed in Section \ref{subsec:cor_PDE} the introduction of spatial correlation doesn't change the predictive mean of the Gaussian  processes. We hence now compare in Figure \ref{fig:spc_gp} the two different marginal posteriors $\pi^{N,\mathcal{G}_X}_{\mathrm{marginal}}$, $\pi^{N,\mathcal{G}_X,s}_{\mathrm{marginal}}$, where the latter includes spatial correlation. We again choose  $\dz=5$. In particular, as we can see in Figure \ref{fig:spc_gp}(a) (here $N=2$),  introducing spatial correlation seems to improve the accuracy of the approximate posterior and place more mass near $\theta^\dagger$. The fact that the spatially correlated model has an increased variance at $N=2$ (see Figure \ref{fig:exp0_spc_var}) leads to similar behavior as in Figure \ref{fig:basic0} with $\pi^{N,\mathcal{G}_X,s}_{\mathrm{marginal}}$ being more spread than $\pi^{N,\mathcal{G}_X}_{\mathrm{marginal}}$. Furthermore, as we can see in Figure \ref{fig:spc_gp}(b) as we increase the number of training points for our Gaussian process the Hellinger distance between the true posterior and $\pi^{N,\mathcal{G}_X,s}_{\mathrm{marginal}}$ is smaller than the one of the baseline model. %This can be understood further by Figure \ref{fig:spc_gp}(c) where we see that as the number of training points increases the average variance of the emulator that uses spatial correlation is two orders of magnitude less than the one of the basic emulator. 

\paragraph{PDE-constrained model:} We now compare the behaviour of the PDE-constrained model with the other two models, both for mean-based approximate posterior, as well as for marginal posterior (again here $\dz=5$). In particular, as we can see in Figures \ref{fig:pde_model0}(a) and  \ref{fig:pde_model0}(b) for $N=2$, $\pi^{N,\mathcal{G}_X,\mathrm{PDE}}_{\mathrm{mean}}$ and $\pi^{N,\mathcal{G}_X,\mathrm{PDE}}_{\mathrm{marginal}}$ are indistinguishable from the true posterior when using $\Bar{N}=10$, $d_{f}=5$ showing much better approximation properties than the other two models. This is consistent with what we observe in terms of Hellinger distance, since both $\pi^{N,\mathcal{G}_X,\mathrm{PDE}}_{\mathrm{mean}}$ and $\pi^{N,\mathcal{G}_X,\mathrm{PDE}}_{\mathrm{marginal}}$ have similar errors over a different range of values for $N_{f}$. It is also worth noting that  when comparing with the Hellinger distance from Figures Figures \ref{fig:basic0}(c) and \ref{fig:spc_gp}(c) we see that the PDE-based model achieves the same order of error with only using half of the training points ($N=2$ instead of $N=4$).   Furthermore, as we can see in Figure \ref{fig:pde_model0}(d) the average variance of the PDE-constrained emulator converges to zero very fast as the number of extra training points for $f$ increases, implying that at least in this simple example adding the PDE knowledge leads to an extremely good approximation of the forward map.

\begin{figure}
\centering
\begin{subfigure}{.5\textwidth}
  \centering
  \includegraphics[width=\linewidth]{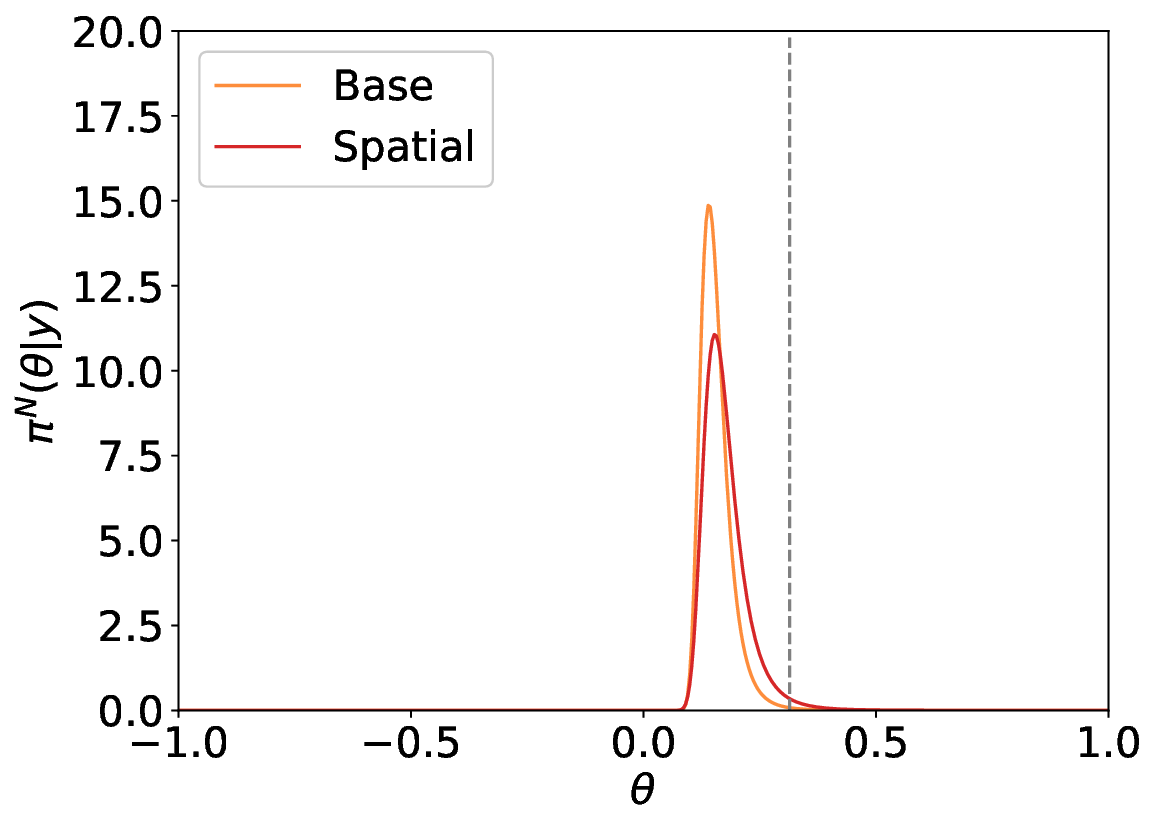}
  \caption{$\pi^{N,\mathcal{G}_{X}}_{\mathrm{marginal}}(\theta|\mathbf{y})$, $\pi^{N,\mathcal{G}_{X},s}_{\mathrm{marginal}}(\theta|\mathbf{y})$ ($N = 2$)}
  \label{fig:exp0_spc_infl}
\end{subfigure}%
\\
\begin{subfigure}{0.45\textwidth}
  \centering
  \includegraphics[width=\linewidth]{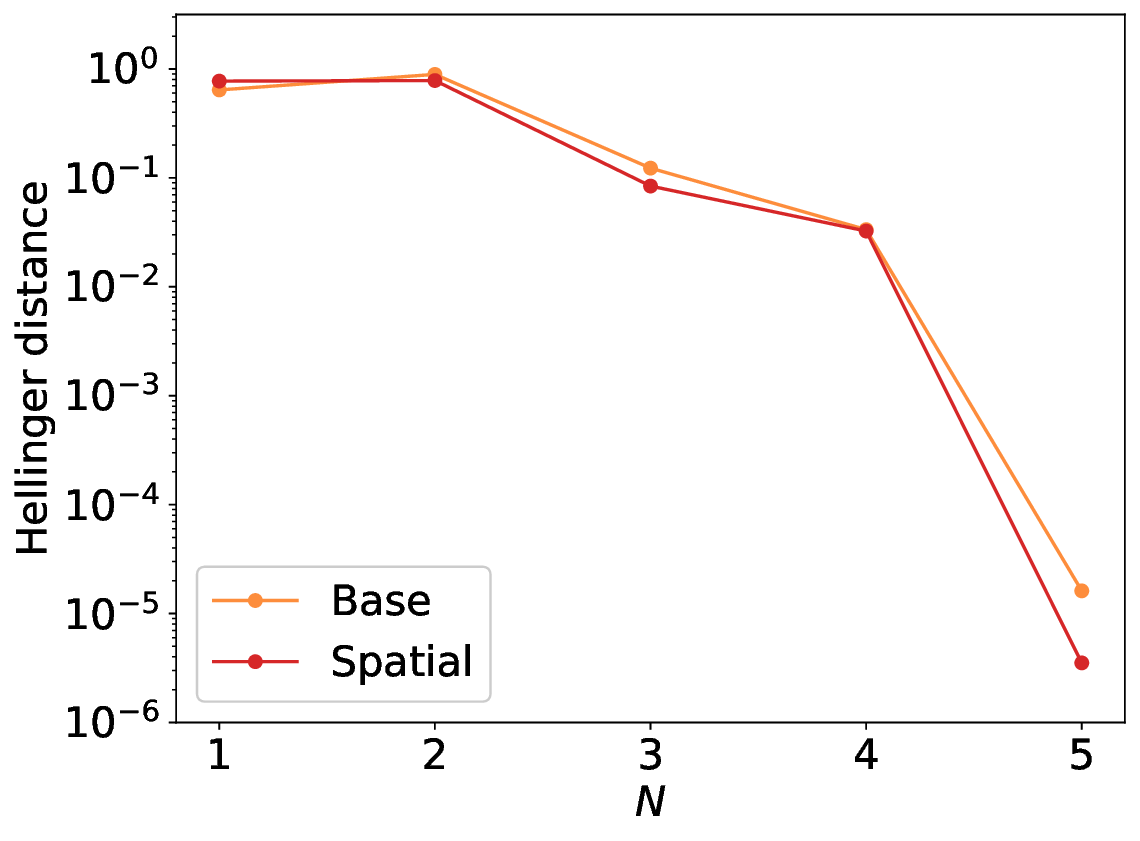}
  \caption{Hellinger distance}
  \label{fig:exp0_spc_hell}
\end{subfigure}
\begin{subfigure}{.45\textwidth}
  \centering
  \includegraphics[width=\linewidth]{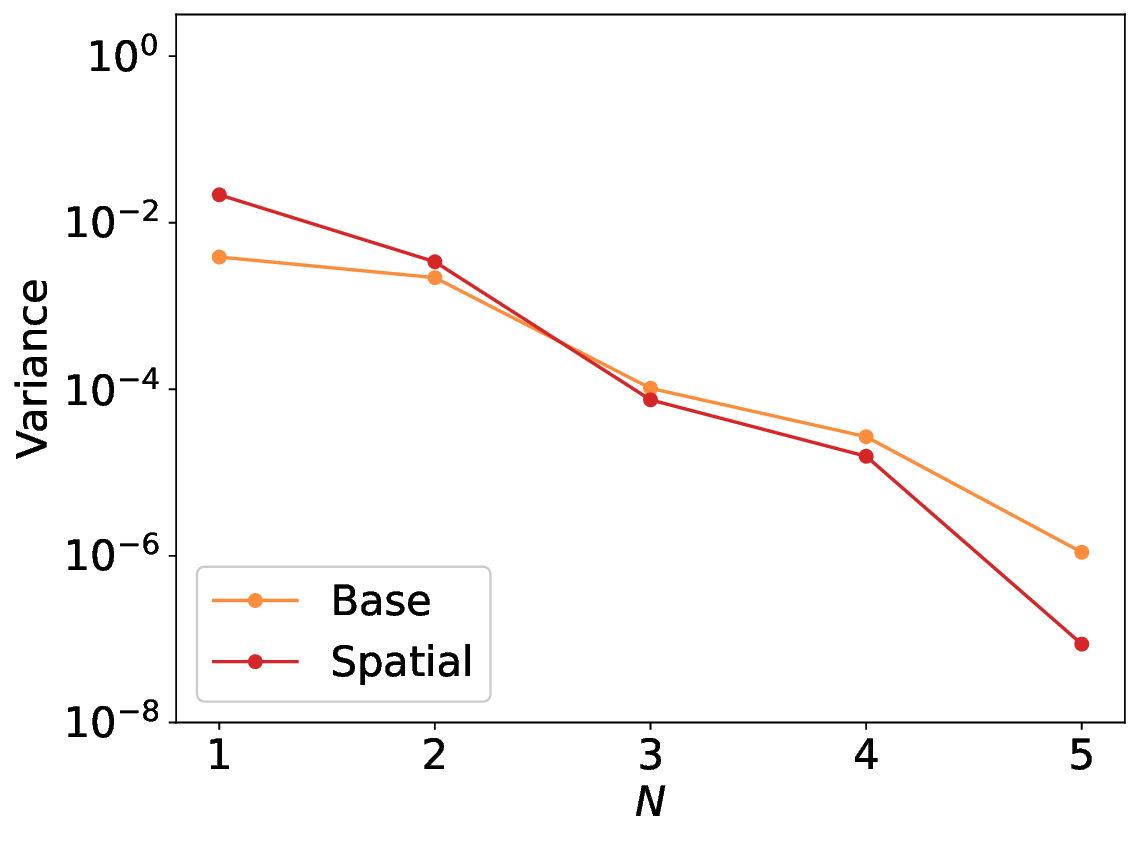}
  \caption{Average variance of the emulator}
  \label{fig:exp0_spc_var}
\end{subfigure}
\caption{  (\ref{fig:exp0_spc_infl}) Baseline and spatially correlated model marginal posterior for $N=2$. (\ref{fig:exp0_spc_hell}) Hellinger distance between approximated posteriors and true posterior as $N$ increases. \eqref{fig:exp0_spc_var} Average predictive variance of the Gaussian process emulator as $N$ increases. $\mathcal{G}_X$ is the discretised solution $u$ in \eqref{eq:example0}.}
\label{fig:spc_gp}
\end{figure}

%\paragraph{PDE-constrained model}
%We compare the performance of PDE-constrained model with other two models for both types of approximated posteriors. We set the same training set with $N=2$ for three models. For PDE constraints, we enforce the boundary condition and have additional $d_f = 3$ training points which locate equally spaced in $[0,1]$ from the sourcing function. The physical-informed prior improves the approximation accuracy significantly.

\begin{figure}
\centering
\begin{subfigure}{.45\textwidth}
  \centering
  \includegraphics[width=\linewidth]{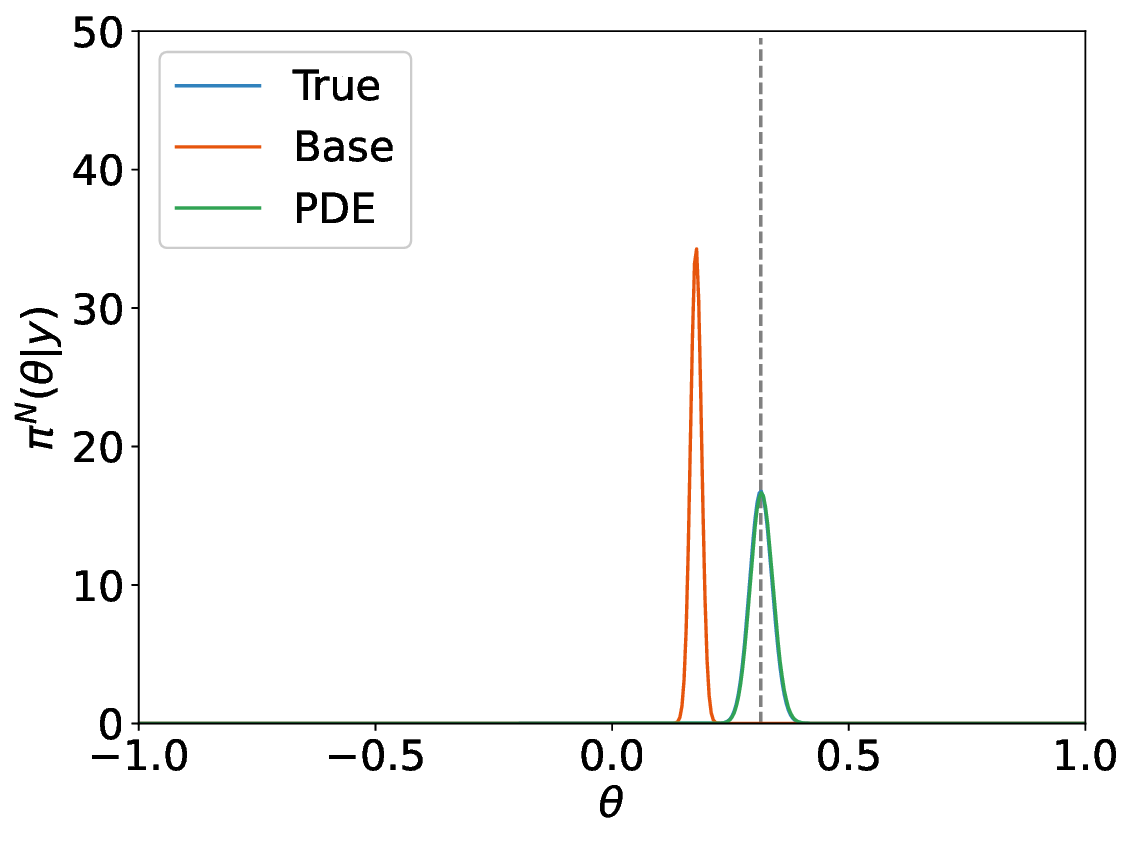}
  \caption{$\pi^{N,\mathcal{G}_{X}}_{\mathrm{mean}}(\theta|\mathbf{y})$}
  \label{fig:exp0_pde_mean}
\end{subfigure}%
\begin{subfigure}{.45\textwidth}
  \centering
  \includegraphics[width=\linewidth]{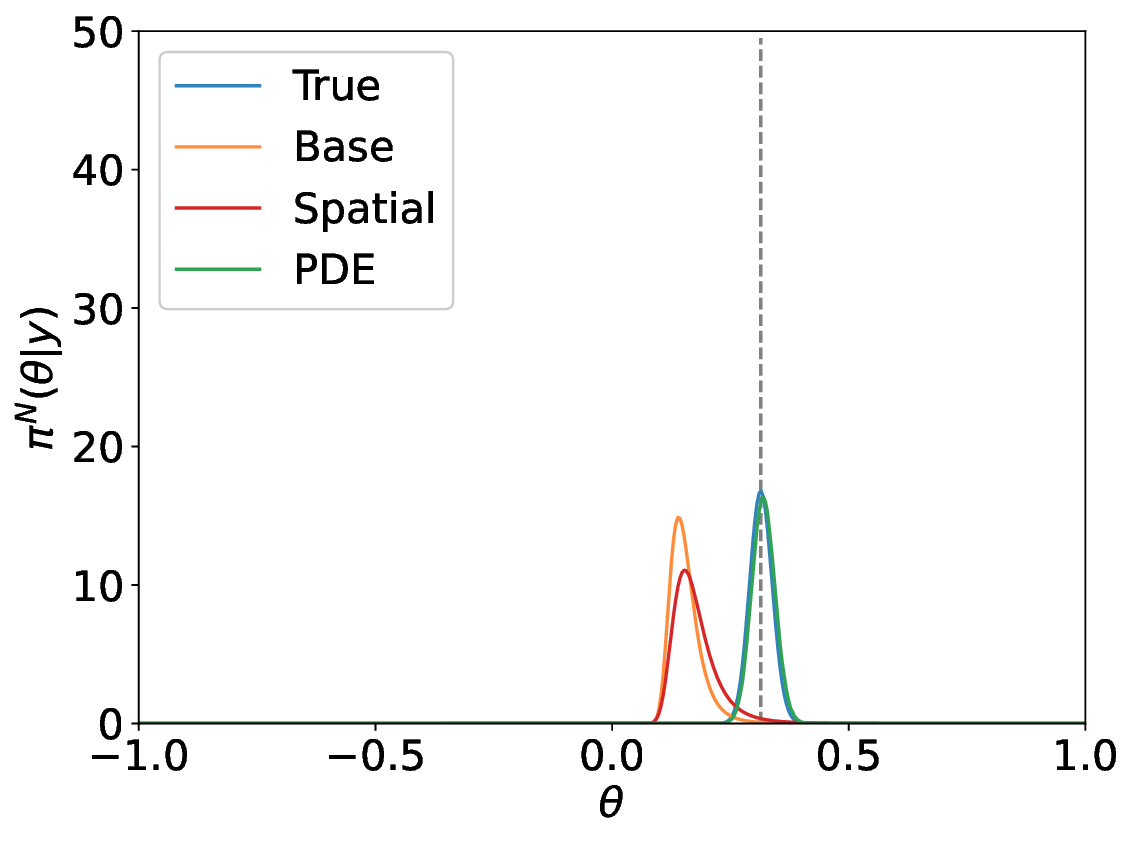}
  \caption{$\pi^{N,\mathcal{G}_{X}}_{\mathrm{marginal}}(\theta|\mathbf{y})$}
  \label{fig:exp0_pde_infl}
\end{subfigure}
\begin{subfigure}{.45\textwidth}
  \centering
  \includegraphics[width=\linewidth]{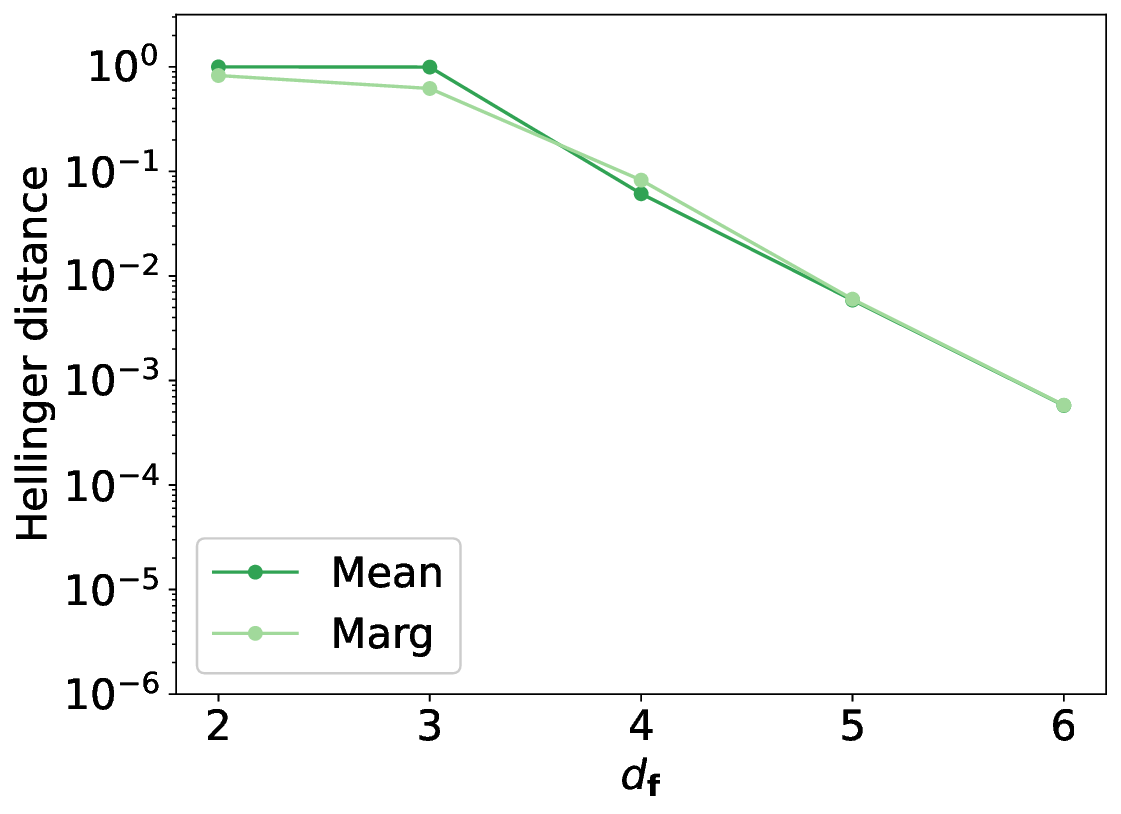}
  \caption{Hellinger distance}
  \label{fig:exp0_pde_hell}
\end{subfigure}
\begin{subfigure}{.45\textwidth}
  \centering
  \includegraphics[width=\linewidth]{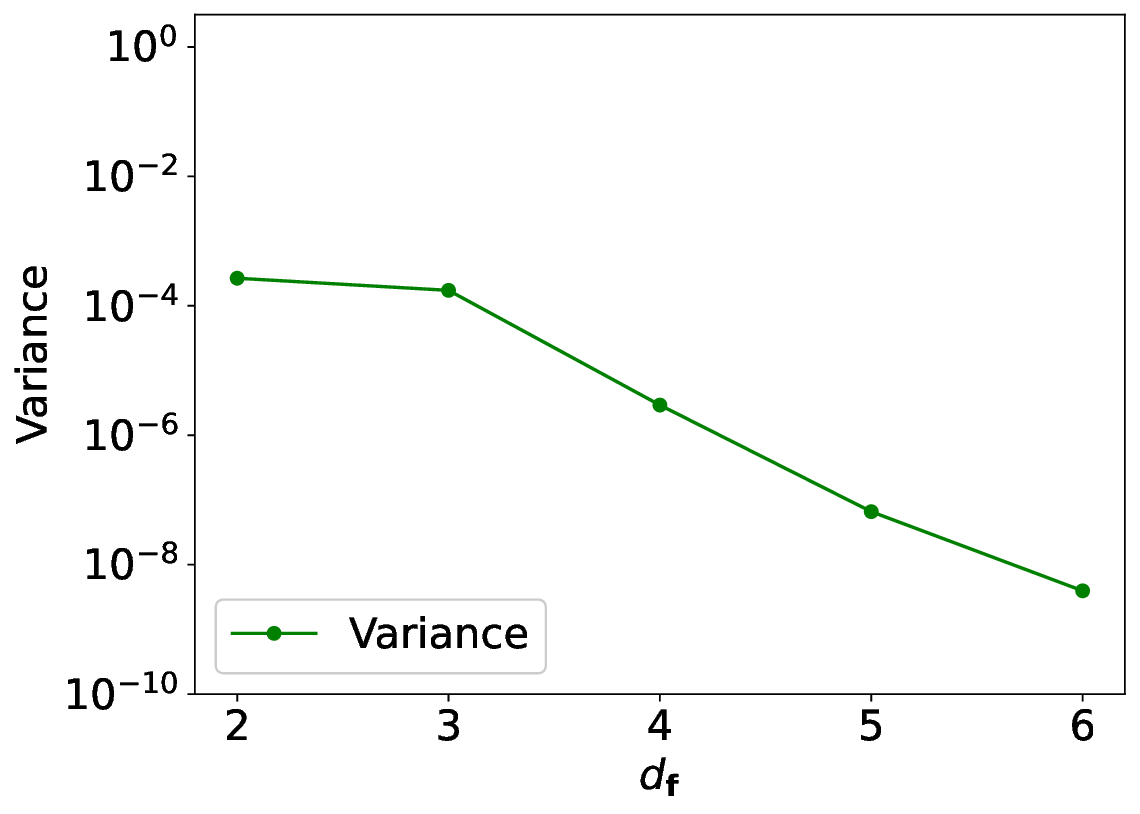}
  \caption{Average variance of the emulator}
  \label{fig:exp0_pde_var}
\end{subfigure}
\caption{Comparison of different models when $N=2$, for PDE model $d_f = 5$. (\ref{fig:exp0_pde_mean}) Mean-based posteriors (\ref{fig:exp0_pde_infl}) Marginal posteriors (\ref{fig:exp0_pde_hell}) Hellinger distance between approximated posteriors and true posterior as $d_f$ increases. (\ref{fig:exp0_pde_var}) Average predictive variance of emulator as $d_f$ increases. $\mathcal{G}_X$ is the discretised solution $u$ in \eqref{eq:example0}.}
\label{fig:pde_model0}
\end{figure}

\subsubsection{Two dimensional piece-wise constant diffusion coefficient}
\label{sec:exp1}
We now consider a slightly more general problem than \eqref{eq:example0}. In particular,   we consider the same elliptic equation as  in \eqref{eq:example0} but use a 2-dimensional piece-wise constant diffusion coefficient. In particular, we now have the following equation
\begin{align} \label{eq:example1}
    -\frac{\mathrm{d}}{\mathrm{d}x}(\exp(\kappa(x, \btheta))\frac{\mathrm{d}}{\mathrm{d}x}  {u}(x)) &= 4x, \qquad x \in (0,1), \,\btheta \in [-1,1], \\
    u(0) &=0, \nonumber \\
    u(1) &=2, \nonumber
\end{align}
where $\kappa$ is defined as piece-wise constant over four equally spaced intervals. More precisely, we consider
\begin{equation}\label{eq:pc}
 \kappa(x,\btheta)=
 \left\{\begin{array}{lr}
        0, & \text{for } x \in [0,\frac{1}{4})\\
        \\
        \theta_{1}, & \text{for } x \in [\frac{1}{4},\frac{1}{2})\\
        \\
       \theta_{2}, & \text{for } x \in [\frac{1}{2},\frac{3}{4})\\
       \\
       1  & \text{for } x \in [\frac{3}{4},1]
        \end{array}\right.
\end{equation}
Unlike equation \eqref{eq:example0}, it is not possible to obtain an analytic 
solution for \eqref{eq:example1} so we use instead Firedrake to obtain its 
solution.

For the PDE constrained model, we first test the effectiveness of additional training data $g(\Theta_g, X_g)$ and $f(\Theta_f, X_f)$. We let the size of point set $\Theta_g$ and $\Theta_g$ to be the same and denoted by $\Bar{N}$. In Figure \ref{fig:exp1_dfnf}, we test the impact of $d_{f}$ and $\Bar{N}$ to the accuracy of the PDE constrained emulator. We use fixed same hyperparameters for all the models and $d_g = 2$. We see that as $d_{f}$ increases the accuracy of emulators gradually increases. While for $\Bar{N}$, we see that a certain amount of additional point can improve the accuracy, but including more points cannot make further improvement. 

\begin{figure}
\centering
\begin{subfigure}{1.1\textwidth}
  \centering
  \includegraphics[width=\linewidth]{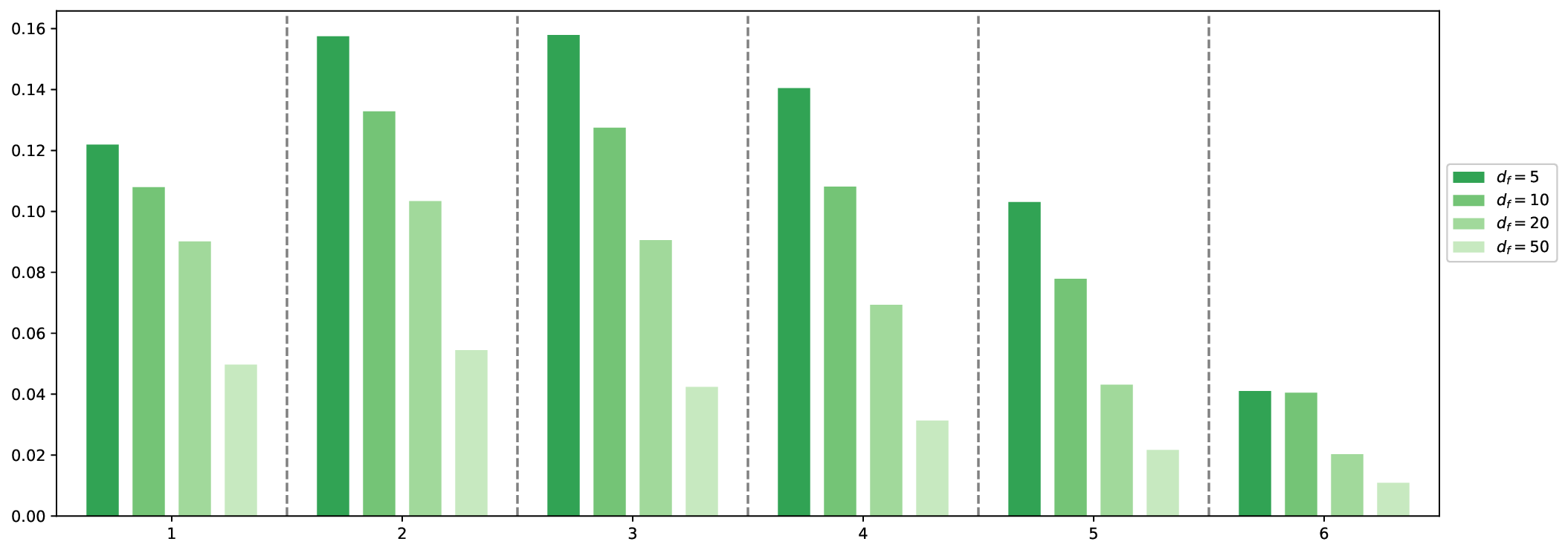}
  \caption{}
  \label{fig:exp1_df}
\end{subfigure}
\begin{subfigure}{1.1\textwidth}
  \centering
  \includegraphics[width=\linewidth]{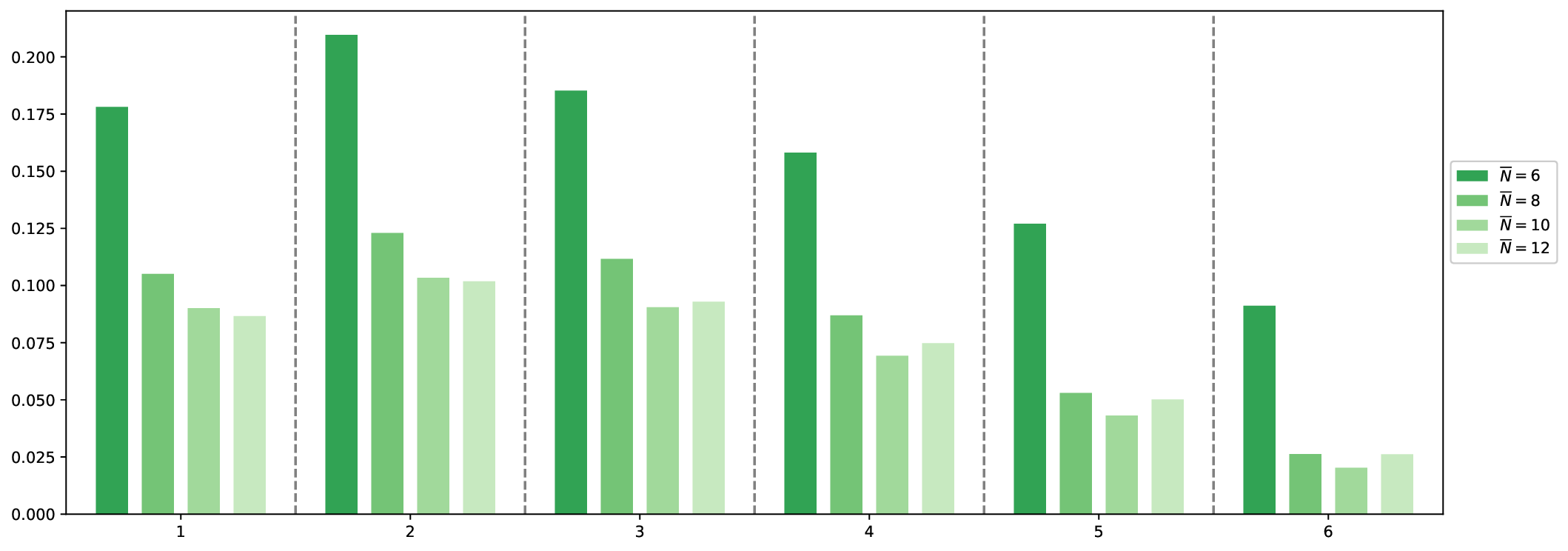}
  \caption{}
  \label{fig:exp1_nf}
\end{subfigure}
\caption{Error between the predictive mean of PDE constrained emulators and the ground truth at observation points ($\btheta = \btheta^{\dagger}$) for different (a) $d_{f}$ ($\Bar{N} = 10$) (b) $\Bar{N}$ ($d_{f} = 20$)}
\label{fig:exp1_dfnf}
\end{figure}

Throughout this numerical experiment, we take the prior of the parameters to be the uniform distribution on $[-1,1]^{2}$, %which in terms of making our MCMC approach a bit simpler is approximated by the $\lambda-$Moreau-Yoshida envelope with the approximation parameter $\lambda = 10^{-3}$. 
 and we generate our data $\bz$ according to equation \eqref{equ:ivp} for the value $\btheta^{\dagger}=[0.098, 0.430]$ for $\dz=6$ (equally spaced points in [0,1]) and for noise level $\sigma^{2}_{\eta}=10^{-4}$. For the baseline and spatially correlated model, we have used $N=4$ training points (chosen to be the first $4$ points in the Halton sequence), while additionally for the PDE-constrained model, we have used $\Bar{N} = 10$ (chosen to be the following $10$ points in the Halton sequence) and $d_{f} = 20$. For the covariance kernels, we choose $k_p$ to be the squared exponential kernel and $k_s$ to be the Mat\`ern kernel with $\nu = \frac{5}{2}$.

Unlike \eqref{eq:example0} we now do not perform exact integration but use the MALA algorithm to obtain our samples. In particular, for all our approximate  posteriors we have used $10^{6}$ samples. In addition, since in this case, we do not have an analytic expression for the solution, we do not have direct access to the true posterior. We circumvent this problem by considering the results obtained by a mean-based approximation with the baseline model for $N=10^{2}$ training points as the ground truth.

\begin{figure}
\centering

\begin{subfigure}{.31\textwidth}
  \centering
  \includegraphics[width=\linewidth]{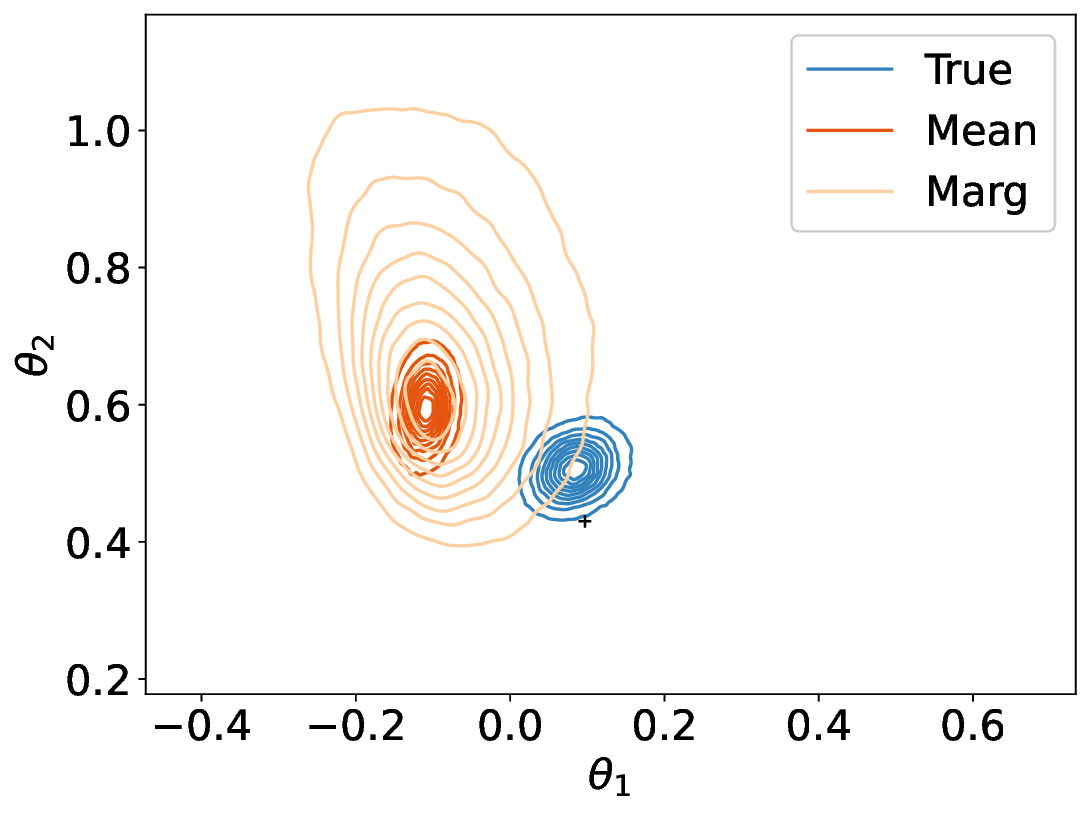}
  \caption{Baseline}
  \end{subfigure}
\begin{subfigure}{.31\textwidth}
  \centering
  \includegraphics[width=\linewidth]{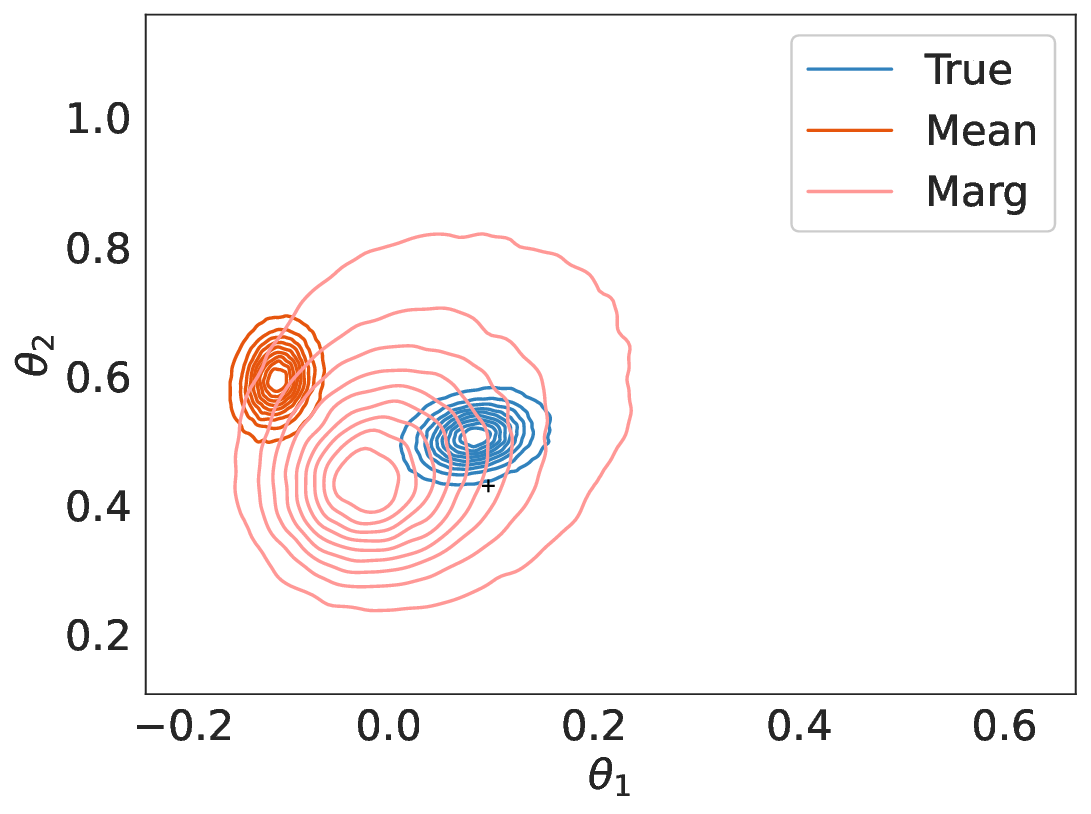}
  \caption{Spatially correlated}
\end{subfigure}
\begin{subfigure}{.31\textwidth}
  \centering
  \includegraphics[width=\linewidth]{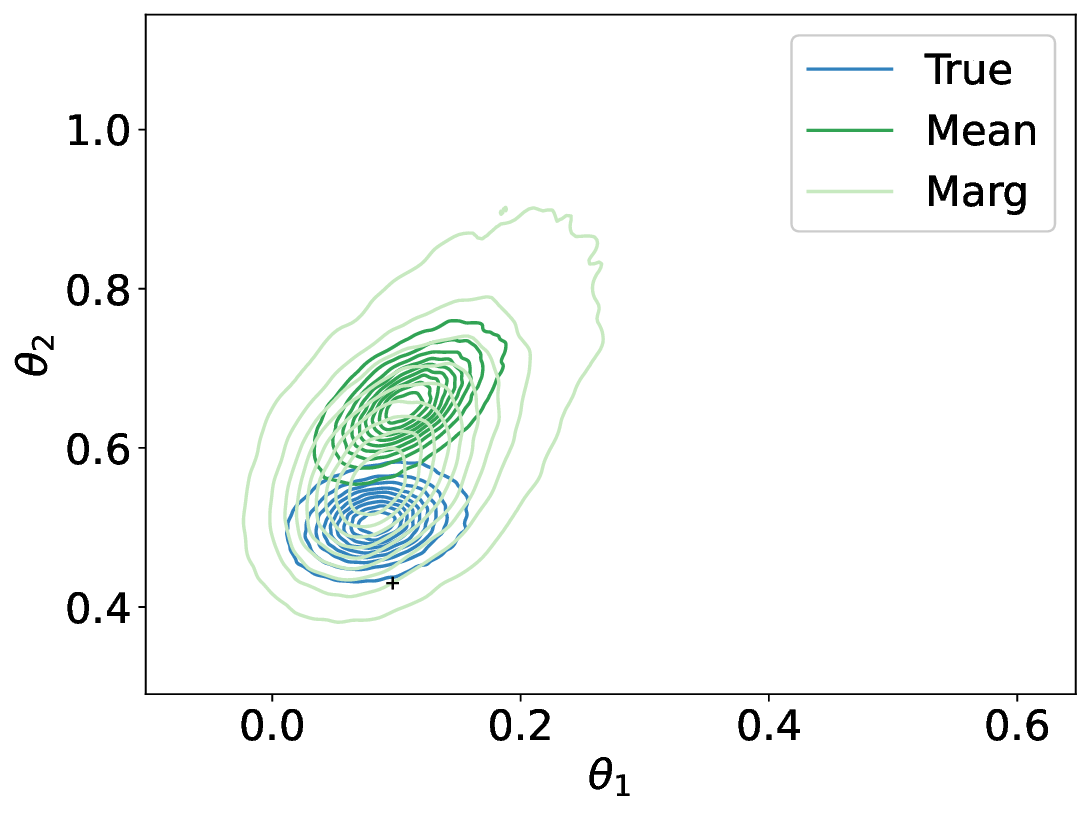}
  \caption{PDE-constrained}
\end{subfigure}
\caption{Contour plots of the approximate mean-based and inflated-based posteriors.  (a) baseline model (b) spatially correlated (c) PDE-constrained. $"+"$ denotes $\btheta^{\dagger}$. $\mathcal{G}_X$ is the discretised solution $u$ in \eqref{eq:example1}. }
\label{fig:model1_all}
\end{figure}

As we can see in Figures \ref{fig:model1_all}(a)-(c) all the mean-based posteriors are failing to put significant posterior mass near the true parameter value $\btheta^{\dagger}$. The situation improves when the uncertainty of the emulator is taken into account as we can see for the marginal-based posteriors. Out of the three different models, the PDE-constrained one seems to be performing best since it is placing the most posterior mass around the true value $\btheta^{\dagger}$. This is further illustrated in Figure \ref{fig:exp1_marg} where we plot the $\theta_{1}$ and $\theta_{2}$ marginals for all the mean-based posterior approximations $\pi^{N,\mathcal{G}_X}_{\mathrm{mean}}$, $\pi^{N,\mathcal{G}_X,s}_{\mathrm{mean}}$, $\pi^{N,\mathcal{G}_X,\mathrm{PDE}}_{\mathrm{mean}}$ and the marginal-based posterior approximations   $\pi^{N,\mathcal{G}_X}_{\mathrm{marginal}}$, $\pi^{N,\mathcal{G}_X,s}_{\mathrm{marginal}}$, $\pi^{N,\mathcal{G}_X,\mathrm{PDE}}_{\mathrm{marginal}}$.  Note that the marginal plot could be misleading the overall performance of the approximations, for example in Figure \ref{fig:exp1_mean2} the baseline model seems to be better than the PDE-constrained model, but from the Figure \ref{fig:model1_all} we know that is not true. When we increase $d_{f}$ from $20$ to $50$, the accuracy of approximation improves. We see that in Figure \ref{fig:exp1_marg2}, the marginal plot of the mean-based approximate  

\begin{figure}
\centering
\begin{subfigure}{.45\textwidth}
  \centering
  \includegraphics[width=\linewidth]{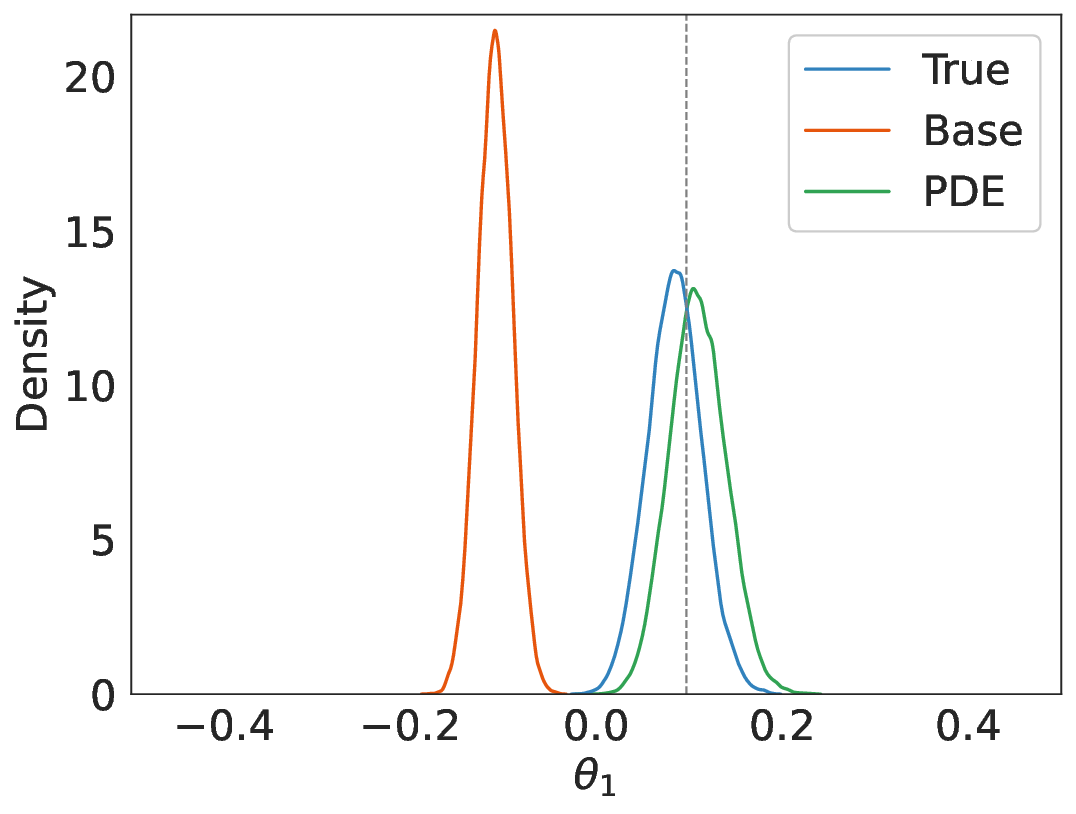}
  \caption{}
  \label{fig:exp1_mean1}
\end{subfigure}%
\begin{subfigure}{.45\textwidth}
  \centering
  \includegraphics[width=\linewidth]{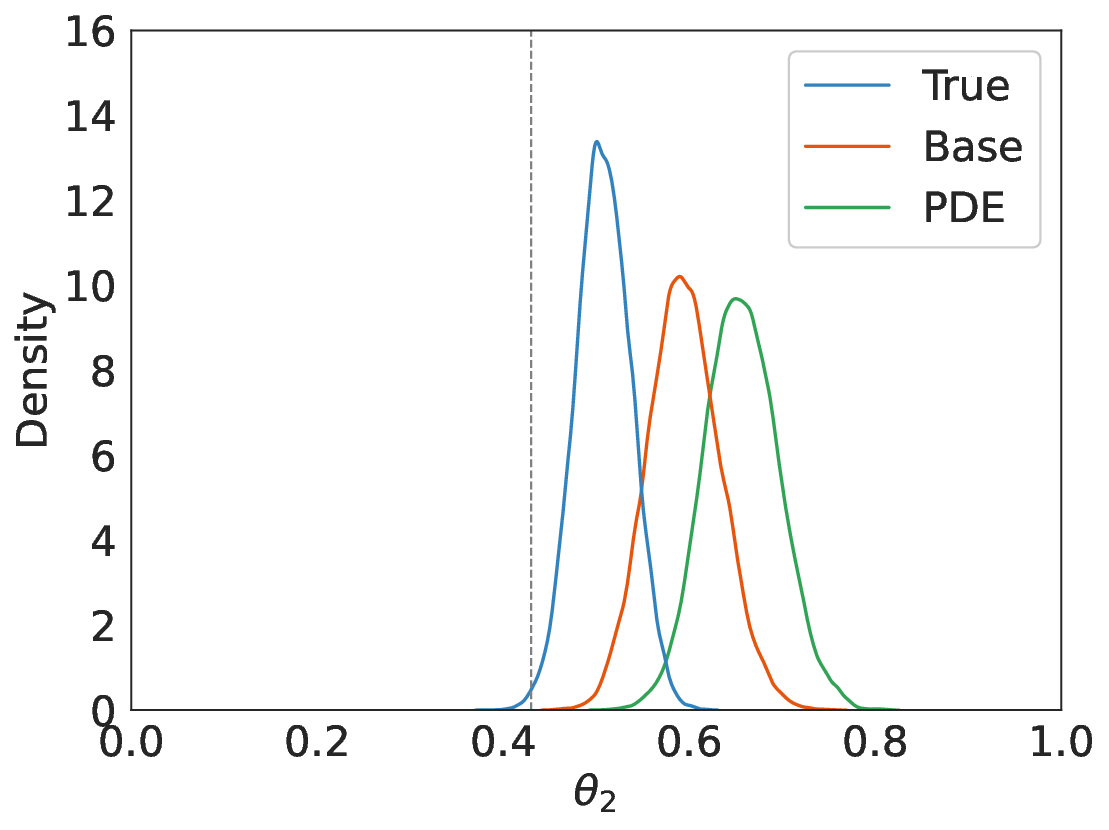}
  \caption{}
  \label{fig:exp1_mean2}
\end{subfigure}
\begin{subfigure}{.45\textwidth}
  \centering
  \includegraphics[width=\linewidth]{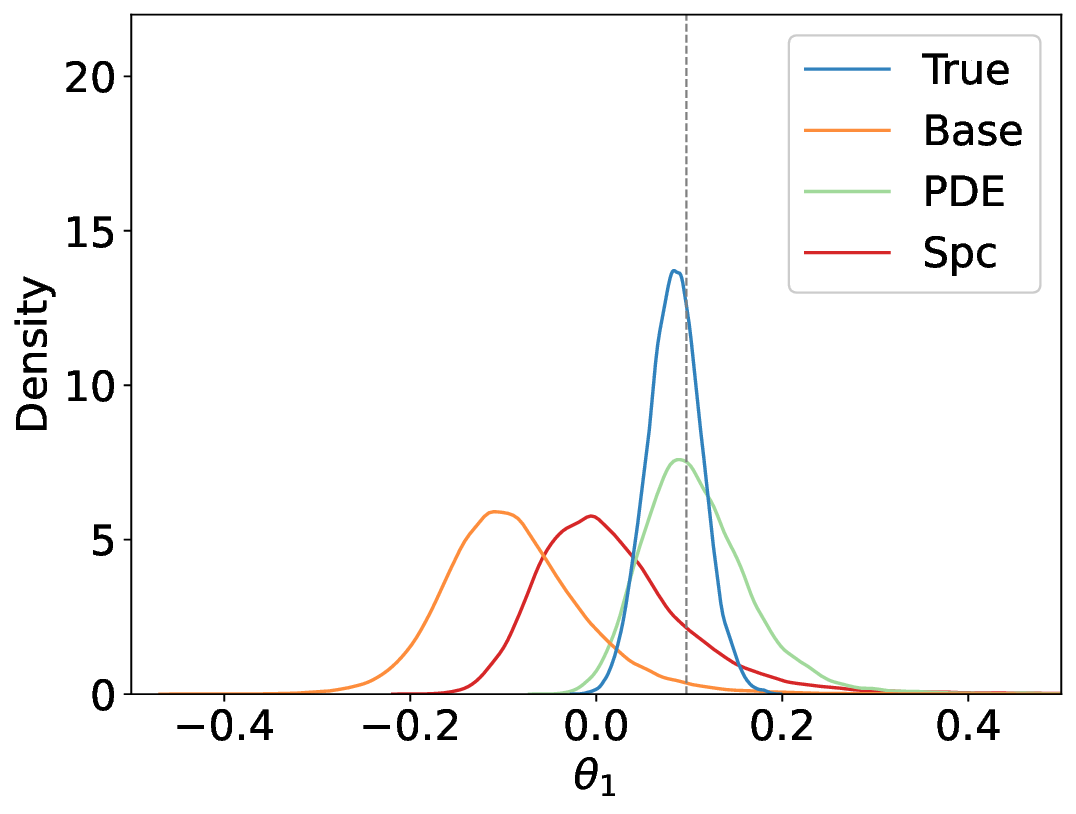}
  \caption{}
  \label{fig:exp1_infl1}
\end{subfigure}
\begin{subfigure}{.45\textwidth}
  \centering
  \includegraphics[width=\linewidth]{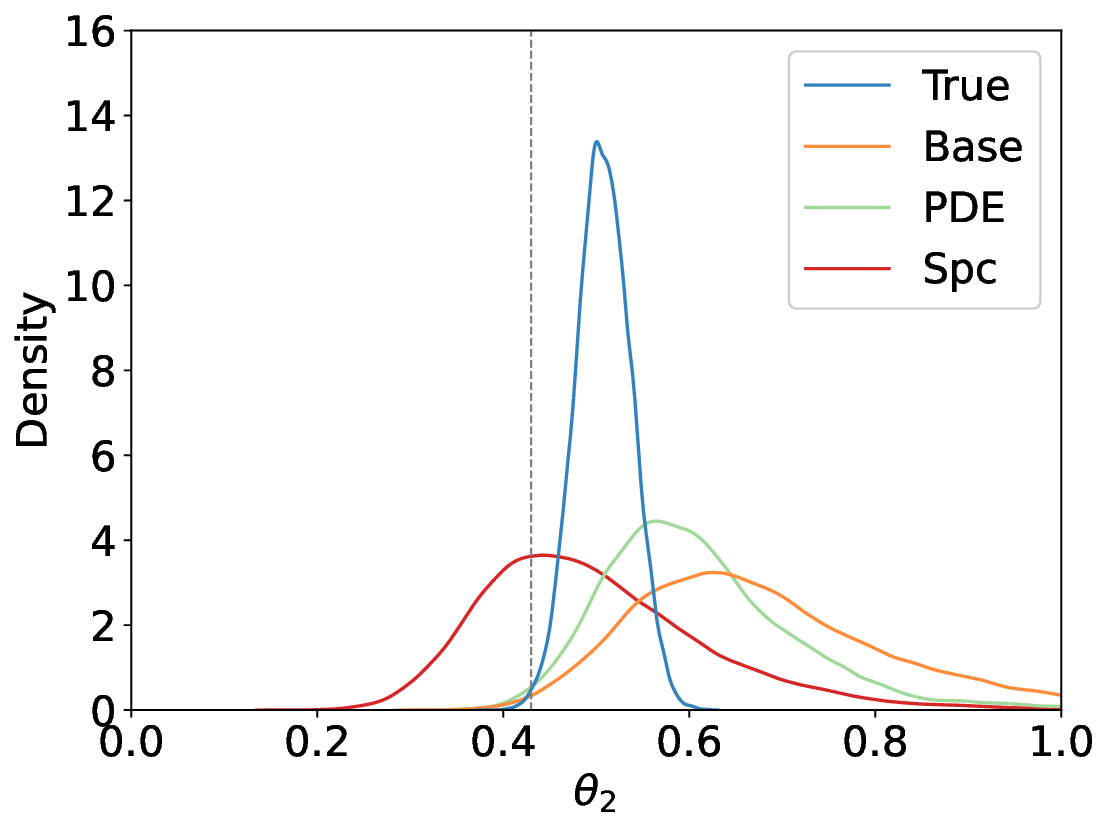}
  \caption{}
  \label{fig:exp1_infl2}
\end{subfigure}
\caption{Comparison of different models' marginal distribution when $N=4$, for PDE model $d_f = 20$ and $\Bar{N} = 20$.  (a) Mean-based approximation $\theta_1$. (b) Mean-based approximation $\theta_2$. (c) Marginal approximation $\theta_1$. (d) Marginal approximation $\theta_2$. $\mathcal{G}_X$ is the discretised solution $u$ in \eqref{eq:example1} with diffusion coefficient \eqref{eq:pc}. }
\label{fig:exp1_marg}
\end{figure}

\begin{figure}
\centering
\begin{subfigure}{.45\textwidth}
  \centering
  \includegraphics[width=\linewidth]{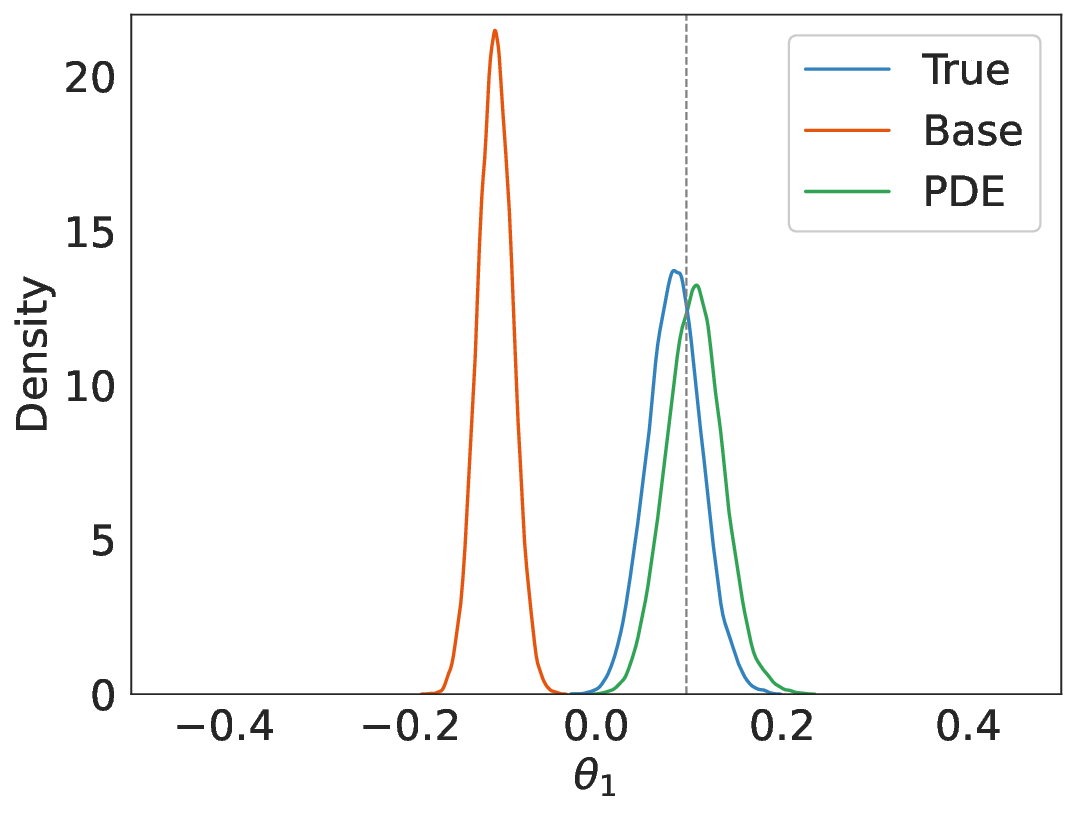}
  \caption{}
  \label{fig:exp1_mean1a}
\end{subfigure}%
\begin{subfigure}{.45\textwidth}
  \centering
  \includegraphics[width=\linewidth]{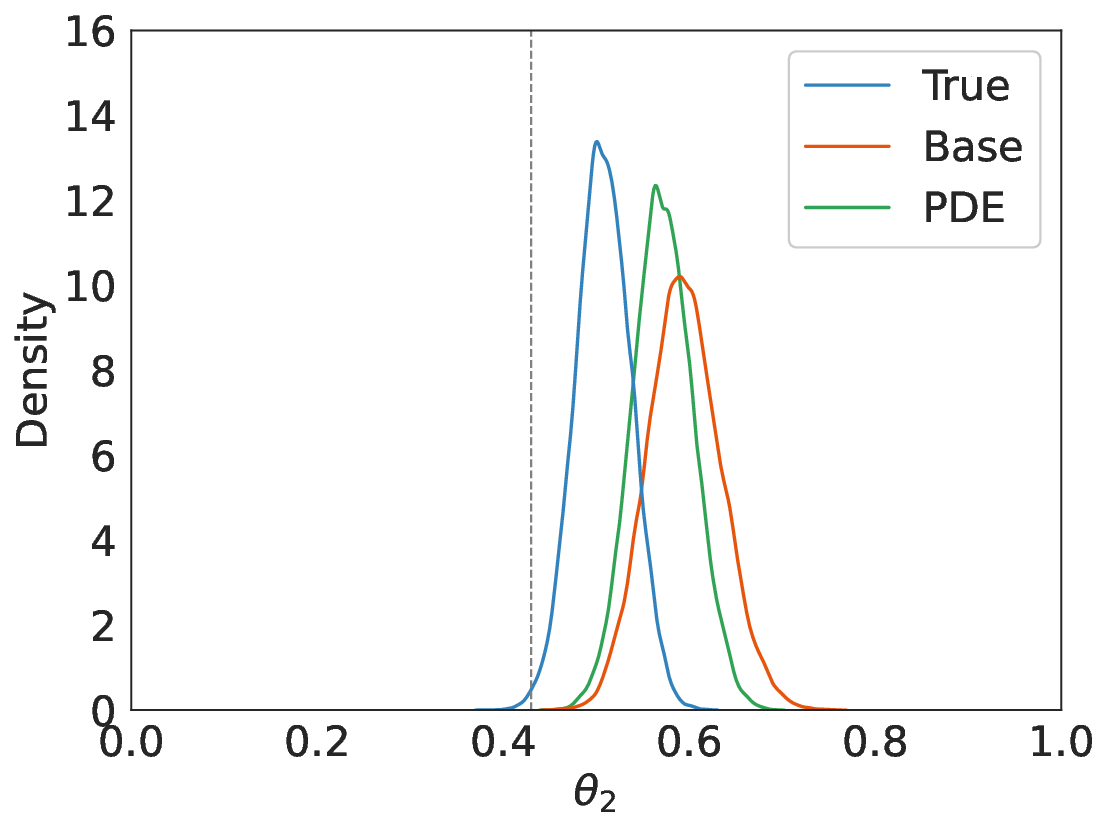}
  \caption{}
  \label{fig:exp1_mean2a}
\end{subfigure}

\caption{Comparison of different models' marginal distribution when $N=4$, for PDE model $d_f = 50$ and $\Bar{N} = 20$.  (a) Mean-based approximation $\theta_1$. (b) Mean-based approximation $\theta_2$. (c) Marginal approximation $\theta_1$. (d) Marginal approximation $\theta_2$. $\mathcal{G}_X$ is the discretised solution $u$ in \eqref{eq:example1} with diffusion coefficient \eqref{eq:pc}. }
\label{fig:exp1_marg2}
\end{figure}

\subsubsection{Integral observation operator}
We now  investigate the proposed method with a different form of observation operator. In terms of the PDE problem, we study again \eqref{eq:example1}.
However, instead of  point-wise observations $\mathcal{G}_{X}(\boldsymbol{\theta}) = \{u(x_j; \boldsymbol{\theta})\}_{j=1}^{{{d}_{\mathbf{y}}}}$ as in \eqref{equ:ivp}, we  observe local averages $\mathcal{G}_{X}(\boldsymbol{\theta}) = \{\int_{a_j}^{b_j}u(x; \boldsymbol{\theta}) dx\}_{j=1}^{{{d}_{\mathbf{y}}}}$ for non-overlapping intervals $[a_j,b_j \subset [0,1]$.% which is an \emph{parameter-to-integral-observation map}.

For the inverse problem setting, we have $\btheta^{\dagger} = [0.098, 0.430]$ which is the same as before, $\dz = 16$ (equally spaced sub-intervals of $[0,1]$) and $\sigma_\eta^2 = 10^{-6}$. We again do not conduct precise integration as in \eqref{eq:example0}, but use MALA algorithm to obtain our samples. We utilize $10^{6}$ samples for all our approximate posterior. We treat the sampling results obtained by a mean-based approximation with the baseline model for $N=10^{2}$ training points as the ground truth. In Figure \ref{fig:exp3_marg}, we plot again the $\theta_{1}$ and $\theta_{2}$ marginals for all the mean-based posterior approximations and the marginal posterior approximations. The result is similar to the previous example that the PDE-constrained model performs better than the other two models.

\begin{figure}
\centering
\begin{subfigure}{.45\textwidth}
  \centering
  \includegraphics[width=\linewidth]{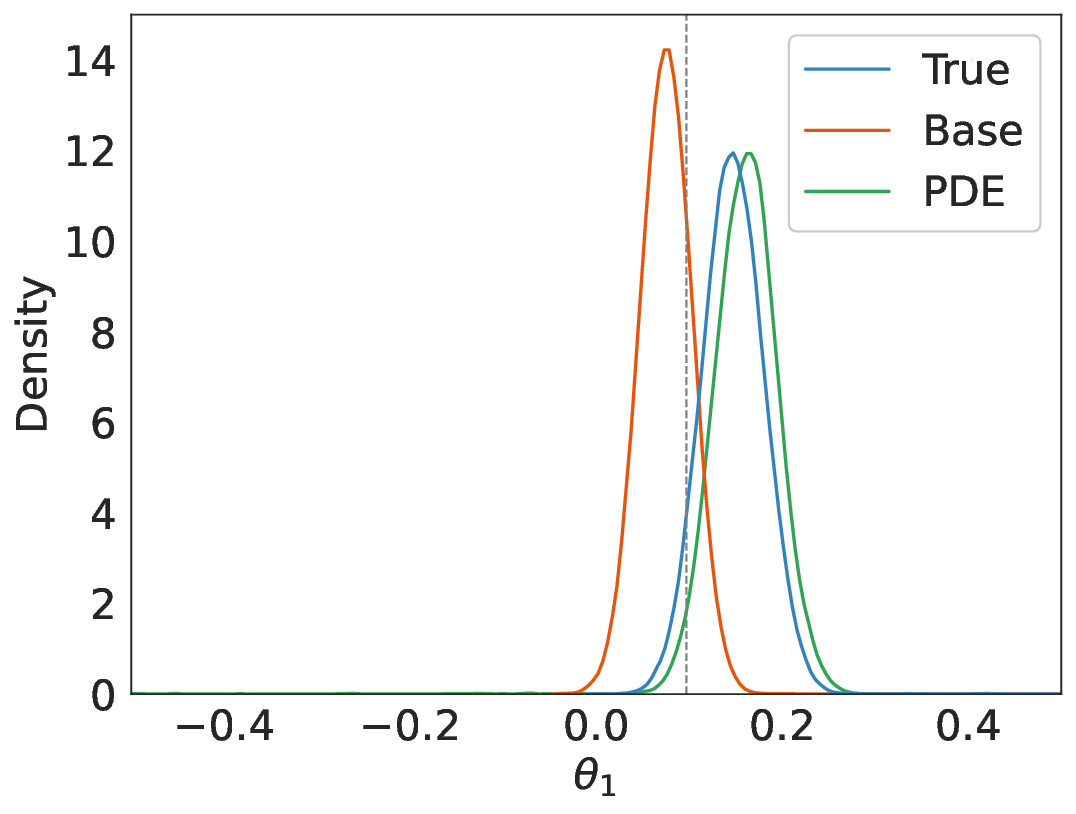}
  \caption{}
  \label{fig:exp3_mean1}
\end{subfigure}%
\begin{subfigure}{.45\textwidth}
  \centering
  \includegraphics[width=\linewidth]{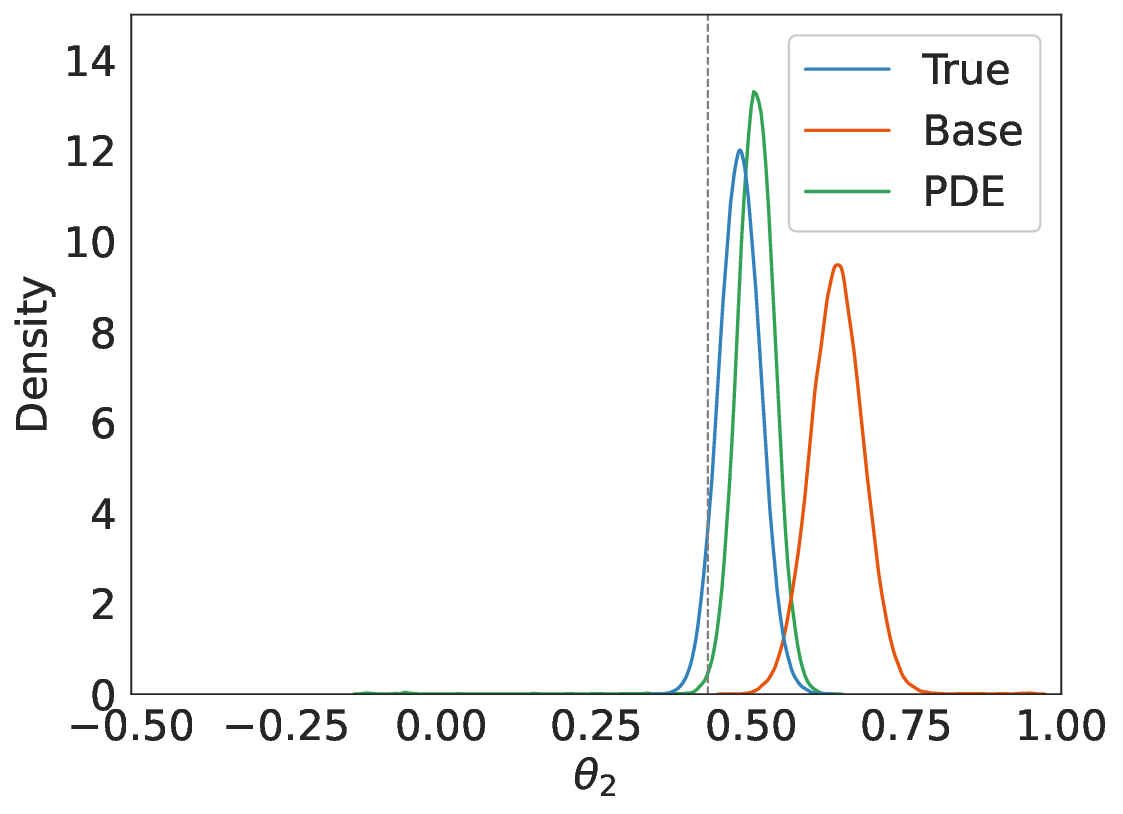}
  \caption{}
  \label{fig:exp3_mean2}
\end{subfigure}
\begin{subfigure}{.45\textwidth}
  \centering
  \includegraphics[width=\linewidth]{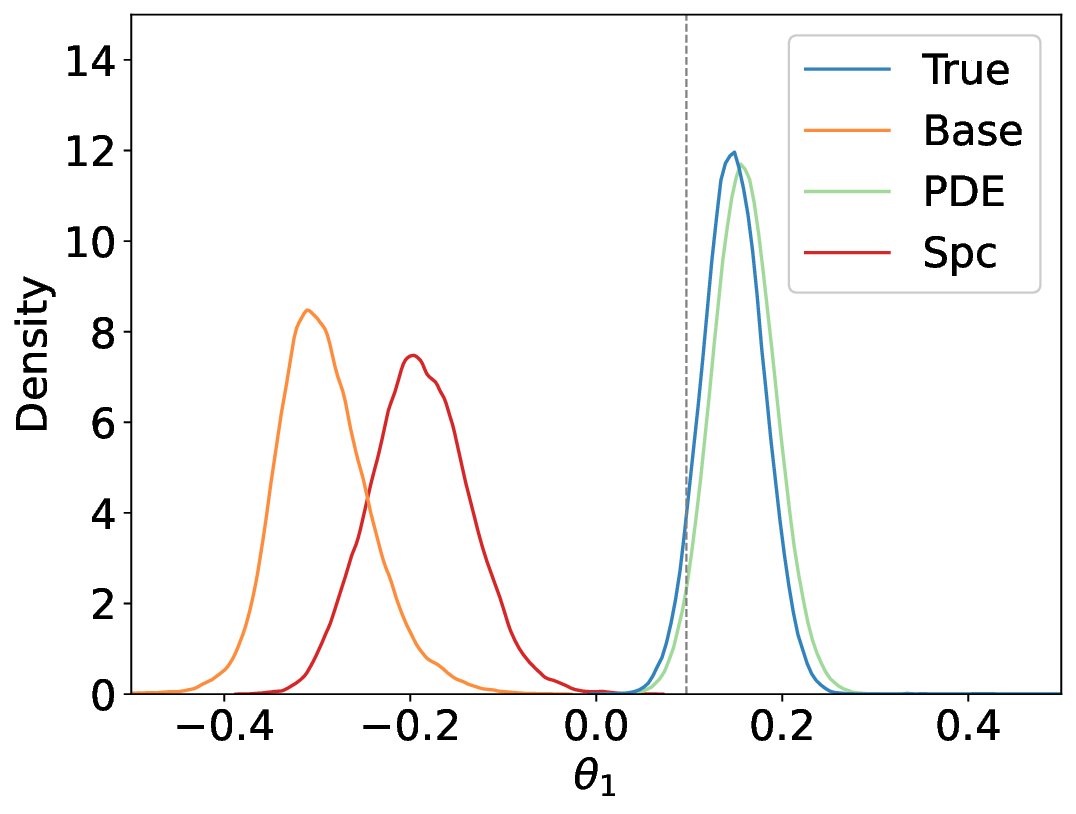}
  \caption{}
  \label{fig:exp3_infl1}
\end{subfigure}
\begin{subfigure}{.45\textwidth}
  \centering
  \includegraphics[width=\linewidth]{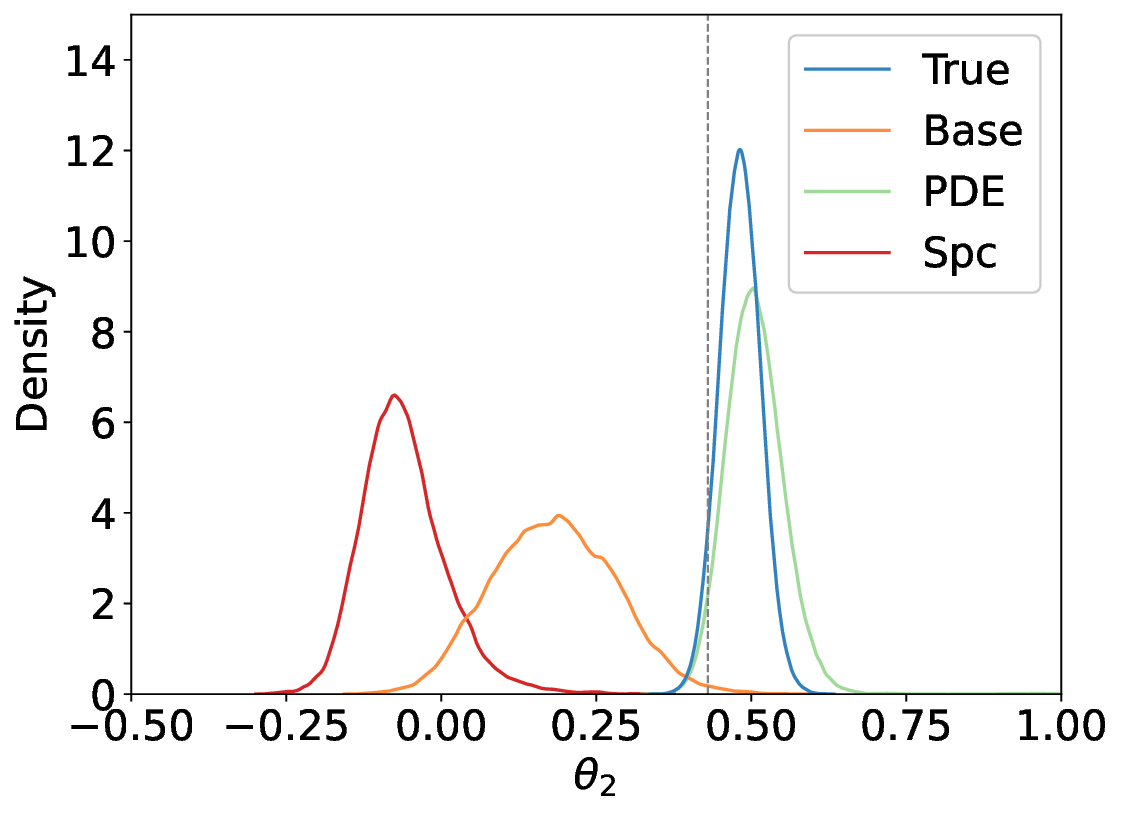}
  \caption{}
  \label{fig:exp3_infl2}
\end{subfigure}
\caption{Comparison of different models' marginal distribution when $N=4$, for PDE model $\Bar{N} = 10$ and $d_{f} = 50$.  (a) Mean-based approximation $\theta_1$. (b) Mean-based approximation $\theta_2$. (c) Marginal approximation $\theta_1$. (d) Marginal approximation $\theta_2$. $\mathcal{G}_X$ is the integrals of solution $u$ in \eqref{eq:example1} with diffusion coefficient \eqref{eq:pc}.}
\label{fig:exp3_marg}
\end{figure}

\subsubsection{Parametric expansion for the diffusion coefficient}
In this example, we study again \eqref{eq:example1}, but this time instead of working with a piecewise constant diffusion coefficient we assume that the diffusion coefficient satisfies the following parametric expansion
\begin{equation}\label{eq:KL}
\kappa(\btheta,x) = \exp{\left(\sum_{n=1}^{2}\sqrt{a_n}\theta_n b_n(x)\right)}
\end{equation}
where $a_n = \frac{8}{\omega^2_n + 16}$  and $b_{n}(x) = A_n (\sin(\omega_n x) + \frac{\omega_n}{4}\cos(\omega_n x))$, $\omega_n$ is the $n_{th}$ solution of the equation $\tan(\omega_n) = \frac{8\omega_n}{\omega^2_n - 16}$ and $A_n$ is a normalisation constant which makes $\|b_n\| = 1$.
%the eigenvalues and the normalised eigenvectors of the covariance operator, respectively, and the unknown parameters $\theta_{n}$ can be thought of as realizations of standard normal random variables. 

In terms of the inverse problem setting, we are using the same parameters as before ($\btheta^{\dagger} = [0.098,0.430]$, $\dz = 6$, noise level $\sigma^2_{\eta} = 10^{-4}$). The number of training points for all the emulators has been set to $N=4$ (chosen using the Halton sequence), while in the case of the PDE-constrained emulator we have used $\Bar{N}=10$ and $d_{f}=8$.  Furthermore, throughout this numerical experiment, we take the prior of the parameters to be the uniform distribution on $[-1,1]^{2}$. For the choices of kernels, we use the squared exponential kernel for both $k_p$ and $k_s$.

%Unlike \eqref{eq:example0} we do not perform exact integration but use the MALA algorithm to obtain our sample. In particular, for all our approximate posteriors (mean-based, marginal-based) we have used $10^{6}$ samples. Furthermore, since in this case, we do not have an analytic expression for the solution, we do not have direct access to the true posterior. Similarly to what we did before, we circumvent this problem by considering the results obtained by a mean-based approximation with the Baseline model for $N=10^{2}$ training points as the ground truth.

As in the previous experiments, we produce $10^{6}$ samples of the posteriors using MALA, and use the results obtained by a mean-based approximation with the baseline model for $N=10^{2}$ training points as the ground truth. 
\begin{figure}
\centering
\begin{subfigure}{.45\textwidth}
  \centering
  \includegraphics[width=\linewidth]{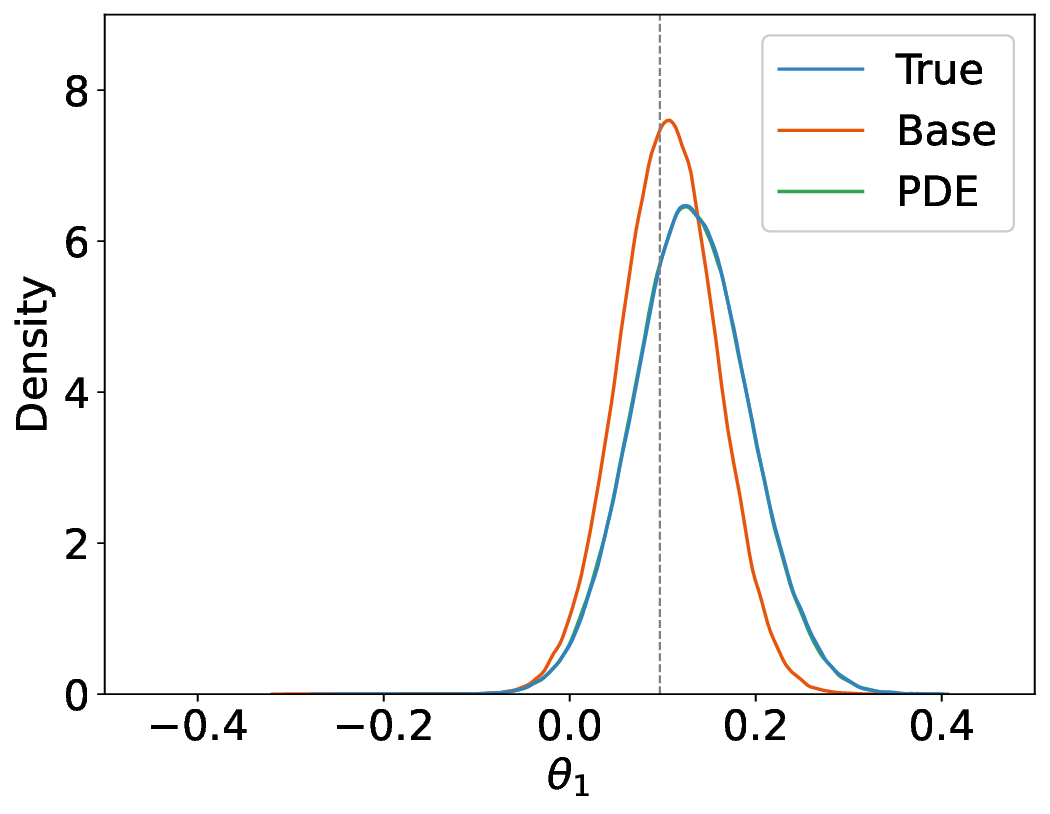}
  \caption{}
  \label{fig:exp2_mean1}
\end{subfigure}%
\begin{subfigure}{.45\textwidth}
  \centering
  \includegraphics[width=\linewidth]{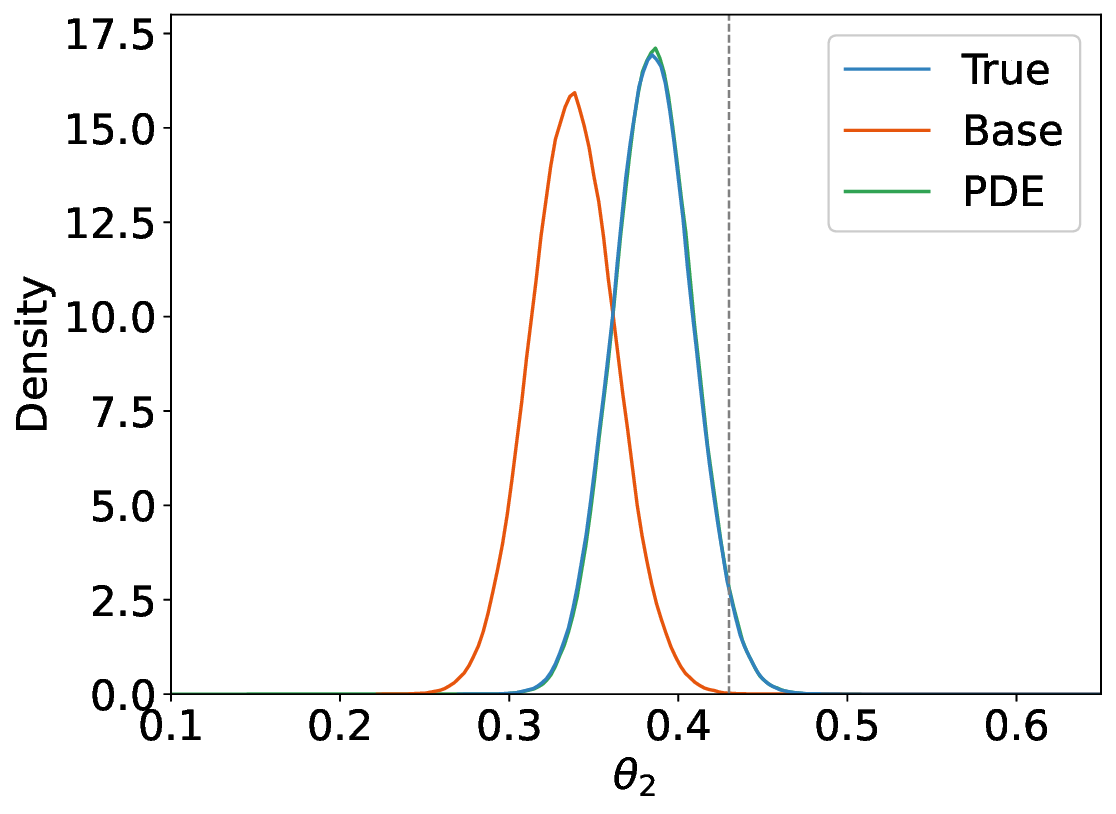}
  \caption{}
  \label{fig:exp2_mean2}
\end{subfigure}
\begin{subfigure}{.45\textwidth}
  \centering
  \includegraphics[width=\linewidth]{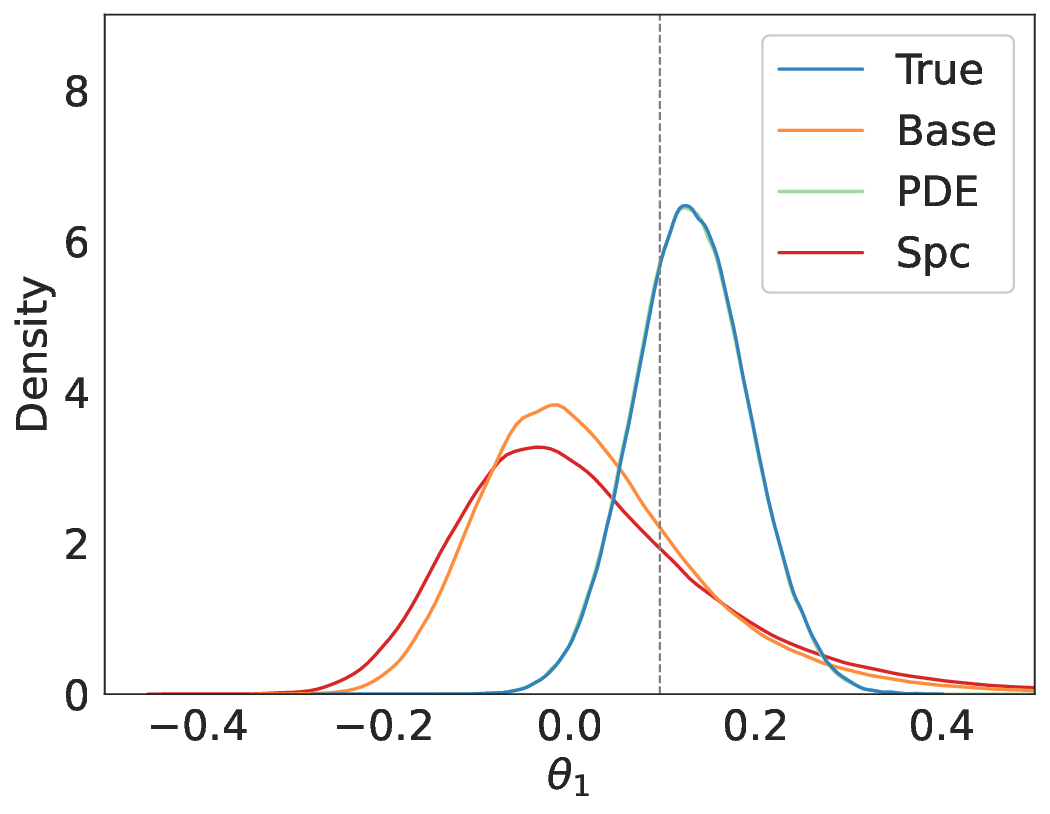}
  \caption{}
  \label{fig:exp2_infl1}
\end{subfigure}
\begin{subfigure}{.45\textwidth}
  \centering
  \includegraphics[width=\linewidth]{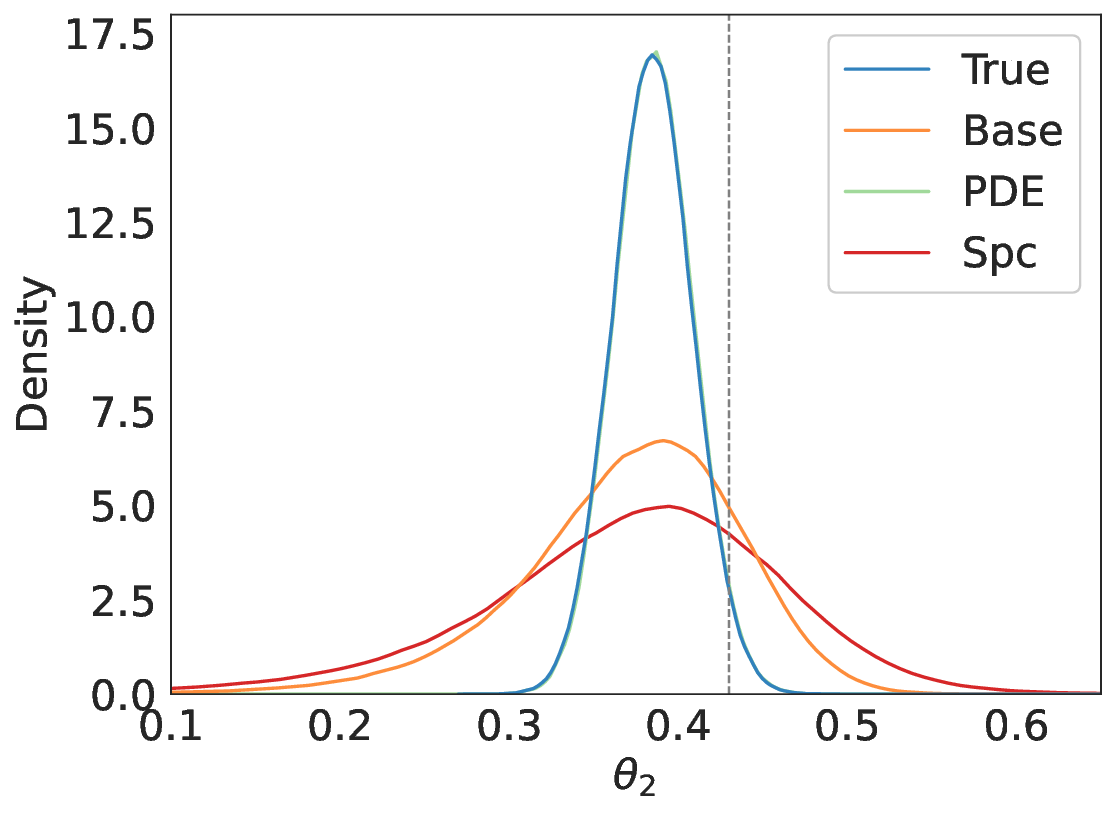}
  \caption{}
  \label{fig:exp2_infl2}
\end{subfigure}
\caption{Comparison of different models' marginal distribution when $N=4$, for PDE model $\Bar{N} = 10$ and $d_f = 8$. (a) Mean-based approximation  for the $\theta_1$ marginal. (b) Mean-based approximation for the  $\theta_2$ marginal. (c) Marginal approximation of the $\theta_1$ marginal. (d) Marginal approximation of the $\theta_2$ marginal. $\mathcal{G}_X$ is the discretised solution $u$ in \eqref{eq:example1} with diffusion coefficient \eqref{eq:KL} and $\dtheta = 2$. }
\label{fig:exp2_marg}
\end{figure}
We now plot in Figure \ref{fig:exp2_marg} the $\theta_{1}$ and $\theta_{2}$ marginals for the different Gaussian emulators both in the case of mean-based and marginal posterior approximations. In particular,  as we can see in  Figure \ref{fig:exp2_marg}(a)-(b) for the mean-based posterior approximations,  the baseline and spatially correlated model fail to capture the true posterior while this is not the case for the PDE-constrained model since the agreement with the true posterior is excellent. When looking at the marginal approximations in Figure \ref{fig:exp2_marg}(c)-(d) we can see that the marginals for the baseline and spatially correlated model move closer towards the true value $\btheta^{\dagger}$ and exhibit variance inflation. This is, however, not the case for the PDE-constrained model since again it is in excellent agreement with the true posterior.

\subsubsection{Ten-dimensional parametric expansion diffusion coefficient}

We will  now increase the dimension of the diffusion coefficient from $\mathbf{d_\theta} = 2$ to $ \mathbf{d_\theta} = 10$ in \eqref{eq:KL}, to test the proposed method in a relatively high dimensional space. In particular,  we divide the interval $[0,1]$ into 12 sub-intervals of equal length and fix the value of $\kappa$ to be $0$ and $1$ at the 2 ends, respectively. The values of on the remaining ten intervals are our unknown $\btheta$. With regard to the inverse problem setting, we set $$\btheta^{\dagger} = [0.098, 0.430, 0.206, 0.090, -0.153, 0.292, -0.125, 0.784, 0.927, -0.233]$$ and we increase the number of observation points to $\dz = 20$. The level of noise is same as before ($\sigma_{\eta}^2 = 10^{-4}$). The number of training points for all emulators is again set to be $N = 4$, and for the PDE-constrained emulator we use $\Bar{N} = 50$, $d_f = 25$ and $d_g = 2$.  %In addition, we take the prior of the parameters to be the uniform distribution on $[-1,1]^{10}$, which is then approximated by the $\lambda$-Moreau-Yoshida envelop for $\lambda=10^{-3}$.
%Furthermore, throughout {\color{red}this} numerical experiment, we take the prior of the parameters to be the uniform distribution on $[-1,1]^{10}$, which is then approximated by the $\lambda$-Moreau-Yoshida envelop for $\lambda=10^{-3}$. 
For the choices of kernels, we use the squared exponential kernel for both $k_p$ and $k_s$.

We now use the MALA algorithm to obtain $10^{7}$ samples of the approximate posteriors. In this relatively high-dimensional setting, we need longer chains for the sampling algorithm to converge. Meanwhile, computation of a suitable "ground truth" is prohibitively expensive, so we only compare the sampling result with the true parameter $\btheta^{\dagger}$. The number of training points $N = 4$ is far from enough for the baseline Gaussian process model to give an accurate prediction. From the Figure \ref{fig:exp6_mean}, we can see that the mean-based posterior approximation with the baseline model can only give a reasonable approximation for the first few variables, for the rest of the variables the approximation could not put any density around the true value. Adding spatial correlation into the model helps the approximation move toward the true value (Figure \ref{fig:exp6_infl}), but it still cannot correctly approximate the posterior for the last few variables. The performance of the PDE-constrained model is much better than the other models, it is placing the posterior mass around the true value for all variables. 

\begin{figure}
\centering
\begin{subfigure}{1.1\textwidth}
  \centering
  \includegraphics[width=\linewidth]{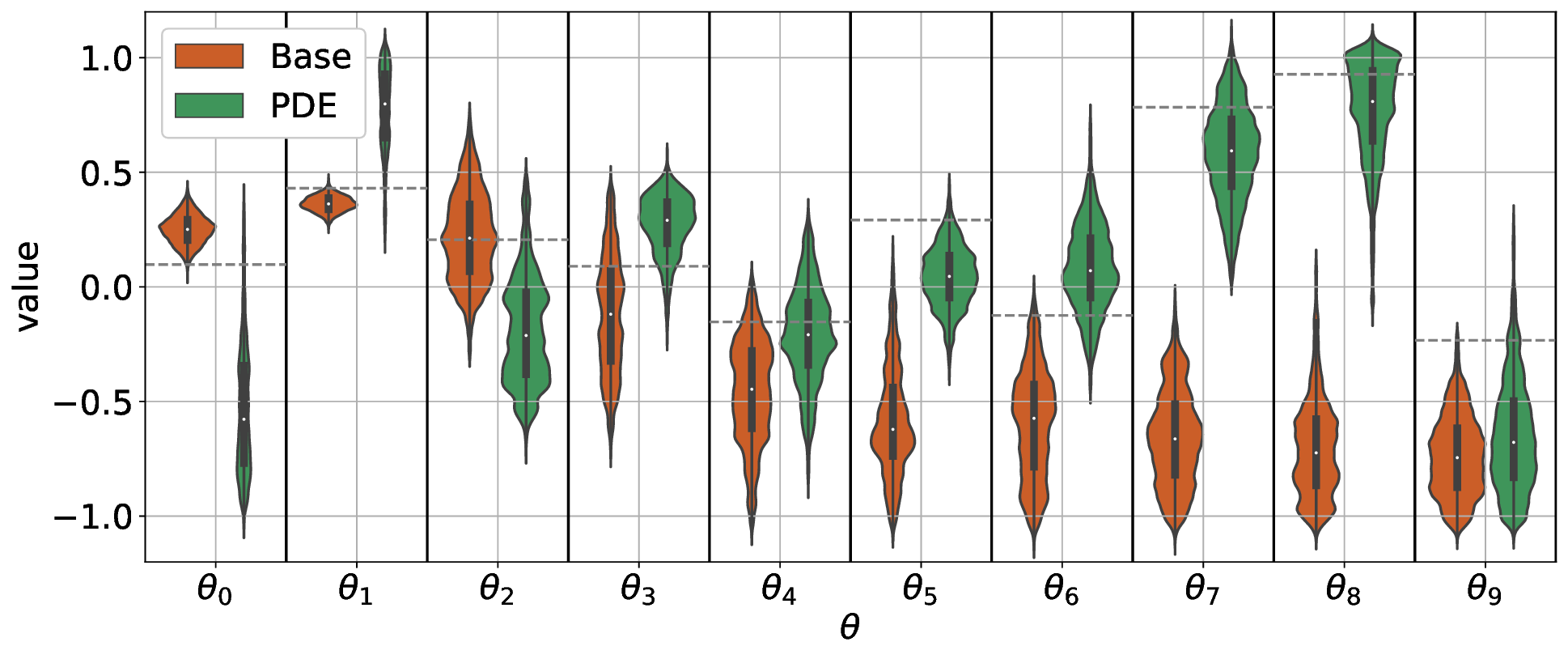}
  \caption{}
  \label{fig:exp6_mean}
\end{subfigure}
\begin{subfigure}{1.1\textwidth}
  \centering
  \includegraphics[width=\linewidth]{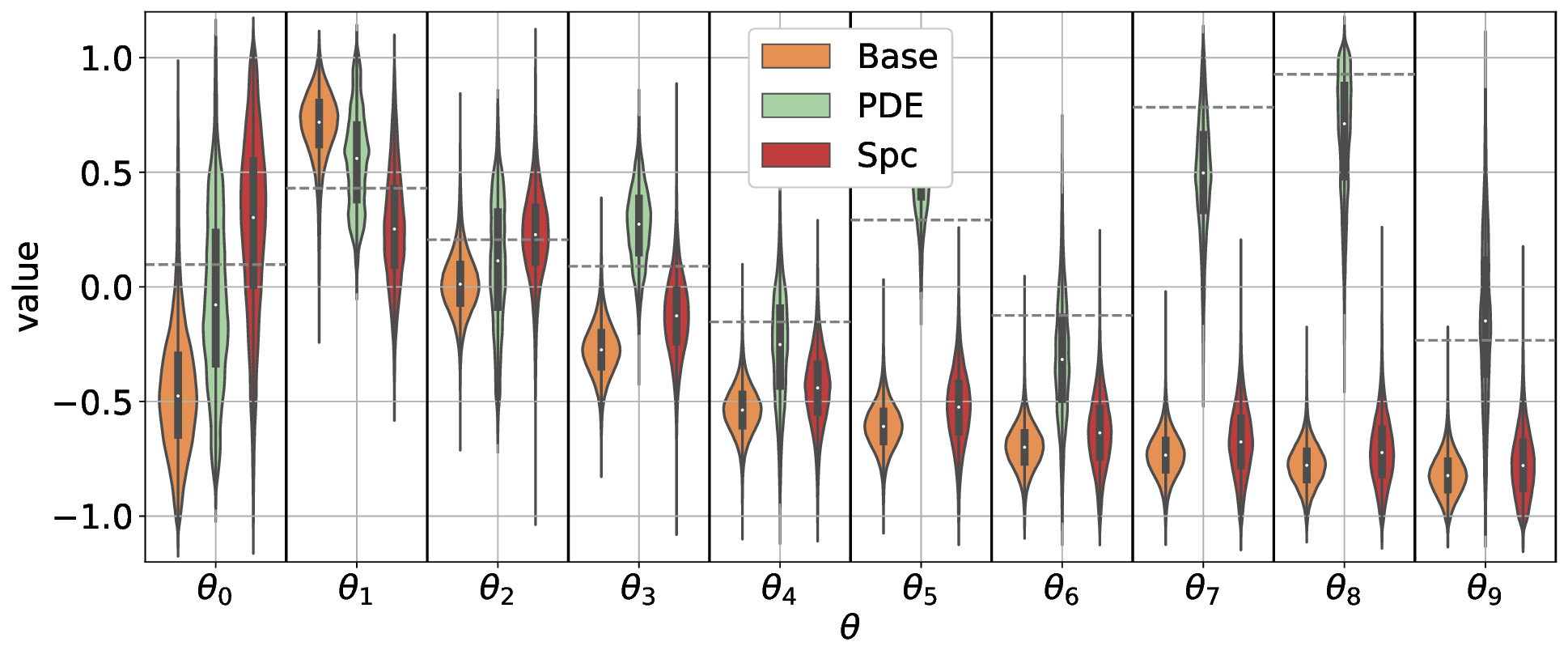}
  \caption{}
  \label{fig:exp6_infl}
\end{subfigure}
\caption{Comparison of different models' marginal distribution when $N=4$, for PDE model $\Bar{N} = 50$ and $d_f = 25$. (a) Mean-based approximation (b) Marginal approximation. $\mathcal{G}_X$ is the discretised solution $u$ in \eqref{eq:example1} with diffusion coefficient \eqref{eq:KL} and $\dtheta = 10$.}
\label{fig:exp2_marga}
\end{figure}
%\subsection{One spatial dimension and 10D piece-wise 

\subsection{Two spatial dimensions}
\label{sec:exp5}

\subsubsection{Two-dimensional piece-wise constant diffusion coefficient}
In this example, we increase the spatial dimension from $d_\mathbf{x} = 1$ to $d_\mathbf{x} = 2$ and use a 2 dimensional piece-wise constant as the diffusion coefficient. The values of the diffusion coefficient are set in a similar way to the previous example, depending only on the first dimension of $\mathbf{x}$:
\begin{equation}\label{eq:pc2}
 \kappa(\mathbf{x},\btheta)=
 \left\{\begin{array}{lr}
        0, & \text{for } x_1 \in [0,\frac{1}{4}),\\
        \\
        \theta_{1}, & \text{for } x_1 \in [\frac{1}{4},\frac{1}{2}),\\
        \\
       \theta_{2}, & \text{for } x_1 \in [\frac{1}{2},\frac{3}{4}),\\
       \\
       1,  & \text{for } x_1 \in [\frac{3}{4},1].
        \end{array}\right.
\end{equation} The boundary conditions are a mixture of Neumann and Dirichlet conditions, given by \begin{align*}
 &\partial_{x_1} u(x_1,0) = \partial_{x_1} u(x_1,1) = 0, &\text{for } x_1 \in [0,1], \\
 &  u(0,x_2) = 1, \quad u(1,x_2) = 0, &\text{for } x_2 \in [0,1].
\end{align*}
These boundary conditions define a {\em flow cell}, with no flux at the top and bottom boundary ($x_2=0,1$) and flow from left to right induced by the higher value of $u$ at $x_1=0$.

Again, we take the prior of the parameters to be the uniform distribution on $[-1,1]^{2}$, approximated by the $\lambda-$Moreau-Yoshida envelope with $\lambda = 10^{-3}$. For the observation, we generate our data $\bz$ according to equation \eqref{equ:ivp} for the value  $\btheta^{\dagger}=[0.098, 0.430]$ for $\dz=6$ (chosen to be the first $6$ points in the Halton sequence) and for noise level $\sigma^{2}_{\eta}=10^{-5}$. In addition, for the baseline and spatially correlated model, we have used $N=4$ training points (chosen to be the first $4$ points in the Halton sequence), while additionally for the PDE-constrained model, we have used $\Bar{N} = 30$, $d_{{f}}=30$ and  $d_g=8$, corresponding to 2 equally spaced points on each boundary. For the covariance kernels, we let $k_p$ be the squared exponential kernel and $k_s$ be the Mat\`ern kernel with $\nu = \frac{5}{2}$.

We plot the mean-based approximate posteriors marginals in Figure \ref{fig:exp5_mean1} and \ref{fig:exp5_mean2}. We can see that in this case, the PDE-constrained model significantly improves the approximation accuracy, which is different from the previous piece-wise constant diffusion coefficient example in 1 spatial dimension. In Figures \ref{fig:exp5_infl1} and \ref{fig:exp5_infl2}, we compare the marginal approximation for the three models. We see that the PDE-constrained model performs better than the other two models.

\begin{figure}
\centering
\begin{subfigure}{.45\textwidth}
  \centering
  \includegraphics[width=\linewidth]{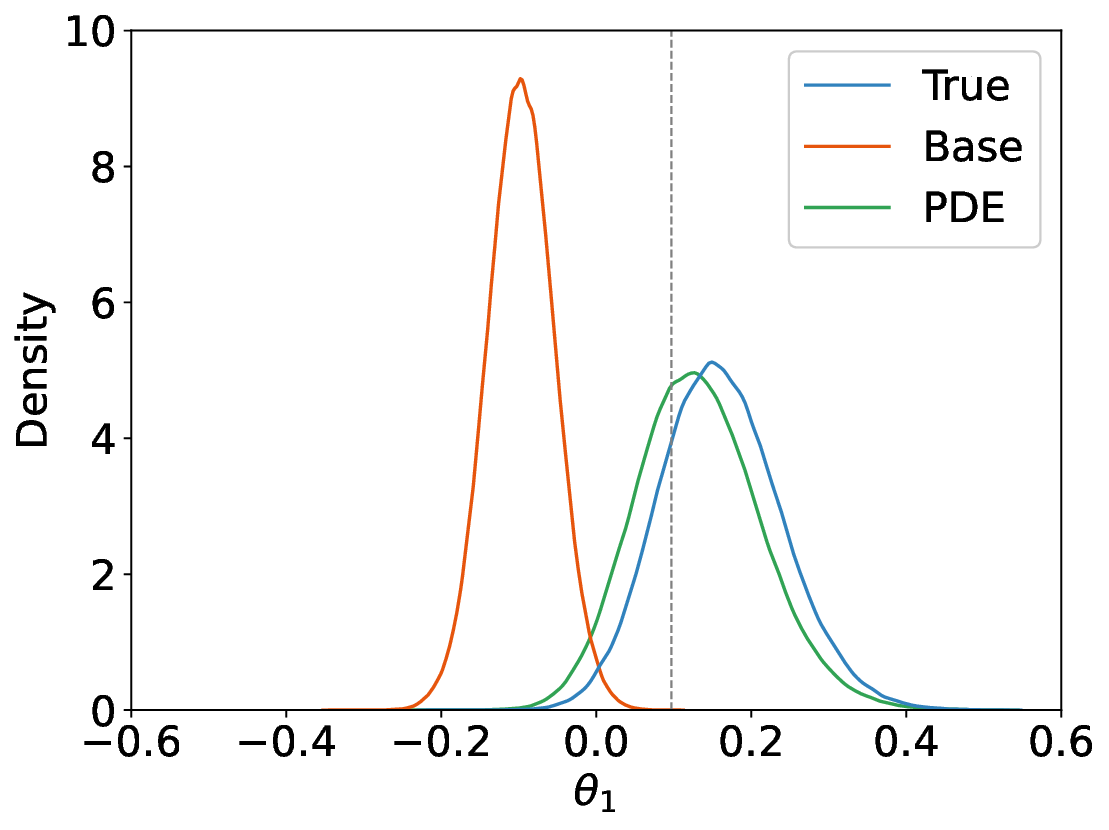}
  \caption{}
  \label{fig:exp5_mean1}
\end{subfigure}%
\begin{subfigure}{.45\textwidth}
  \centering
  \includegraphics[width=\linewidth]{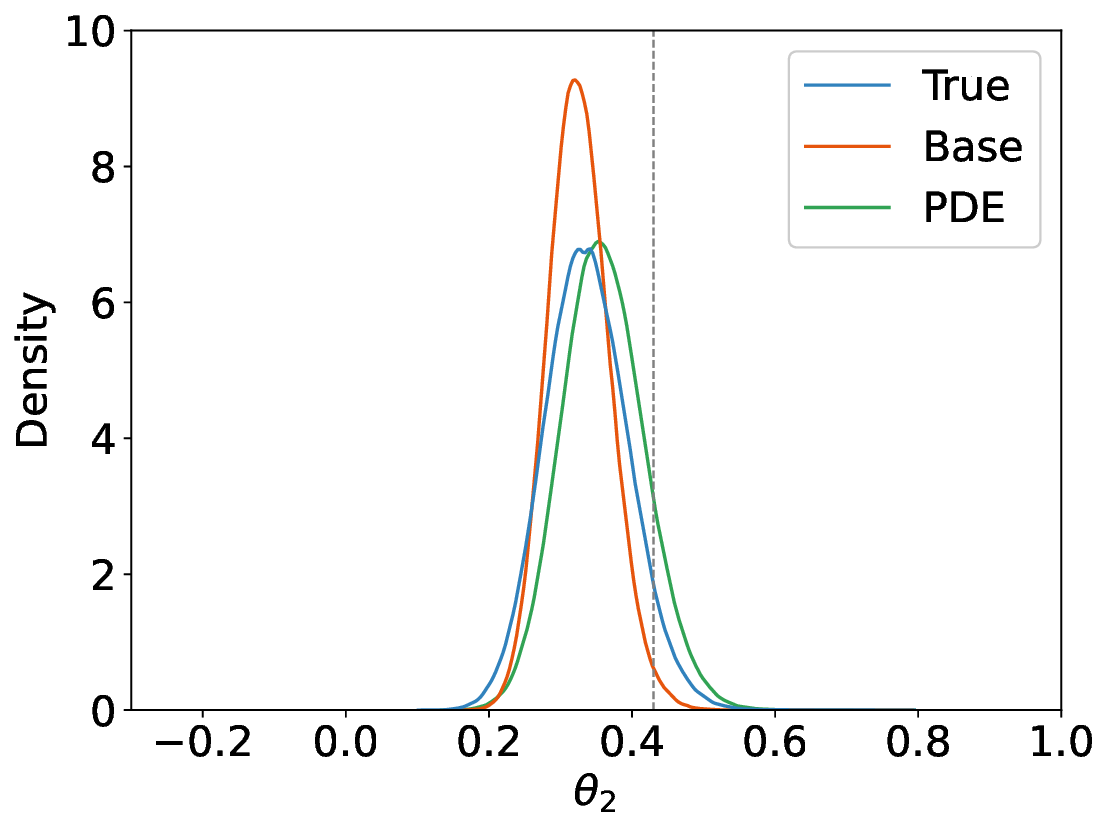}
  \caption{}
  \label{fig:exp5_mean2}
\end{subfigure}
\begin{subfigure}{.45\textwidth}
  \centering
  \includegraphics[width=\linewidth]{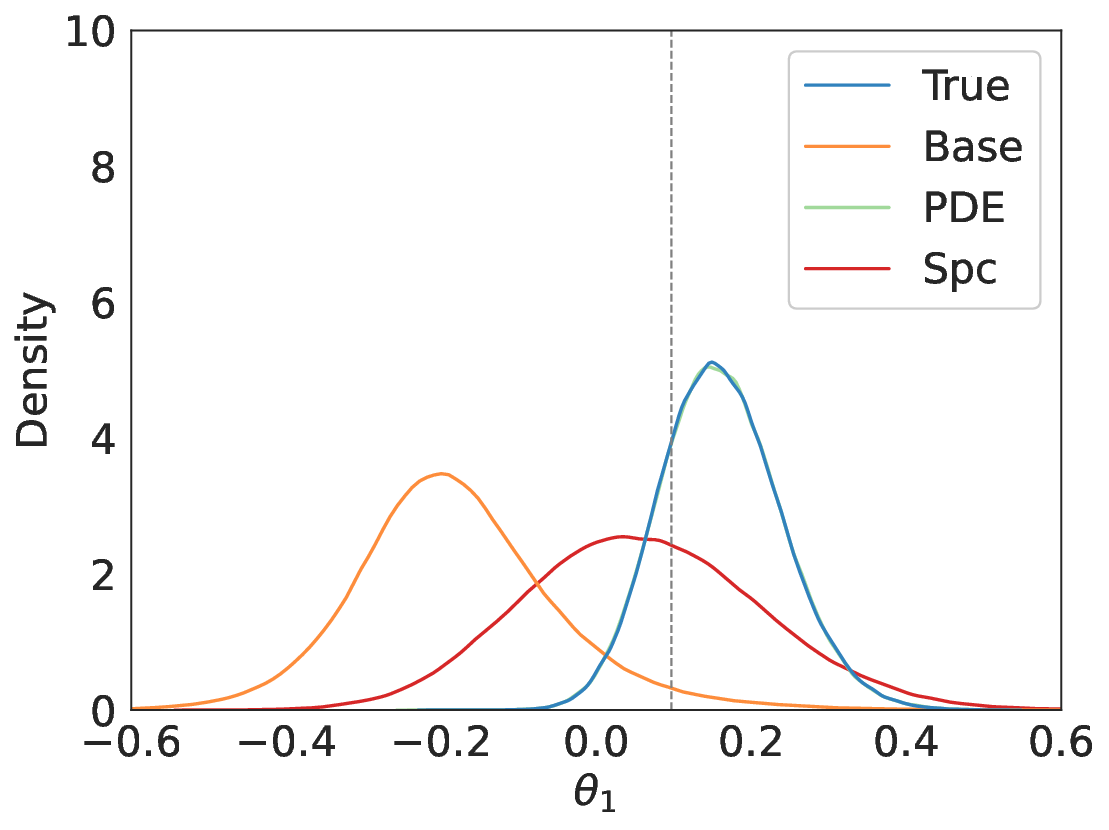}
  \caption{}
  \label{fig:exp5_infl1}
\end{subfigure}
\begin{subfigure}{.45\textwidth}
  \centering
  \includegraphics[width=\linewidth]{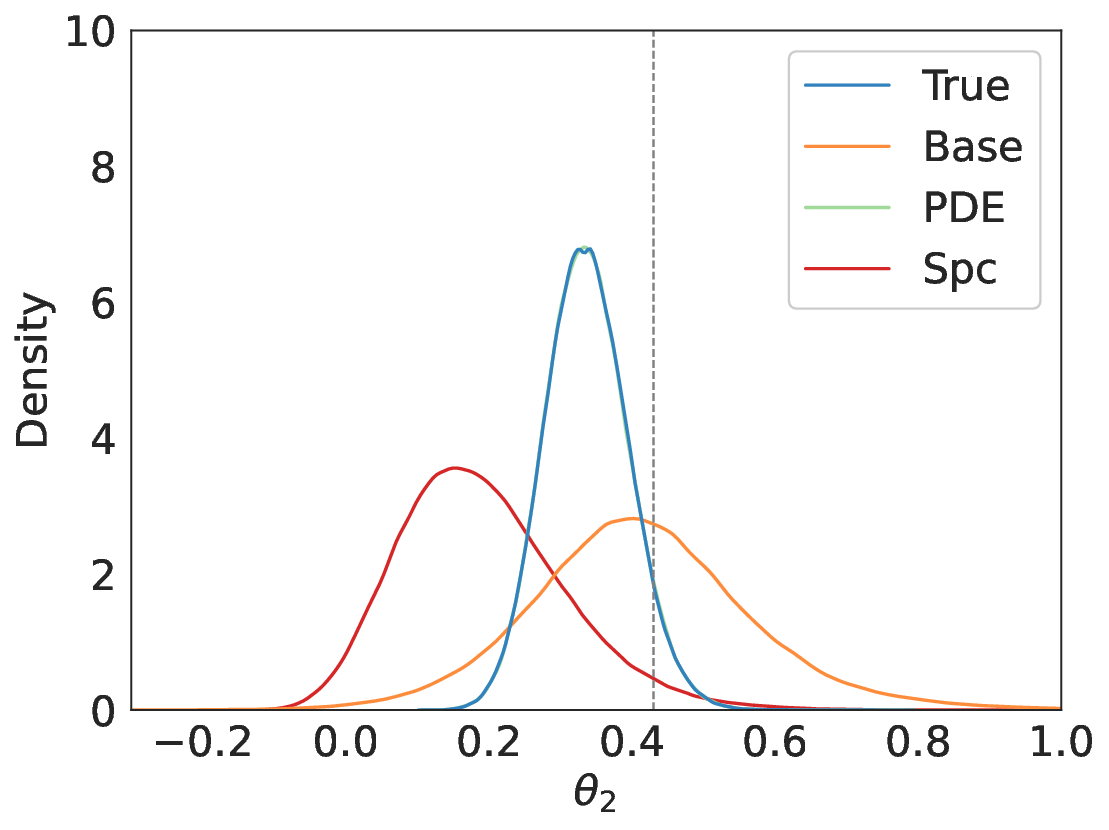}
  \caption{}
  \label{fig:exp5_infl2}
\end{subfigure}
\caption{Comparison of different models' marginal distribution when $N=4$, for PDE model $\Bar{N} = 30$ and $d_f = 30$. (a) Mean-based approximation  for the $\theta_1$ marginal. (b) Mean-based approximation for the  $\theta_2$ marginal. (c) Marginal approximation for the  $\theta_1$ marginal. (d) Marginal approximation for the  $\theta_2$ marginal. $\mathcal{G}_X$ is the discretised solution $u$ with $d_\mathbf{x} = 2$ and diffusion coefficient \eqref{eq:pc2}.}
\label{fig:exp5_marg}
\end{figure}

\subsection{Emulating the negative log-likelihood function}

As discussed in Section \ref{sec:emulatell}, we can emulate the negative log-likelihood (also called potential function) directly with Gaussian process regression. Since emulation of log-likelihood simplifies the structure of the problem, we are not able to incorporate spatial correlation or PDE constraints into the emulator. We have mean-based approximation \eqref{eq:potenmean} and marginal approximation \eqref{eq:poteninfl}. We test their performance using previous examples: problem \eqref{eq:example1} with diffusion coefficient \eqref{eq:pc} with $d_\mathbf{x} = 1$ and $d_\mathbf{x} = 2$. All parameters are kept the same as in Section \ref{sec:exp1} and Section \ref{sec:exp5}. Due to its simplified structure, the value of $d_\mathbf{x}$ makes no difference for the emulator since the only information taken by the emulator is the training data $\Phi(\Theta)$. 

In Figure \ref{fig:exp_pot_N4}, we compare the mean-based approximation with emulation of the log-likelihood $\Phi$ and the observation operator $\mathcal G_X$ using baseline model. We see that the results are very different in both examples. For the $d_\mathbf{x}=1$ example, emulating log-likelihood function performs better than emulation of observation with baseline model, the approximated posterior is closer to the true posterior. For the $d_\mathbf{x} = 2$ case, its performance is much worse. Hence, emulating the log-likelihood with a small amount of data could be less reliable compared to emulating observation.
If we increase the number of training data to $N=10$ for the $d_\mathbf{x} = 2$ case, we can see the improvement of accuracy (Figure \ref{fig:exp5_pot_N10}), but it is still worse than emulating observation with baseline model. 

Similarly, marginal approximations of the posterior with emulation of the log-likelihood appear to also be less reliable when the amount of training data is small, see Figure \ref{fig:exp_pot_infl}.  Including more training point can again improve the performance. The main advantage of emulating the log-likelihood function directly is its computational cost, which is much smaller than emulating in observation space. Detailed computational times are listed in the following section.

\begin{figure}
\centering

\begin{subfigure}{.45\textwidth}
  \centering
  \includegraphics[width=\linewidth]{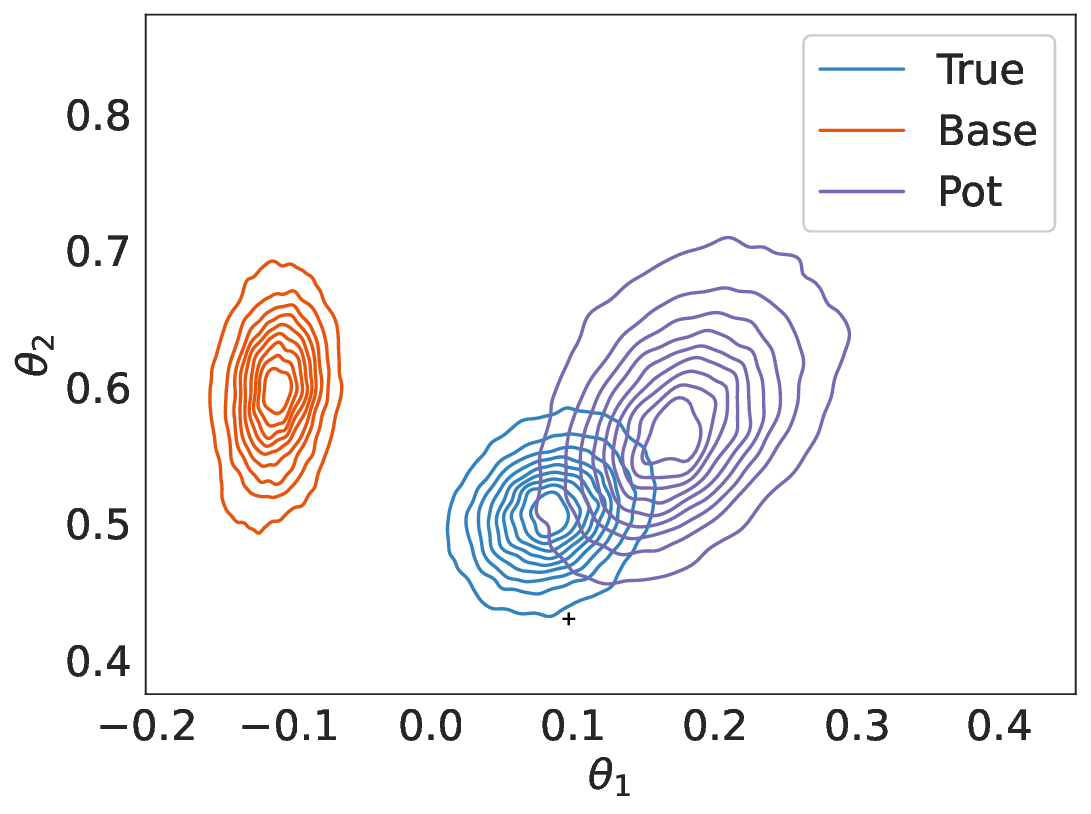}
  \caption{}
  \label{fig:exp1_pot}
\end{subfigure}
\begin{subfigure}{.45\textwidth}
  \centering
  \includegraphics[width=\linewidth]{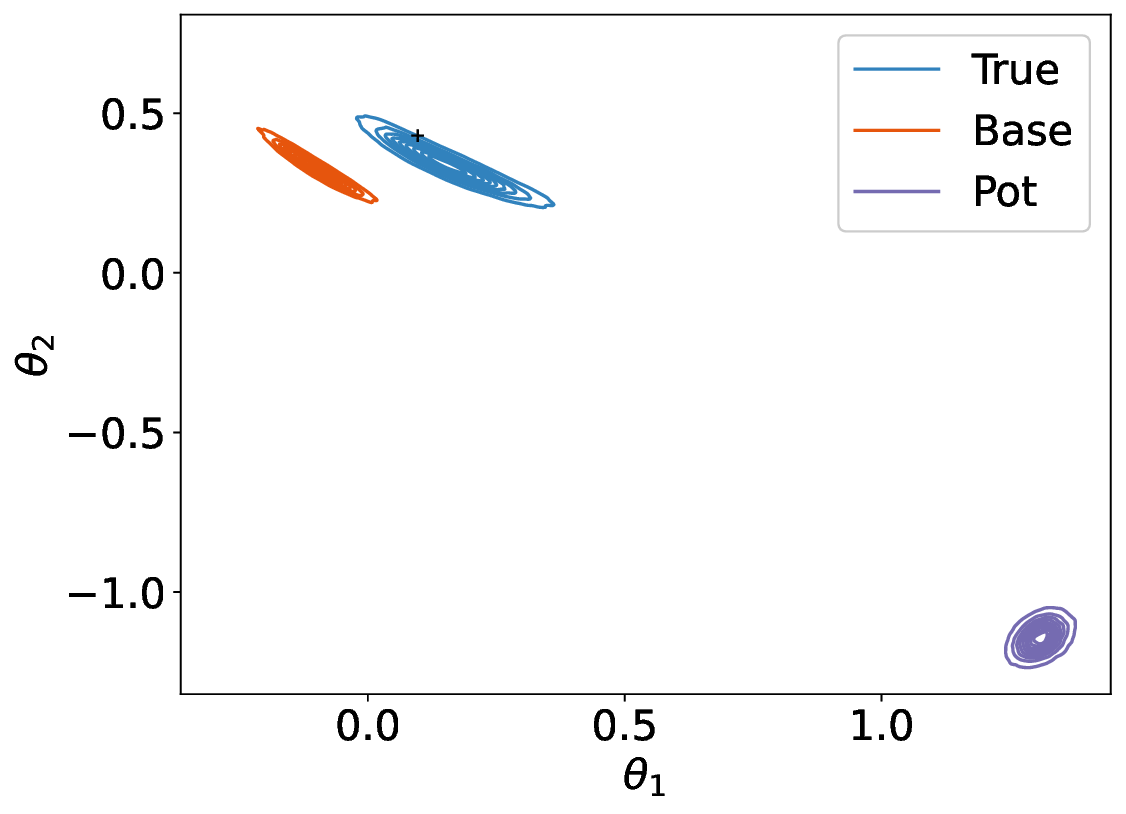}
  \caption{}
  \label{fig:exp5_pot}
\end{subfigure}
\caption{Comparison of emulating log-likelihood function and emulating observations when $N=4$. Both approximation are mean-based approximation. $\mathcal{G}_X$ is the negative log-likelihood function in: (a) problem \eqref{eq:example1} with diffusion coefficient \eqref{eq:pc} with $d_\mathbf{x} = 1$; (b) problem \eqref{eq:example1} with diffusion coefficient \eqref{eq:pc} with $d_\mathbf{x} = 2$.}
\label{fig:exp_pot_N4}
\end{figure}

\begin{figure}
\centering
\begin{subfigure}{.45\textwidth}
  \centering
  \includegraphics[width=\linewidth]{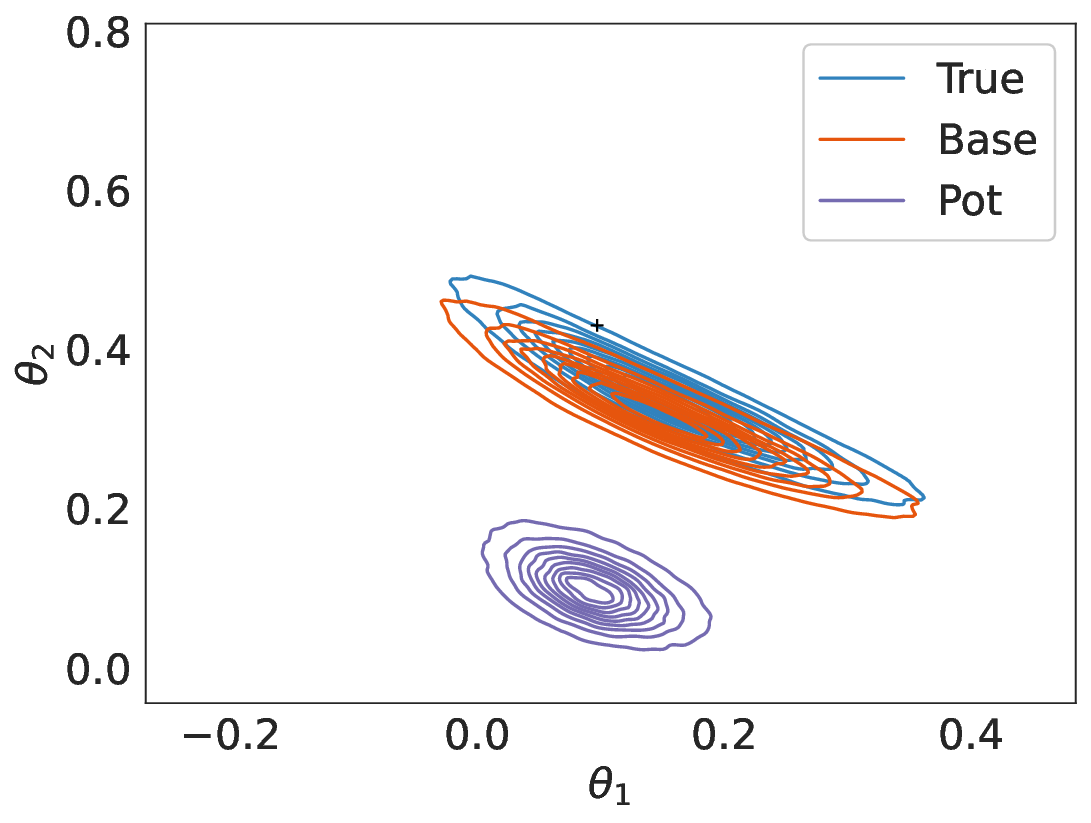}
  \caption{}
  \label{fig:exp1_pot}
\end{subfigure}
\caption{Accuracy of emulator is improved when N increases ($N=10$). $\mathcal{G}_X$ is the negative log-likelihood function in problem \eqref{eq:example1} with diffusion coefficient \eqref{eq:pc} with $d_\mathbf{x} = 2$ and mean-based approximation.}
\label{fig:exp5_pot_N10}
\end{figure}

\begin{figure}
\centering

\begin{subfigure}{.45\textwidth}
  \centering
  \includegraphics[width=\linewidth]{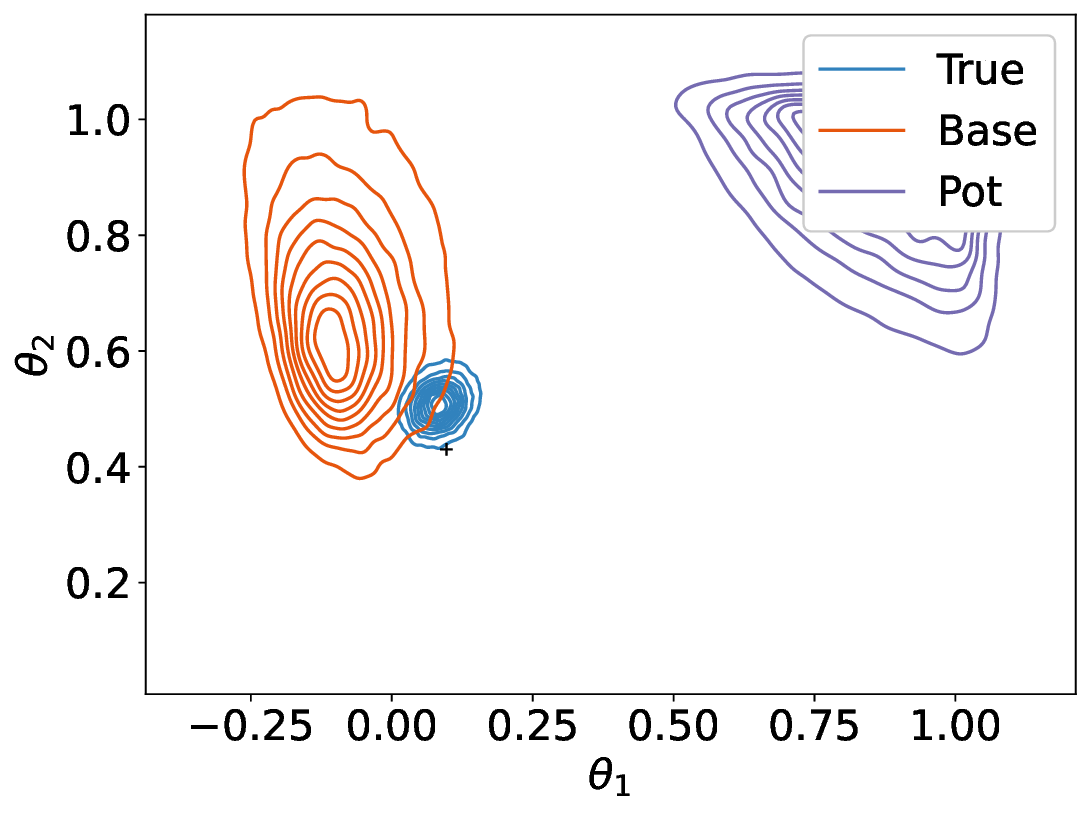}
  \caption{}
  \label{fig:exp_pot_infl_N4}
\end{subfigure}
\begin{subfigure}{.45\textwidth}
  \centering
  \includegraphics[width=\linewidth]{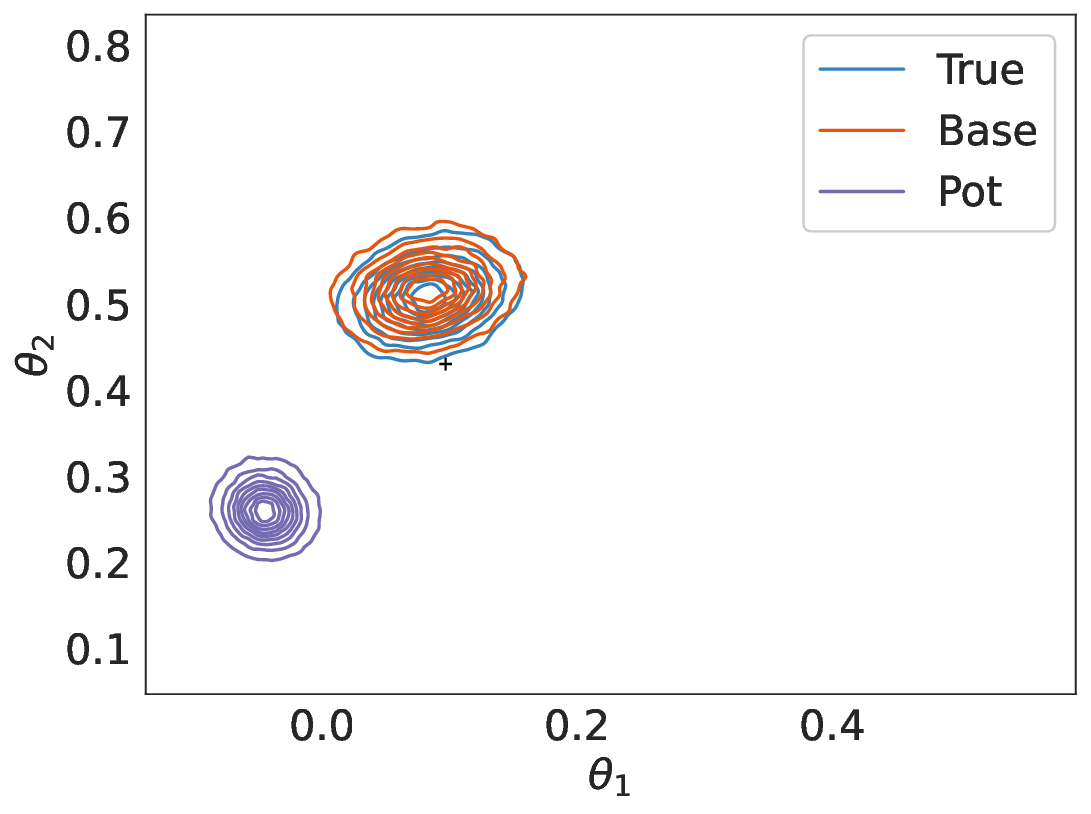}
  \caption{}
  \label{fig:exp_pot_infl_N10}
\end{subfigure}
\caption{(a) Marginal approximation with $N=4$ (b) Marginal approximation with $N=10$. $\mathcal{G}_X$ is the negative log-likelihood function in problem \eqref{eq:example1} with diffusion coefficient \eqref{eq:pc} with $d_\mathbf{x} = 1$.}
\label{fig:exp_pot_infl}
\end{figure}

\subsection{Computational timings}
In this section, we discuss computational timings. We focus on the computational gains resulting from using Gaussian process emulators instead of the PDE solution in the posterior (see Table \ref{table:1}) and the relative costs of sampling from the various approximate posteriors (see Tables \ref{table:2}, \ref{table:3} and \ref{table:4}).

Table \ref{table:1} below gives average computational timings comparing the evaluation of the solution of the PDE using Firedrake with using the Gaussian process surrogate model. For the baseline surrogate model, the two primary costs are (i) computing the coefficients $\boldsymbol{\alpha} = K(\Theta,\Theta)^{-1} \mathcal G_X(\Theta)$, which is an {\em offline} cost and only needs to be done once, and (ii) computing the predictive mean $m_N^f(\btheta) = K(\btheta,\Theta) \boldsymbol{\alpha}$, which is the {\em online} cost and needs to be done for every new test point $\btheta$. We see that evaluating $m_N^f(\btheta)$ is orders of magnitude faster than evaluating $\mathcal G_X(\btheta)$.

\begin{table}[h!]
\centering
\begin{tabular}{|c | c c c|} 
 \hline
 Set-up & $\mathcal G_X(\btheta)$ & $m_N^{\mathcal G_X}(\btheta)$ & $\boldsymbol{\alpha}$ \\ [0.5ex] 
 \hline\hline
 $\dtheta = 2, \dz=6, D = (0,1), N=4$ & $3.2 \times 10^{-1}$s & $ 1.0 \times 10^{-4}$s & $2.5 \times 10^{-4}$s \\ 
 $\dtheta = 2, \dz=6, D = (0,1), N=20$ & $3.2 \times 10^{-1}$s & $ 1.3 \times 10^{-4}$s & $6.8 \times 10^{-4}$s \\
  $\dtheta = 10, \dz=18, D = (0,1), N=4$ & $3.2 \times 10^{-1}$s & $ 1.6 \times 10^{-4}$s & $4.5 \times 10^{-4}$s \\ 
  $\dtheta = 2, \dz=6, D = (0,1)^2, N=4$ & $7.6 \times 10^{0}s$ & $ 1.0 \times 10^{-4}s$ & $5.3 \times 10^{-4}s$ \\ 
[1ex] 
 \hline
\end{tabular}
\caption{Timings of PDE solution vs baseline Gaussian process emulator}
\label{table:1}
\end{table}

In Tables \ref{table:2}, \ref{table:3} and \ref{table:4}, we compare average computational timings of drawing one sample from the approximate posterior with different models. In Table \ref{table:2}, we see that the mean-based approximation with the PDE-informed prior is more expensive than the one with the baseline prior, by a factor of 2-4 depending on the setting. This is to be expected, since the PDE-informed posterior mean $\mathbf{m}^{{\mathcal G}_X}_{N,X_f,X_g}$ involves matrices of larger dimensions than the baseline posterior mean $\mathbf{m}^{{\mathcal G}_X}_{N}$. 

Table \ref{table:3} investigates the different marginal approximations. Compared to the mean-based approximations in Table \ref{table:1}, we see that the marginal approximations are more expensive by a factor of around 2 for the baseline model and around 3-10 for the PDE-constrained model. Within the different marginal approximations, the spatially correlated model is not much more expensive than the baseline model. Depending on the setting, the PDE-constrained model is 2-30 times more expensive.

In Table \ref{table:4}, we can see that emulating the log-likelihood significantly reduces the cost of sampling from the mean-based and marginal approximations, by around a factor of 20 compared to the baseline model for emulating the observations. 

%, since computing the predictive covariance involves computing matrix inversion which cannot be precomputed. Different from FEM method, the computational cost of Gaussian process emulator does not affected by the dimension of space, but depends on the number of observation and training point. Including more additional training data for PDE constraint could help improve accuracy, but also cost more computations. Emulating log-likelihood is independent of observation, which is much cheaper than emulating $\mathcal{G}_X$ (see Table \ref{table:4}).}

\begin{table}[h!]
\centering
\begin{tabular}{|c |c c|} 
 \hline
 Set-up & $\pi^{N,\mathcal{G}_X}_{\mathrm{mean}}$ & $\pi^{N,\mathcal{G}_X,\mathrm{PDE}}_{\mathrm{mean}}$  \\ [0.5ex] 
 \hline\hline
 $\dtheta = 2, \dz=6, D = (0,1), N=4$ & $ 8.5 \times 10^{-4}$s & $ 1.2 \times 10^{-3}$s ($\Bar{N} = 10, d_f = 20$) \\ 
  $\dtheta = 2, \dz=6, D = (0,1), N=20$ & $ 9.3 \times 10^{-4}$s & $ 1.4 \times 10^{-3}$s ($\Bar{N} = 10, d_f = 20$)\\ 
  $\dtheta = 10, \dz=18, D = (0,1), N=4$ & $ 2.6 \times 10^{-3}$s & $1.2 \times 10^{-2}$s  ($\Bar{N} = 50, d_f = 25$) \\ 
  $\dtheta = 2, \dz=6, D = (0,1)^2, N=4$  & $ 8.5 \times 10^{-4}$s & $1.6 \times 10^{-3}$s ($\Bar{N} = 30, d_f = 30$) \\ 
[1ex] 
 \hline
\end{tabular}
\caption{Timings of different mean-based approximations (baseline and PDE-constrained)}
\label{table:2}
\end{table}

\begin{table}[h!]
\centering
\begin{tabular}{|c |c cc|} 
 \hline
 Set-up & $\pi^{N,\mathcal{G}_X}_{\mathrm{marginal}}$ & $\pi^{N,\mathcal{G}_X,s}_{\mathrm{marginal}}$&$\pi^{N,\mathcal{G}_X,\mathrm{PDE}}_{\mathrm{marginal}}$  \\ [0.5ex] 
 \hline\hline
 $\dtheta = 2, \dz=6, D = (0,1), N=4$ & $ 1.7 \times 10^{-3}$s&$2.2\times10^{-3}$s& $ 3.2 \times 10^{-3}$s  \\ 
  $\dtheta = 2, \dz=6, D = (0,1), N=20$ & $ 2.0 \times 10^{-3}$s & $ 2.6 \times 10^{-3}$s&$ 5.6 \times 10^{-3}$s \\ 
  $\dtheta = 10, \dz=18, D = (0,1), N=4$ & $ 3.4 \times 10^{-3}$s & $ 3.6 \times 10^{-3}$s& $1.1 \times 10^{-1}$s  \\ 
  $\dtheta = 2, \dz=6, D = (0,1)^2, N=4$  & $ 1.7 \times 10^{-3}$s & $2.2 \times 10^{-3}$s& $4.8 \times 10^{-2}$s  \\ 
[1ex] 
 \hline
\end{tabular}
\caption{Timings of different marginal approximations (baseline, spatially correlated and PDE-constrained); $\overline N$ and $d_f$ are as in Table \ref{table:2}}
\label{table:3}
\end{table}

\begin{table}[h!]
\centering
\begin{tabular}{|c |c c|} 
 \hline
 Set-up & $\pi^{N,\Phi}_{\mathrm{mean}}$ & $\pi^{N,\mathcal{G}_X,\Phi}_{\mathrm{marginal}}$\\ [0.5ex] 
 \hline\hline
 $\dtheta = 2, \dz=6, D = (0,1), N=4$ & $ 3.4 \times 10^{-5}$s & $ 5.8 \times 10^{-5}$s  \\  
 $\dtheta = 2, \dz=6, D = (0,1)^2, N=4$ & $ 3.4 \times 10^{-5}$s & $ 5.8 \times 10^{-5}$s  \\ 
[1ex] 
 \hline
\end{tabular}
\caption{Timings of mean-based and marginal approximation when emulating the log-likelihood}
\label{table:4}
\end{table}
%\begin{figure}
%\centering
%\includegraphics[width=0.8\linewidth]{Exp5_basic_nl5.eps}
%\caption{Baseline model}
%\label{fig:exp5_baisc}
%\end{figure}

%\begin{figure}
%\centering
%\includegraphics[width=0.8\linewidth]{Exp5_spc_nl5.eps}
%\caption{Spatially correlated model}
%\label{fig:exp5_spc}
%\end{figure}

%\begin{figure}
%\centering
%\includegraphics[width=0.8\linewidth]{Exp5_pde_nl5.eps}
%\caption{PDE-constrained model}
%\label{fig:exp5_pde}
%\end{figure}

\section{Conclusions, discussion and actionable advice} \label{sec:conclusions}
Bayesian inverse problems in PDEs pose significant computational challenges. Application of state-of-the-art sampling methods, including MCMC methods, is typically computationally infeasible due to the large computational cost of simulating the underlying mathematical model for a given value of the unknown parameters. A solution to alleviate this problem is to use a surrogate model to approximate the PDE solution in Bayesian posterior distribution. In this work we considered the use of Gaussian process surrogate models, which are frequently used in engineering and geo-statistics applications and  offer the benefit of built-in uncertainty quantification in the variance of the emulator.

The focus of this work was on practical aspects of using Gaussian process emulators in this context, providing efficient MCMC methods and studying the effect various modelling choices in the derivation of the approximate posterior on its accuracy and computational efficiency. We now summarise the main conclusions of our investigation.
\begin{enumerate}
    \item {\bf Emulating log-likelihood vs emulating observations.} We can construct an emulator for the negative log-likelihood $\Phi$ or the parameter-to-observation map $\mathcal G_X$ in the likelihood \eqref{equ:likelihood}. 
    \begin{itemize}
        \item {\em Computational efficiency.} The log-likelihood $\Phi$ is always scalar-valued, independent of the number of observations $\dz$, which makes the computation of the approximate likelihood for a given values of the parameters $\btheta$ much cheaper than the approximate likelihood with emulated $\mathcal G_X$. The relative cost will depend on $\dz$. 
        \item {\em Accuracy.}  When only limited training data is provided, emulating $\mathcal G_X$ appears more reliable than emulating the log-likelihood, even with the baseline model. The major advantage of emulating $\mathcal G_X$ is that it allows us to include correlation between different observations, i.e. between the different entries of $\mathcal G_X$. This substantially increases the accuracy of the approximate posteriors, in particular if we use the PDE structure to define the correlations (see point 3 below).
    \end{itemize}
    \item {\bf Mean-based vs marginal posterior approximations.} We can use only the mean of the Gaussian process emulator to define the approximate posterior as in \eqref{eq:meanpost} and \eqref{eq:potenmean}, or we can make use of its full distribution to define the marginal approximate posteriors as in \eqref{eq:postinfl} and \eqref{eq:poteninfl}.
    \begin{itemize}
        \item {\em Computational efficiency.} The mean-based approximations are faster to sample from using MALA. This is due to simpler structure of the gradient required for the proposals.
        \item {\em Accuracy.} The marginal approximations correspond to a form of variance inflation in the approximate posterior (see Section \ref{ssec:gp_posterior}), representing our incomplete knowledge about the PDE solution. They thus combat over-confident predictions. They typically allocate larger mass to regions around the true parameter value than the mean-based approximations.
    \end{itemize}
    \item {\bf Spatial correlation and PDE-constrained priors.}  
    \begin{itemize}
        \item {\em Computational efficiency.} Introducing the spatially correlated model only affects the marginal approximation, and sampling from the marginal approximate posterior with the spatially correlated model is slightly slower than with baseline model. The PDE-constrained model significantly increases the computational times for both the mean-based and marginal approximations, by how much highly depends on the size of additional training data.
        \item {\em Accuracy.} Introducing spatial correlation improves the accuracy of the marginal approximation compared to the baseline model. The most accurate results are obtained with the PDE-constrained priors, which are problem specific and more informative. A benefit of the spatially correlated 
        model is that it does not rely on the underlying PDE being linear, and easily extends to non-linear settings. 
    \end{itemize}
\end{enumerate}

In summary, the marginal posterior approximations and spatially correlated/ PDE-constrained prior distributions provide mechanisms of increasing the accuracy of the inference and avoiding over-confident biased predictions, without the need to increase $N$. 
This is particularly useful in practical applications, where the number of model runs $N$ available to train the surrogate model may be very small due to constraints in time and/or cost. This does result in higher computational cost compared to mean-based approximations based on black-box priors, but may still be the preferable option if obtaining another training point is impossible or computationally very costly.

Variance inflation, as exhibited in the marginal posterior approximations considered in this work, is a known tool to improve Bayesian inference in complex models, see e.g. \cite{conrad2017statistical,calvetti2018iterative,fox2020randomized}. Conceptually, it is also related to including model discrepancy \cite{KH00,brynjarsdottir2014learning}. However, the approach to variance inflation presented in this work has several advantages. Firstly, the variance inflation being equal to the predictive variance of the emulator means that the amount of variance inflation included depends on the location $\btheta$ in the parameter space. We introduce more uncertainty in parts of the parameter space where we have less training points and the emulator is possibly less accurate. Secondly, the amount of variance inflation can be tuned in a principled way using standard techniques for hyper-parameter estimation in Gaussian process emulators. There is no need to choose a model for the variance inflation separately to choosing the emulator, since this is determined automatically as part of the emulator.

We did not discuss optimal experimental design in this work, i.e. how we should optimally choose the locations $\Theta$ of the training data. In practice this will also have a large influence on the accuracy of the approximate posteriors, especially for small $N$. In the context of inverse problems as considered here, one usually wants to place the training points in regions of parameter space where the (approximate) posterior places significant mass (see e.g. \cite{helin2023introduction} and the references therein). For a fair comparison between all scenarios, and to eliminate the interplay between optimal experimental design and other modelling choices, we have chosen the training points as a space-filling design in our experiments. We expect the same conclusions to hold with optimally placed points.

\section*{Acknowledgements}
The authors would like to thank the Isaac Newton Institute for Mathematical Sciences, Cambridge, for support and hospitality during the programme {\em Mathematical and statistical foundation of future data-driven engineering} where work on this paper was undertaken. This work was supported by EPSRC grants no EP/R014604/1 and  EP/V006177/1.

\bibliographystyle{siam}  
\bibliography{main}

\newpage
\appendix
\section{Derivation of the analytical formula of the marginal approximation}\label{app:infl}

In order to simplify the notation here, we let $\mathbf{m}_{\btheta} = \mathbf{m}_N^{\mathcal{G}_X}({\btheta})$, $K_{\btheta} = K_N(\btheta,\btheta)$ and $\Gamma_{\eta} = \sigma^{-2}_{\eta}I_{\dz}$. First, we assume $\mathcal{G}^N_{X}(\btheta) = \mathbf{m}_{\btheta} + \boldsymbol{\xi}$, where $\boldsymbol{\xi} \sim \mathcal{N}(0, K_{\btheta})$, so by the definition of expectation we have 
\begin{align*}
&\mathbb{E}\left(\exp\left(-\frac{1}{2}\|\mathcal{G}_{X}^N(\btheta)-\mathbf{y}\|_{\Gamma_eta}\right)\pi_0(\btheta)\right) \nonumber \\
&= \frac{1}{\sqrt{(2\pi)^{d_\mathbf{y}}\det{(K_{\btheta})}}}\int_{\mathbb{R}^{d_\mathbf{y}}}
\exp\left(-\frac{\|\mathbf{m}_{\btheta}+\boldsymbol{\xi}-\mathbf{y}\|^2_{\Gamma_{\eta}}}{2}\right)
\exp\left(-\frac{\|\boldsymbol{\xi}\|^2_{K_{\btheta}}}{2}\right)d\boldsymbol{\xi},
\end{align*}  
then rewrite and simplify the formula
\[= \frac{1}{\sqrt{(2\pi)^{d_\mathbf{y}}\det{(K_{\btheta})}}}\int_{\mathbb{R}^{d_\mathbf{y}}}
\exp\left(-\frac{1}{2}\left(\|\boldsymbol{\xi}-(\mathbf{y}-\mathbf{m}_{\btheta})\|^2_{\Gamma_{\eta}}+\|\boldsymbol{\xi}\|^2_{K_{\btheta}}\right)\right)d\bxi\]
we let $\Bar{\mathbf{y}} = \mathbf{y}-\mathbf{m}_{\btheta}$, then
\begin{align*}
&= \frac{1}{\sqrt{(2\pi)^{d_\mathbf{y}}\det{(K_{\btheta})}}}\int_{\mathbb{R}^{d_\mathbf{y}}}
\exp\left(-\frac{1}{2}\left(\|\boldsymbol{\xi}-\Bar{\mathbf{y}}\|^2_{\Gamma_{\eta}}+\|\boldsymbol{\xi}\|^2_{K_{\btheta}}\right)\right)d\bxi    \\
&= \frac{1}{\sqrt{(2\pi)^{d_\mathbf{y}}\det{(K_{\btheta})}}}\int_{\mathbb{R}^{d_\mathbf{y}}}
\exp\left(-\frac{1}{2}\left(
(\boldsymbol{\xi}-\Bar{\mathbf{y}})^T\Gamma_{\eta}^{-1}(\boldsymbol{\xi}-\Bar{\mathbf{y}}) + \bxi^TK_{\btheta}^{-1}\bxi
\right)\right)d\bxi\\
&= \frac{1}{\sqrt{(2\pi)^{d_\mathbf{y}}\det{(K_{\btheta})}}}\int_{\mathbb{R}^{d_\mathbf{y}}}
\exp\left(-\frac{1}{2}\left(
\bxi^T(\Gamma_{\eta}^{-1}+K_{\btheta}^{-1})\bxi - 2\Bar{\mathbf{y}}^T\Gamma_{\eta}^{-1}\bxi + \Bar{\mathbf{y}}^T\Gamma_{\eta}^{-1}\Bar{\mathbf{y}}
\right)\right)d\bxi
\end{align*}
Since $\Gamma_{\eta}$ and $K_{\btheta}$ are symmetric matrices, we have 
\begin{align*}
    \Bar{\mathbf{y}}^T\Gamma_{\eta}^{-1}\bxi 
    &= \Bar{\mathbf{y}}^T ((K_{\btheta}+\Gamma_{\eta})^{-1}K_{\btheta})(K_{\btheta}^{-1}(K_{\btheta}+\Gamma_{\eta}))\Gamma_{\eta}^{-1}\bxi\\
    &= (K_{\btheta}(K_{\btheta}+\Gamma_{\eta})^{-1}\Bar{\mathbf{y}})^TK_{\btheta}^{-1}(K_{\btheta}+\Gamma_{\eta})\Gamma_{\eta}^{-1}\bxi\\
    &= \tilde{\mathbf{y}}^TC^{-1}\bxi,
\end{align*}
where $C = K_{\btheta}(K_{\btheta}+\Gamma_{\eta})^{-1}\Gamma_{\eta}$ and $\tilde{\mathbf{y}} = C\Gamma_{\eta}^{-1}\Bar{\mathbf{y}}$. Substituting it into the formula above, we have 
\begin{align*}
&= \frac{1}{\sqrt{(2\pi)^{d_\mathbf{y}}\det{(K_{\btheta})}}}\int_{\mathbb{R}^{d_\mathbf{y}}}
\exp\left(-\frac{1}{2}\left(
\bxi^TC^{-1}\bxi - 2\tilde{\mathbf{y}}^TC^{-1}\bxi + \Bar{\mathbf{y}}^T\Gamma_{\eta}^{-1}\Bar{\mathbf{y}}
\right)\right)d\bxi
\end{align*}
Then we can complete the square
\begin{align*}
&= \frac{1}{\sqrt{(2\pi)^{d_\mathbf{y}}\det{(K_{\btheta})}}}\int_{\mathbb{R}^{d_\mathbf{y}}}
\exp\left(-\frac{1}{2}\left(\|\bxi - \tilde{\mathbf{y}}\|^2_{C} - \tilde{\mathbf{y}}^TC^{-1}\tilde{\mathbf{y}} + \Bar{\mathbf{y}}^T\Gamma_{\eta}^{-1}\Bar{\mathbf{y}}
\right)\right)d\bxi \\
&= \frac{1}{\sqrt{(2\pi)^{d_\mathbf{y}}\det{(K_{\btheta})}}}\int_{\mathbb{R}^{d_\mathbf{y}}}
\exp\left(-\frac{1}{2}\left(\|\bxi - \tilde{\mathbf{y}}\|^2_{C} - (C\Gamma_{\eta}^{-1}\Bar{\mathbf{y}})^TC^{-1}(C\Gamma_{\eta}^{-1}\Bar{\mathbf{y}}) + \Bar{\mathbf{y}}^T\Gamma_{\eta}^{-1}\Bar{\mathbf{y}}
\right)\right)d\bxi \\
&= \frac{1}{\sqrt{(2\pi)^{d_\mathbf{y}}\det{(K_{\btheta})}}}\int_{\mathbb{R}^{d_\mathbf{y}}}
\exp\left(-\frac{1}{2}\left(\|\bxi - \tilde{\mathbf{y}}\|^2_{C} - \Bar{\mathbf{y}}^T\Gamma_{\eta}^{-1}K_{\btheta}(K_{\btheta}+\Gamma_{\eta})^{-1}\Bar{\mathbf{y}}) + \Bar{\mathbf{y}}^T\Gamma_{\eta}^{-1}\Bar{\mathbf{y}}
\right)\right)d\bxi \\
&= \frac{1}{\sqrt{(2\pi)^{d_\mathbf{y}}\det{(K_{\btheta})}}}\int_{\mathbb{R}^{d_\mathbf{y}}}
\exp\left(-\frac{1}{2}\left(\|\bxi - \tilde{\mathbf{y}}\|^2_{C} - \Bar{\mathbf{y}}^T(\Gamma_{\eta}^{-1}K_{\btheta}(K_{\btheta}+\Gamma_{\eta})^{-1} - \Gamma_{\eta}^{-1})\Bar{\mathbf{y}})
\right)\right)d\bxi \\
&= \frac{1}{\sqrt{(2\pi)^{d_\mathbf{y}}\det{(K_{\btheta})}}}\int_{\mathbb{R}^{d_\mathbf{y}}}
\exp\left(-\frac{1}{2}\left(\|\bxi - \tilde{\mathbf{y}}\|^2_{C} + \Bar{\mathbf{y}}^T(K_{\btheta}+\Gamma_{\eta})^{-1}\Bar{\mathbf{y}}
\right)\right)d\bxi \\
&= \frac{1}{\sqrt{(2\pi)^{d_\mathbf{y}}\det{(K_{\btheta})}}}\exp{\left(-\frac{1}{2}\|\Bar{\mathbf{y}}\|^2_{(K_{\btheta}+\Gamma_{\eta})}\right)}\int_{\mathbb{R}^{d_\mathbf{y}}}
\exp\left(-\frac{1}{2}\left(\|\bxi - \tilde{\mathbf{y}}\|^2_{C}
\right)\right)d\bxi \\
&= \frac{\sqrt{\det{(C)}}}{\sqrt{\det{(K_{\btheta})}}}\exp{\left(-\frac{1}{2}\|\Bar{\mathbf{y}}\|^2_{(K_{\btheta}+\Gamma_{\eta})}\right)}\int_{\mathbb{R}^{d_\mathbf{y}}}\frac{1}{\sqrt{(2\pi)^{d_\mathbf{y}}\det{(C)}}}
\exp\left(-\frac{1}{2}\left(\|\bxi - \tilde{\mathbf{y}}\|^2_{C}
\right)\right)d\bxi \\
&\propto  \frac{1}{\sqrt{(2\pi)^{d_\mathbf{y}}\det(K_{\btheta}+\Gamma_{\eta})}}\exp{\left(-\frac{1}{2}\|\mathbf{y}-\mathbf{m}_{\btheta}\|^2_{(K_{\btheta}+\Gamma_{\eta})}\right)} \\
\end{align*}
Hence, we obtain the explicit form of the marginal approximation.

\section{Derivation of the gradient of the approximate log-posteriors}
\begin{proof}
\begin{align*}
\nabla \log \pi^{N,\mathcal{G}_{X}}_{\mathrm{mean}}(\btheta|\mathbf{y}) &= \nabla \log \left( \exp\left(-\frac{1}{2\sigma^{2}_{\eta}}\|\mathbf{m}^{\mathcal{G}_X}_{N}(\btheta)-\mathbf{y}\|^2\right)\right)\\
 &= -\frac{1}{2\sigma^{2}_{\eta}}\nabla \left(\|\mathbf{m}^{\mathcal{G}_X}_{N}(\btheta)-\mathbf{y}\|^2\right)\\
 &= -\frac{1}{\sigma^{2}_{\eta}} \left(\nabla\mathbf{m}^{\mathcal{G}_X}_{N}(\btheta)\right)^{T}\left(\mathbf{m}^{\mathcal{G}_X}_{N}(\btheta)-\mathbf{y}\right)\\
 &= -\frac{1}{\sigma^{2}_{\eta}} \left(\nabla K(\btheta,\Theta)K(\Theta,\Theta)^{-1}\mathbf{y}\right)^{T}\left(\mathbf{m}^{\mathcal{G}_X}_{N}(\btheta)-\mathbf{y}\right)\\
\end{align*}

\begin{align*}
&\nabla \log \pi^{N,\mathcal{G}_{X}}_{\mathrm{marginal}}(\btheta|\mathbf{y}) \\
&= \nabla \log \left(\frac{\exp\left(-\frac{1}{2}\|\mathbf{m}^{\mathcal{G}_X}_{N}(\btheta)-\mathbf{y}\|^2_{(K_{N}(\btheta,\btheta)+\Gamma_{\eta})}\right)}{\sqrt{(2\pi)^{d_\mathbf{y}}\det\left(K_{N}(\btheta,\btheta)+\Gamma_{\eta}\right)}}\right)\\
&= -\frac{1}{2}\nabla \left(\|\mathbf{m}^{\mathcal{G}_X}_{N}(\btheta)-\mathbf{y}\|^2_{(K_{N}(\btheta,\btheta)+\Gamma_{\eta})}\right) - \frac{1}{2} \nabla \log \left( (2\pi)^n\det\left(K_{N}(\btheta,\btheta)+\Gamma_{\eta}\right)\right)\\
&= -(\nabla K(\btheta,\Theta)K(\Theta,\Theta)^{-1}\mathbf{y})^T(K_N(\btheta,\btheta)+\Gamma_{\eta})^{-1}(\mathbf{m}_N^{\mathcal{G}_{X}}(\btheta)-\mathbf{y}) \\
&\qquad - \frac{1}{2}(\mathbf{m}_N^{\mathcal{G}_{X}}(\btheta)-\mathbf{y})^T\nabla\left((K_N(\btheta,\btheta)+\Gamma_{\eta})^{-1}\right)(\mathbf{m}_N^{\mathcal{G}_{X}}(\btheta)-\mathbf{y})\\
&\qquad - \frac{1}{2}\left( {\Tr \left((K_N(\btheta,\btheta)+\Gamma_{\eta} )^{-1}\right)\nabla(K_N(\btheta,\btheta))}\right),
\end{align*}
where \[\nabla\left((K_N(\btheta,\btheta)+\Gamma_{\eta})^{-1}\right) =  -(K_N(\btheta,\btheta)+\Gamma_{\eta})^{-1}\nabla\left(K_N(\btheta,\btheta)\right) (K_N(\btheta,\btheta)+\Gamma_{\eta})^{-1},
\]
and \[\nabla K_N(\btheta,\btheta) = 2\nabla K(\btheta,\Theta)K(\Theta,\Theta)^{-1}K(\Theta,\btheta)\]
\end{proof}

\end{document}